\tikzset{every mark/.append style={scale=0.60}}
\newtheorem{theorem}{Theorem}
\newtheorem{lemma}[theorem]{Lemma}
\newtheorem{corollary}[theorem]{Corollary}
\theoremstyle{definition}
\theoremstyle{remark}
\newtheorem{remark}{Remark}
\newcommand{\EA}{(1+1)~EA\xspace}
\newcommand{\muea}{($\mu$+$1$)~EA\xspace}
\newcommand{\mlEA}{($\mu$+$\lambda$)~EA\xspace}
\newcommand{\mlGA}{($\mu$+$\lambda$)~GA\xspace}
\newcommand{\TwoGA}{(2+1)~GA\xspace}
\newcommand{\ONEMAX}{\textsc{OneMax}\xspace}
\newcommand{\E}[1]{\mathrm{E}\mathord{\left(#1\right)}}
\newcommand{\PCal}{\ensuremath{\mathcal{P}}\xspace}
\newcommand{\R}{\ensuremath{{\mathbb R}}}
\newcommand{\N}{\ensuremath{{\mathbb N}}}
\DeclareMathOperator{\Prob}{Prob}
\newcommand{\ceil}[1]{\left\lceil#1\right\rceil}
\newcommand{\ones}[1]{\left|#1\right|_1}
\newcommand{\Hyp}[3]{\mathrm{Hyp}(#1, #2, #3)}
\newcommand{\maxk}{\max_k\left(\frac{(pn)^k}{k!k!}\right)}
\newcommand{\ignore}[1]{}
\renewcommand{\epsilon}{\varepsilon}
\pgfplotsset{
cycle list={%
    {blue,mark=*},
    {red, mark=square*},
    {brown!50!black, mark=star},
    {green!30!black, mark=diamond},
    {blue,mark=pentagon}
    {red, mark=x},
    {blue,mark=x},
    },
}
\begin{document}


\title{How Crossover Speeds Up Building-Block Assembly in~Genetic~Algorithms}

\author{\name{\bf Dirk Sudholt}\\
        \addr{Department of Computer Science,}
        \addr{University of Sheffield, United Kingdom}
}

\maketitle

\begin{abstract}
We re-investigate a fundamental question: how effective is crossover in Genetic Algorithms in combining building blocks of good solutions? Although this has been discussed controversially for decades, we are still lacking a rigorous and intuitive answer. We provide such answers for royal road functions and \ONEMAX, where every bit is a building block. For the latter we show that using crossover makes \emph{every} ($\mu$+$\lambda$)~Genetic Algorithm at least twice as fast as the fastest evolutionary algorithm using only standard bit mutation, up to small-order terms and for moderate~$\mu$ and~$\lambda$.
Crossover is beneficial because it effectively turns fitness-neutral mutations into improvements by combining the right building blocks at a later stage. Compared to mutation-based evolutionary algorithms, this makes multi-bit mutations more useful. Introducing crossover changes the optimal mutation rate on \ONEMAX from $1/n$ to ${(1+\sqrt{5})/2 \cdot 1/n \approx 1.618/n}$. This holds both for uniform crossover and $k$\nobreakdash-point crossover. Experiments and statistical tests confirm that our findings apply to a broad class of building-block functions.
\end{abstract}

\begin{keywords}
Genetic algorithms, crossover, recombination, mutation rate, runtime analysis, theory.
\end{keywords}

\section{Introduction}

Ever since the early days of genetic algorithms (GAs), researchers have wondered when and why crossover is an effective search operator.
It has been folklore that crossover is useful if it can combine building blocks, i.\,e., schema of high fitness, to give better solutions~\cite{Mitchell1992}. But, as Watson and Jansen~\cite{Watson2007} put it, \emph{there has been a considerable difficulty in demonstrating this rigorously and intuitively}.

Many attempts at understanding crossover have been made in the past.
Mitchell, Forrest, and Holland~\cite{Mitchell1992} presented so-called \emph{royal road} functions as an example where, supposedly, genetic algorithms outperform other search algorithms due to the use of crossover. Royal roads divide a bit string into disjoint blocks. Each block makes a positive contribution to the fitness in case all bits therein are set to~1. Blocks thus represent schemata, and all-ones configurations are building blocks of optimal solutions. However, the same authors later concluded that simple randomized hill climbers performed better than GAs~\cite{Forrest1993,Mitchell1994}.


The role of crossover has been studied from multiple angles, including algebra~\cite{Rowe2002}, Markov chain models~\cite{Vose1999}, infinite population models and dynamical systems (see~\cite[Chapter~6]{DeJong2006} for an overview) and statistical mechanics (see, e.\,g.\ \cite[Chapter~11]{Eiben2007}).

Also in biology the role of crossover is far from settled.
In population genetics, exploring the advantages of recombination, or sexual reproduction, is a famous open question~\cite{Barton98} and has been called ``the queen of problems in evolutionary biology'' by Graham Bell~\cite{Bell82} and others. Evolutionary processes were found to be harder to analyze than those using only asexual reproduction as they represent quadratic dynamical systems~\cite{AroraRV94,RabaniRS98}.

Recent work in population genetics has focussed on studying the ``speed of adaptation'', which describes the efficiency of evolution, in a similar vein to research in evolutionary computation~\cite{weissman_limits_2012,Weissman2010}.
Furthermore, a new theory of mixability has been proposed recently from the perspective of theoretical computer science~\cite{Livnat2008,Livnat2010}, arguing that recombination favours individuals that are good ``mixers'', that is, individuals that create good offspring when being recombined with others.

Several researchers recently and independently reported empirical observations that using crossover improves the performance of evolutionary algorithms (EAs) on the simple function \ONEMAX$(x) = \sum_{i=1}^n x_i$~\cite{LassigPersonal,Rowe2013Chapter}, but were unable to explain why. The fact that even settings as simple as \ONEMAX are not well understood demonstrates the need for a solid theory and serves as motivation for this work.

Runtime analysis has become a major area of research that can give rigorous evidence and proven theorems~\cite{BookNeuWit,Auger2011,Jansen2013}. However, studies so far have eluded the most fundamental setting of building-block functions. Crossover was proven to be superior to mutation only on constructed artificial examples like $\mathrm{Jump}_k$~\cite{Jansen2002,Koetzing2011a} and ``Real Royal Road'' functions~\cite{Jansen2005c,Storch2004}, the H-IFF problem~\cite{Dietzfelbinger2003}, coloring problems inspired by the Ising model from physics~\cite{Fischer2005,Sudholt2005}\footnote{For bipartite graphs, the problem is equivalent to the classical Graph Coloring problem with 2 colors.}, computing unique input-output sequences for finite state machines~\cite{Lehre2011b}, selected problems from multi-objective optimization~\cite{Qian2013}, and the all-pairs shortest path problem~\cite{Doerr2012,Sudholt2011a,Neumann2010b}.
H-IFF~\cite{Dietzfelbinger2003} and the Ising model on trees~\cite{Sudholt2005} consist of hierarchical building blocks. But none of the above papers addresses single-level building blocks in a setting as simple as royal roads.

Watson and Jansen~\cite{Watson2007} presented a constructed building-block function and proved exponential performance gaps between EAs using only mutation and a GA. However, the definition of the internal structure of building blocks is complicated and artificial, and they used a tailored multi-deme GA to get the necessary diversity. With regard to the question on how GAs combine building blocks, their approach does not give the intuitive explanation one is hoping for.

This paper presents such an intuitive explanation, supported by rigorous analyses. We consider royal roads and other functions composed of building blocks, such as monotone polynomials. $\ONEMAX(x) = \sum_{i=1}^n x_i$ is a special case where every bit is a building block. We give rigorous proofs for \ONEMAX and show how the main proof arguments transfer to broader classes of building-block functions. Experiments support the latter.

Our main results are as follows.
\begin{enumerate}
\item We show in Section~\ref{sec:uniform-crossover} that on \ONEMAX \emph{every} \mlGA with uniform crossover and standard bit mutation is at least twice as fast as \emph{every} evolutionary algorithm (EA) that only uses standard bit mutations (up to small-order terms). More precisely, the dominating term in the expected number of function evaluations decreases from $e \cdot n \ln n$ to $e/2 \cdot n \ln n$.
    This holds provided that the parent population and offspring population sizes $\mu$ and $\lambda$ are moderate, so that the inertia of a large population does not slow down exploitation.
    The reason for this speedup is that the GA can store a neutral mutation (a mutation not altering the parent's fitness) in the population, along with the respective parent. It can then use crossover to combine the good building blocks between these two individuals, improving the current best fitness. In other words, crossover can capitalize on mutations that have both beneficial and disruptive effects on building blocks.
\item The use of uniform crossover leads to a shift in the optimal mutation rate on \ONEMAX. We demonstrate this in Section~\ref{sec:optimal-mutation-rate} for a simple ``greedy'' \TwoGA that always selects parents among the current best individuals. While for mutation-based EAs $1/n$ is the optimal mutation rate~\cite{Witt2013}, the greedy \TwoGA has an optimal mutation rate of $(1+\sqrt{5})/2 \cdot 1/n \approx 1.618/n$ (ignoring small-order terms). This is because introducing crossover makes neutral mutations more useful and larger mutation rates increase the chance of a neutral mutation. Optimality is proved by means of a matching lower bound on the expected optimization time of the greedy \TwoGA that applies to \emph{all} mask-based crossover operators (where each bit value is taken from either parent).
    Using the optimal mutation rate, the expected number of function evaluations is~$1.19 n \ln n \pm O(n \log \log n)$.
\item These results are not limited to uniform crossover or the absence of linkage. Section~\ref{sec:k-point-crossover} shows that the same results hold for GAs using $k$\nobreakdash-point crossover, for arbitrary~$k$, under slightly stronger conditions on $\mu$ and $\lambda$, if the crossover probability $p_c$ is set to an appropriately small value.
\item The reasoning for \ONEMAX carries over to other functions with a clear building block structure. Experiments in Section~\ref{sec:extensions} reveal similar performance differences as on \ONEMAX for royal road functions and random polynomials with unweighted, positive coefficients. This is largely confirmed by statistical tests. There is evidence that findings also transfer to weighted building-block functions like linear functions, provided that the population can store solutions with different fitness values and different building blocks until crossover is able to combine them. This is not the case for the greedy \TwoGA, but a simple (5+1)~GA is significantly faster on random linear functions than the optimal mutation-based EA for this class of functions, the \EA~\cite{Witt2013}.
\end{enumerate}
The first result, the analysis for uniform crossover, is remarkably simple and intuitive. It gives direct insight into the working principles of GAs. Its simplicity also makes it very well suited for teaching purposes.


This work extends a preliminary conference paper~\cite{Sudholt2012b} with parts of the results, where results were restricted to one particular GA, the greedy \TwoGA.
This extended version presents a general analytical framework that applies to all \mlGA{}s, subject to mild conditions, and includes the greedy \TwoGA as a special case. To this end, we provide tools for analyzing parent and offspring populations in \mlGA{}s, which we believe are of independent interest.

Moreover, results for $k$-point crossover have been improved. The leading constant in the upper bound for $k$-point crossover in~\cite{Sudholt2012b} was by an additive term of $\frac{2c}{3 + 3c}$ larger than that for uniform crossover, for mutation rates of $c/n$. This left open the question whether $k$-point crossover is as effective as uniform crossover for assembling building blocks in \ONEMAX. Here we provide a new and refined analysis, which gives an affirmative answer, under mild conditions on the crossover probability.

\subsection{Related Work}

K{\"o}tzing, Sudholt, and Theile~\cite{Koetzing2011a} considered the search behaviour of an idealized GA on \ONEMAX, to highlight the potential benefits of crossover under ideal circumstances. If a GA was able to recombine two individuals with equal fitness that result from independent evolutionary lineages, the fitness gain can be of order $\Omega(\sqrt{n})$. The idealized GA would therefore be able to optimize \ONEMAX in expected time~$O(\sqrt{n})$~\cite{Koetzing2011a}. However, this idealization cannot reasonably be achieved in realistic EAs with common search operators, hence the result should be regarded an academic study on the \emph{potential} benefit of crossover.

A related strand of research deals with the analysis of the Simple~GA on \ONEMAX. The Simple~GA is one of the best known and best researched GAs in the field. It uses a generational model where parents are selected using fitness-proportional selection and the generated offspring form the next population. Neumann, Oliveto and Witt~\cite{Neumann2009c} showed that the Simple~GA without crossover with high probability cannot optimize \ONEMAX in less than exponential time. The reason is that the population typically contains individuals of similar fitness, and then fitness-proportional selection is similar to uniform selection. Oliveto and Witt~\cite{Oliveto2013a} extended this result to uniform crossover: the Simple~GA with uniform crossover and population size $\mu \le n^{1/8-\varepsilon}$, $\varepsilon > 0$, still needs exponential time on \ONEMAX. It even needs exponential time to reach a solution of fitness larger than $(1+c)\cdot n/2$ for an arbitrary constant $c > 0$.
In~\cite{Oliveto2013} the same authors relaxed their condition on the population size to $\mu \le n^{1/4-\varepsilon}$. Their work does not exclude that crossover is advantageous, particularly since under the right circumstances crossover may lead to a large increase in fitness (cf.~\cite{Koetzing2011a}). But if there is an advantage, it is not noticeable as the Simple~GA with crossover still fails badly on \ONEMAX.

One year after~\cite{Sudholt2012b} was published, Doerr, Doerr, and Ebel~\cite{Doerr2013a} presented a groundbreaking result: they designed an EA that was proven to optimise \ONEMAX (and any simple transformation thereof) in time $O(n \sqrt{\log n})$. This is a spectacular result as all black-box search algorithms using only unbiased unary operators---operators modifying one individual only, and not exhibiting any inherent search bias---need time $\Omega(n \log n)$ as shown by Lehre and Witt~\cite{Lehre2012}. So their EA shows that crossover can lower the expected running time by more than a constant factor. They call their algorithm a \mbox{(1+($\lambda$, $\lambda$))~EA:} starting with one parent, it first creates $\lambda$ offspring by mutation, with a random and potentially high mutation rate. Then it selects the best mutant, and crosses it $\lambda$ times with the original parent, using parameterized uniform crossover (the probability of taking a bit from the first parent is not always $1/2$, but a parameter of the algorithm). This leads to a number of $O(n \sqrt{\log n})$ expected function evaluations, which can be further decreased to~$O(n)$ with a scheme adapting $\lambda$ according to the current fitness.

The (1+($\lambda$, $\lambda$))~EA from~\cite{Doerr2013a} is very cleverly designed to work efficiently on \ONEMAX and similar functions. It uses a non-standard EA design because of its two phases of environmental selection. Other differences are that mutation is performed before crossover, and mutation is not fully independent for all offspring: the number of flipping bits is a random variable determined as for standard bit mutations, but the same number of flipping bits is then used in all offspring. The focus of this work is different as our goal is to understand how standard EAs operate, and how crossover can be used to speed up building-block assembly in commonly used \mlEA{}s.

\section{Preliminaries}

We measure the performance of the algorithm with respect to the number of function evaluations performed until an optimum is found, and refer to this as \emph{optimization time}. For steady-state algorithms this equals the number of generations (apart from the initialization), and for EAs with offspring populations such as \mlEA{}s or \mlGA{}s the optimization time is by a factor of $\lambda$ larger than the number of generations. Note that the number of generations needed to optimize a fitness function can often be easily decreased by using offspring populations or parallel evolutionary algorithms~\cite{Lassig2013a}. But this significantly increases the computational effort within one generation, so the number of function evaluations is a more fair and widely used measure.

Looking at function evaluations is often motivated by the fact that this operation dominates the execution time of the algorithm. Then the number of function evaluations is a reliable measure for wall clock time. However, the wall clock time might increase when introducing crossover as an additional search operator. Also when increasing the mutation rate, more pseudo-random numbers might be required. Jansen and Zarges~\cite{Jansen2011} point out a case where this effect leads to a discrepancy between the number of function evaluations and wall clock time. This concern must be taken seriously when aiming at reducing wall clock time. However, each implementation must be checked individually in this respect~\cite{Jansen2011}. Therefore, we keep this concern in mind, but still use the number of function evaluations in the following.

\section{Uniform Crossover Makes ($\mu$+$\lambda$)~EAs Twice as Fast}
\label{sec:uniform-crossover}

We show that, under mild conditions, every \mlGA{} is at least twice as fast as its counterpart without crossover. For the latter, that is, evolutionary algorithms using only standard bit mutation, the author recently proved the following lower bound on the running time of a very broad class of \emph{mutation-based EAs}~\cite{Sudholt2012c}. It covers all possible selection mechanisms, parent or offspring populations, and even parallel evolutionary algorithms. We slightly rephrase this result.
\begin{theorem}[Sudholt~\cite{Sudholt2012c}]
\label{the:lower-onemax}
Let $n \ge 2$.
Every EA that uses only standard bit mutation with mutation rate~$p$ to create new solutions has expected optimization time at least
\[
\frac{\min\{\ln n, \ln(1/(p^2n))\} - \ln \ln n - 3}{p(1-p)^n}
\]
on \ONEMAX and every other function with a unique optimum, if ${2^{-n/3} \le p \le \frac{1}{\sqrt{n} \log n}}$. If $p=c/n$, $c > 0$ constant, this is at least
\[
c \cdot e^{c} \cdot n \ln n \cdot (1-o(1)).
\]
\end{theorem}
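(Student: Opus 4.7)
The plan is to combine a fitness-level style lower bound with careful estimates of single-step improvement probabilities. First I would show that at initialization the EA starts, with at least constant probability, at Hamming distance at least $d_0 = n/2 - O(\sqrt{n})$ from the unique optimum; a Chernoff bound per individual together with a union bound over the (polynomially many) initial individuals suffices. Hence any successful trajectory must close a Hamming distance of order~$n$, forcing a logarithmic number of ``improvement levels'' to be traversed.

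Next I would bound the one-step probability that a standard bit mutation yields an improvement over the current best fitness. Since the optimum is unique and the best-so-far fitness is non-decreasing over time, it suffices to track the Hamming distance $d$ of the best-so-far individual and treat the process as a one-dimensional chain on $\{0,1,\dots,n\}$. For the \ONEMAX ordering, the single-flip contribution to the improvement probability equals exactly $d\cdot p(1-p)^{n-1}$: pick one of the $d$ zero bits, leave all others untouched. Multi-bit contributions are bounded separately.

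Plugging the per-level bound of order $d \cdot p(1-p)^{n-1}$ into a standard fitness-level lower bound (expected waiting time per level equals the reciprocal of the exit probability) gives an expected optimization time of at least
\[
\sum_{d=1}^{d_0} \frac{1}{d \cdot p(1-p)^{n-1}} \;\ge\; \frac{\ln d_0 - O(1)}{p(1-p)^n},
\]
which yields the $\ln n$ term in the numerator; the additive $-\ln\ln n - 3$ absorbs the initialization loss and lower-order summation effects. Substituting $p = c/n$ and using $(1-c/n)^n \to e^{-c}$ then produces the asymptotic form $c \cdot e^{c} \cdot n \ln n \cdot (1-o(1))$.

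The hard part, and the source of the $\min\{\ln n,\ln(1/(p^2 n))\}$ structure, is the regime where $p$ is large enough that multi-bit flips become competitive with single-bit flips. Here the single-flip estimate $d\cdot p(1-p)^{n-1}$ is no longer an upper bound on the improvement probability: a $k$-bit flip contributes of order $\binom{n}{k} p^k (1-p)^{n-k}$, and many such events still net-improve the zero count. I would handle this by truncating the harmonic sum at $d^{\star} \approx 1/\sqrt{p^2 n}$, beyond which the single-flip bound breaks; the truncated sum produces the alternative logarithm $\ln(1/(p^2 n))$. The upper condition $p \le 1/(\sqrt{n}\log n)$ is precisely what keeps $d^{\star} = \omega(1)$, while $p \ge 2^{-n/3}$ ensures $(1-p)^n$ is not vanishingly small. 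Gluing the two regimes into a single clean bound is the main technical step.
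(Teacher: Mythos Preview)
The paper does not prove this theorem; it is quoted from~\cite{Sudholt2012c}. The underlying machinery, however, appears in the paper as Theorem~\ref{the:lower-bound-method} and is exercised in the proof of Theorem~\ref{the:lower-bound-GA-onemax-p}, so one can infer what the original argument must look like.

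Your sketch has a real gap. The step ``expected waiting time per level equals the reciprocal of the exit probability, so sum $\sum_{d} 1/(d\,p(1-p)^{n-1})$'' does \emph{not} produce a lower bound, because standard bit mutation can skip fitness levels. If the process visited every level~$d$ on the way to the optimum, your harmonic sum would be fine; but a single mutation can jump from level~$d$ to any level~$d'<d$, so many of the summands you are adding may never be incurred. Bounding only the probability of \emph{leaving} each level is insufficient---you must also control \emph{where} the process lands. This is precisely the role of the variables $\gamma_{i,j}$ and the parameter~$\chi$ in Theorem~\ref{the:lower-bound-method}: they force the level-jump distribution to decay geometrically, which lets one recover a fraction~$\chi$ of the naive harmonic sum as a genuine lower bound. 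Your truncation idea (stopping the sum at some $d^\star$) is in the right spirit---it corresponds to the choice of the cutoff level~$\ell$ in the proof of Theorem~\ref{the:lower-bound-GA-onemax-p}---but truncation alone does not repair the argument without the $\chi$-type control on skips.

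A secondary point: for a running-time \emph{lower} bound you need an \emph{upper} bound on the improvement probability at each level, not just the single-flip contribution. You note this in passing, but the actual work---showing the total improvement probability from level~$d$ is at most $d\,p(1-p)^{n-1}(1+o(1))$ for $d$ up to the cutoff---requires summing the multi-bit terms and is where the restriction $p \le 1/(\sqrt{n}\log n)$ and the specific cutoff $\ell$ are jointly used (cf.\ the reference to \cite[Lemma~2]{Sudholt2012c} in the proof of Theorem~\ref{the:lower-bound-GA-onemax-p}).
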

In fact, for \ONEMAX the author proved that among all evolutionary algorithms that start with one random solution and only use standard bit mutations the expected number of function evaluations is minimized by the simple \EA~\cite[Theorem~13]{Sudholt2012c}. Also the mutation rate $p=1/n$ is the best possible choice for \ONEMAX, leading to a lower bound of
\[
en \ln n - en \ln \ln n - 3en.
\]
For the special case of $p=1/n$, Doerr, Fouz, and Witt~\cite{Doerr2011c} recently improved the above bound towards $en \ln n - O(n)$.

We show that for a range of \mlEA{}s, as defined in the following, introducing uniform crossover can cut the dominant term of the running time in half, for the standard mutation rate $p=1/n$.

The only requirement on the parent selection mechanism is that selection does not favor inferior solutions over fitter ones. Formally, for maximizing a fitness function~$f$,
\begin{equation}
\label{eq:fitness-respectful-selection}
\forall x, y \colon f(x) \ge f(y) \Rightarrow \Prob(\text{select~$x$}) \ge \Prob(\text{select~$y$}).
\end{equation}
This in particular implies that equally fit solutions are selected with the same probability. Condition~\eqref{eq:fitness-respectful-selection} is satisfied for all common selection mechanisms: uniform selection, fitness-proportional selection, tournament selection, cut selection, and rank-based mechanisms.

The class of \mlEA{}s covered in this work is defined in Algorithm~\ref{alg:Scheme-GA}. All \mlEA{}s therein create $\lambda$ offspring through crossover and mutation, or just mutation, and then pick the best out of the $\mu$ previous search points and the $\lambda$ new offspring.
\begin{algorithm}
\label{alg:Scheme-GA}
Initialize population $\PCal$ of size $\mu \in \N$ u.\,a.\,r.\\
\While{true}{%
    Let $\PCal' = \emptyset$.\\
    \For{$i = 1, \dots, \lambda$}{%
        \uDo{\upshape{}With probability $p_c$}{
            Select $x_1, x_2$ with an operator respecting~\eqref{eq:fitness-respectful-selection}.\\
            Let $y := \text{uniform crossover}(x_1, x_2)$.
        }
        \Otherwise{}{
            Select $y$ with an operator respecting~\eqref{eq:fitness-respectful-selection}.
        }
		Flip each bit in $y$ independently with probability~$p$.\\
        Add $y$ to $\PCal'$.
    }
    \label{line:tie-breaking}Let $\PCal$ contain the $\mu$ best individuals from $\PCal \cup \PCal'$; break ties towards including individuals with the fewest duplicates in $\PCal \cup \PCal'$.
}
\caption{Scheme of a \mlGA with mutation rate~$p$ and uniform crossover with crossover probability~$p_c$ for maximizing $f \colon \{0, 1\}^n \to \R$.}
\end{algorithm}

In the case of ties, we pick solutions that have the fewest duplicates among the considered search points. This strategy has already been used by Jansen and Wegener~\cite{Jansen2005c} in their groundbreaking work on Real Royal Roads; it ensures a sufficient degree of diversity whenever the population contains different search points of the same fitness.

Before stating the main result of this section, we provide two lemmas showing how to analyse population dynamics. Both lemmas are of independent interest and may prove useful in other studies of population-based EAs.

The following lemma estimates the expected time until individuals with fitness at least~$i$ take over the whole population. It generalizes Lemma~3 in~\cite{Sudholt2009}, which in turn goes back to Witt's analysis of the \muea~\cite{Witt2006}. Note that the lemma applies to arbitrary fitness functions, arbitrary values for $\mu$ and $\lambda$, and arbitrary crossover operators; it merely relies on fundamental and universal properties of cut selection and standard bit mutations.
\begin{lemma}
\label{lem:takeover}
Consider any \mlGA implementing Algorithm~\ref{alg:Scheme-GA}, with any crossover operator, on any $n$-bit fitness function. Assume the current population contains at least one individual of fitness~$i$. The expected number of function evaluations needed for the \mlGA before all individuals in its current population have fitness at least~$i$ is at most
\[
\frac{O((\mu+\lambda) \log \mu)}{(1-p_c) (1-p)^{n}}.
\]
This holds for any tie-breaking rule used in the environmental selection.
\end{lemma}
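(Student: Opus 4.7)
The plan is to track $X_t := |\{x \in \PCal : f(x) \ge i\}|$, the number of individuals with fitness at least~$i$ at the start of generation~$t$, and bound the expected number of generations until $X_t = \mu$; multiplying by~$\lambda$ then yields the claimed bound on function evaluations. The first observation is that $X_t$ is monotonically non-decreasing: cut selection in line~\ref{line:tie-breaking} keeps the $\mu$ fittest members of $\PCal \cup \PCal'$, so every existing fit individual survives, and any newly produced fit offspring only adds to the count (capped at~$\mu$). This conclusion is independent of how ties are broken, as required by the statement.

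The second step lower-bounds the probability that a single offspring is a \emph{fit clone}---an exact copy of some existing fit individual---when $X_t = k$. Three independent events suffice: crossover is skipped (probability $1 - p_c$); the subsequent selection, by condition~\eqref{eq:fitness-respectful-selection}, picks a fit parent (probability at least $k/\mu$); and mutation flips no bit (probability $(1-p)^n$). Writing $c := (1-p_c)(1-p)^n$, the per-offspring fit-clone probability is at least $p_k \ge c k/\mu$, independent of which crossover operator is used.

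The main step is to bound the expected number of generations to reach $X_t = \mu$. The number of fit clones produced in one generation when $X_t = k$ stochastically dominates $\mathrm{Bin}(\lambda, p_k)$, whose mean is at least $\lambda c k/\mu$. I would split the process by comparing $\lambda c X_t/\mu$ against $\mu - X_t$: while $\lambda c X_t/\mu \le \mu - X_t$, the expected per-generation increase of $X_t$ is of order $\lambda c X_t/\mu$, so $X_t$ grows geometrically with factor $1 + \Theta(\lambda c/\mu)$ and reaches $\mu/2$ in $O((1 + \mu/(c\lambda)) \log \mu)$ generations; once $X_t \ge \mu/2$, the quantity $\mu - X_t$ shrinks by a constant factor per generation, finishing in a further $O(\log \mu)$ generations by multiplicative drift. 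Multiplying the resulting $O((1 + \mu/(c\lambda)) \log \mu)$ generations by the $\lambda$ evaluations per generation gives $O((\lambda + \mu/c) \log \mu) \le O((\mu+\lambda) \log \mu)/c$, using $c \le 1$.

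The hard part is controlling concentration so that the expected per-generation growth translates into actual progress. A naive per-transition argument that charges every unit increase $k \to k+1$ separately---summing $1/(1-(1-p_k)^\lambda)$ over $k = 1, \dots, \mu-1$---yields only the weaker bound $O(\mu\lambda + \mu \log \mu/c)$ on function evaluations, off by up to a factor of $\min(\lambda,\mu)/\log\mu$ when both $\mu$ and $\lambda$ are large. Avoiding this loss requires exploiting that a single generation can increase $X_t$ by as much as $\lambda$, which in turn calls for a Chernoff-type tail bound on $\mathrm{Bin}(\lambda, p_k)$ so that the multiplicative-drift step applies with sufficient probability to cover the geometric-growth phase.
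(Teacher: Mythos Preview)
Your proposal is essentially correct and shares the core ideas with the paper's proof: monotonicity of the fit count under cut selection, the clone-probability lower bound $(1-p_c)(1-p)^n \cdot k/\mu$, a geometric growth argument, and Chernoff-type concentration. The difference is in the accounting.

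The paper does not count generations. Instead it counts \emph{offspring creations} directly and groups them into phases indexed by $j = 0, \dots, \lceil \log_5 \mu \rceil - 1$, where phase~$j$ runs while the number of fit individuals is in $[5^j, 5^{j+1})$. Within phase~$j$ it considers a block of $8\mu/c$ offspring creations (ignoring generation boundaries), applies Chernoff to show the block produces at least $4 \cdot 5^j$ fit clones with constant probability, and then adds~$\lambda$ per phase to realign with generation boundaries. Summing over the $O(\log \mu)$ phases gives the bound immediately. This is cleaner than your route: by working at the level of individual offspring creations, the paper avoids any case split on the size of $\lambda c/\mu$ and never needs a separate ``deficit-shrinks'' phase.

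Your two-phase decomposition would work but is a bit muddled as stated. The boundary ``$X_t \ge \mu/2$'' coincides with your stated condition $\lambda c X_t/\mu > \mu - X_t$ only when $\lambda c = \mu$; in general the crossover point is $X_t = \mu^2/(\mu + \lambda c)$. Also, your claim that in phase~2 the deficit shrinks by a constant factor per generation needs more than multiplicative drift in expectation---you need $\Prob(N_t \ge D_t)$ bounded away from zero, which follows from the binomial's median lying near its mean, but this should be said. None of this breaks your argument; it just means the paper's offspring-creation-level phase structure is the tidier execution of the same idea you identified.
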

\begin{proof}
Call an individual \emph{fit} if it has fitness at least~$i$.
We now estimate the expected number of generations until the population is taken over by fit individuals, which we call the \emph{expected takeover time}. As fit individuals are always preferred to non-fit individuals in the environmental selection, the expected takeover time equals the expected number of generations until $\mu$ fit individuals have been created, starting with one fit individual.

For each offspring being created, there is a chance that the \mlGA will simply create a clone of a fit individual. This happens if, during the creation of an offspring, the \mlGA decides not to perform crossover, it
selects a fit individual as parent to be mutated, and mutation does not flip any bit. The probability for this event is at least
\[
(1 - p_c) \cdot (1-p)^n \cdot \frac{\text{number of fit individuals in population}}{\mu}
\]
since each fit individual is selected as parent with probability at least $1/\mu$.

Now we divide the run of the \mlGA into phases in order to get a lower bound on the number of fit individuals at certain time steps. The $j$-th phase, $0 \le j \le \lceil \log_5 \mu \rceil - 1$, starts with the first offspring creation in the first generation where the number of fit individuals is at least $5^j$. It ends in the first generation where this number is increased to $\min\{5^{j+1}, \mu\}$.
Let $T_j$ describe the random number of generations spent in the $j$-th phase.
Starting with a new generation with $\mu \ge 5^j$ fit individuals in the parent population, we now consider a phase of $8\mu/((1-p_c)(1-p)^n)$ offspring creations, disregarding generation bounds.

Let $N_i$ denote the random number of new fit offspring created in the phase, then
\[
\E{N_i} \ge \frac{8\mu}{(1-p_c)(1-p)^n} \cdot (1-p_c)(1-p)^n \cdot \frac{5^i}{\mu} = 8 \cdot 5^i
\]
and by classical Chernoff bounds (see, e.\,g.~\cite[Chapter~4]{Mitzenmacher2005})
\[
\Prob(N_i < 4 \cdot 5^i) \le e^{-\E{N_i}/8} \le e^{-5^i} \le e^{-1}.
\]
If $N_i < 4 \cdot 5^i$ the phase is called unsuccessful and we consider another phase of ${8\mu/((1-p_c)(1-p)^n)}$ offspring creations.
The expected waiting time for a successful phase is at most $1/(1-e^{-1})$ and the expected number of offspring creations until $N_i \ge 4 \cdot 5^i$ is at most ${8\mu/((1-p_c)(1-p)^n(1-e^{-1}))}$.

Since phases start at generation bounds, we may need to account for up to $\lambda - 1$ further offspring creations in between phases. This implies
\[
\E{T_i} \le \frac{8\mu}{(1-p_c)(1-p)^n(1-e^{-1})} + \lambda
\]
and the expected takeover time is at most
\begin{align*}
\sum_{i=0}^{\ceil{\log_5 \mu}-1} \E{T_i}
\le\;& \ceil{\log_5 \mu} \cdot \left(\frac{8\mu}{(1-p_c)(1-p)^n(1-e^{-1})} + \lambda\right)\\
=\;& \frac{O((\mu + \lambda) \log \mu)}{(1-p_c)(1-p)^n}. \qedhere
\end{align*}
\end{proof}

We also provide the following simple but handy lemma, which relates success probabilities for created offspring to the expected number of function evaluations needed to complete a generation where such an event has first happened.
\begin{lemma}
\label{lem:success-probability-lambda-offspring}
Consider any \mlGA implementing Algorithm~\ref{alg:Scheme-GA}, and assume that in each offspring creation there is a probability at least~$q$ that some specific event occurs. Then the expected number of function evaluations to complete a generation where this event first occurs is at most
\[
\lambda - 1 + \frac{1}{q}.
\]
\end{lemma}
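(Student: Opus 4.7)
The plan is to observe that each offspring creation is an independent Bernoulli trial with success probability at least $q$, so the number of offspring creations until (and including) the first successful one is stochastically dominated by a geometric random variable with parameter $q$. Let $T$ denote this random number of offspring creations. Then $\E{T} \le 1/q$.

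Next, I would account for the remaining offspring creations needed to finish the generation in which the event first occurs. Since each generation consists of exactly $\lambda$ offspring creations, the generation containing the $T$-th offspring creation will be completed after at most $\lambda - 1$ additional offspring creations (namely, if the event happens on the very first offspring creation of a generation, we still need to create the remaining $\lambda - 1$ offspring). Writing $T - 1 = a\lambda + b$ with $0 \le b < \lambda$ makes this explicit: the generation index is $a+1$, so the total number of offspring creations needed to complete it is $(a+1)\lambda = T + (\lambda - 1 - b) \le T + \lambda - 1$.

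Combining these two observations and using linearity of expectation, the expected number of function evaluations to complete the generation containing the first successful offspring is at most
\[
\E{T} + \lambda - 1 \le \lambda - 1 + \frac{1}{q},
\]
which is the claimed bound. No obstacle arises; the only subtlety is to make sure the additive $\lambda - 1$ really is an upper bound regardless of where in a generation the successful creation occurs, which the simple division argument above confirms.
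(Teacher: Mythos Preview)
Your proof is correct and follows the same approach as the paper's own proof, which simply notes that the expected number of trials until the event occurs is at most $1/q$ and that at most $\lambda-1$ further evaluations are needed to finish the current generation. Your version is more detailed (the division argument making the $\lambda-1$ overshoot explicit), but the underlying idea is identical.
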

\begin{proof}
The expected number of trials for an event with probability~$q$ to occur is $1/q$. To complete the generation, at most $\lambda - 1$ further function evaluations are required.
\end{proof}

Now we are able to prove the main result of this section.
\begin{theorem}
\label{the:upper-bound-uniform-crossover-variable-p-every-GA}
The expected optimization time of every \mlGA implementing Algorithm~\ref{alg:Scheme-GA} with $0 < p_c < 1$ constant, mutation probability~$0 < p < 1$ and $\mu \ge 2$ on \ONEMAX is at most
\begin{equation}
\label{eq:upper-bound-GAs-general-p}
\frac{\ln(n^2 p + n) + 1 + p}{p(1-p)^{n-1} \cdot (1+np)} + \frac{O((\mu + \lambda)n \log \mu)}{(1-p)^n}.
\end{equation}
If $p=c/n$, $c > 0$ constant, and $\mu, \lambda = o((\log n)/(\log \log n))$, this bound simplifies to
\begin{equation}
\label{eq:upper-bound-GAs-one-plus-o-one}
\frac{n \ln n}{c \cdot e^{-c} \cdot (1+c)} \cdot (1 + o(1)).
\end{equation}
Both statements hold for arbitrary initial populations.
\end{theorem}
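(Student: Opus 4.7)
The plan is to apply the classical fitness-level method, partitioning the run by the current best fitness $i$ in the population. At each level $i < n$, the analysis splits into a \emph{takeover} phase (bringing all $\mu$ individuals to fitness at least~$i$) and an \emph{escape} phase (creating an offspring of fitness strictly greater than~$i$). The takeover contribution is immediate from Lemma~\ref{lem:takeover}: since $p_c$ is a constant in $(0,1)$, each level costs $O((\mu+\lambda)\log\mu/(1-p)^n)$ function evaluations, and summing over $n$ levels yields the second summand of~\eqref{eq:upper-bound-GAs-general-p}.

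For the escape phase at a saturated level~$i$, I would lower-bound the per-offspring success probability by two additive contributions. The first, \emph{direct mutation}, skips crossover (probability $1-p_c$), selects a best parent, and flips one $0$-bit while keeping all $1$-bits, contributing a term of order $(n-i)p(1-p)^{n-1}$. The second, \emph{crossover-assisted}, exploits a neutral sibling~$y$ obtained from a best~$x$ by a single $0 \to 1$ and $1 \to 0$ swap: uniform crossover between $x$ and $y$ recovers the beneficial flip while avoiding the disruptive one with probability exactly~$1/4$ (the unique favourable outcome among the four equally likely choices at the two differing positions). Since the neutral sibling itself is produced by mutation with probability of order $i(n-i)p^2$ per offspring, a compound accounting of sibling creation followed by recombination in a later step adds an extra term of order $np \cdot (n-i)p(1-p)^{n-1}$ to the per-offspring improvement rate---the source of the factor $(1+np)$ in the denominator. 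Lemma~\ref{lem:success-probability-lambda-offspring} then converts this into an escape time $\lambda-1+O(1/((n-i)p(1-p)^{n-1}(1+np)))$ per level, and the harmonic sum $\sum_{i=0}^{n-1} 1/(n-i) = H_n$ combined with the identity $\ln(n^2p+n)=\ln n+\ln(1+np)$ produces the numerator $\ln(n^2p+n)+1+p$ of the first summand.

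The main obstacle is turning the informal two-stage neutral-mutant argument into a valid lower bound on a \emph{single} per-offspring success probability, because the neutral sibling must survive in the population long enough to be recombined with~$x$. Here I would exploit the tie-breaking rule in Line~\ref{line:tie-breaking}: once the population is saturated with clones of a best genotype, the first neutral sibling automatically enters the population by evicting a clone, and remains there until a strict improvement occurs. A second subtlety is that the direct and crossover-assisted contributions must correspond to disjoint events in order to be summed; I would separate them by conditioning on the mutation profile (exactly one $0\to 1$ flip versus a combined $0\to 1$ and $1\to 0$ flip). Finally, the $p=c/n$ asymptotic form~\eqref{eq:upper-bound-GAs-one-plus-o-one} follows by direct substitution, with the assumption $\mu,\lambda=o(\log n/\log\log n)$ pushing the takeover term into the $o(1)$ error; the ``arbitrary initial population'' claim is automatic from the fitness-level structure, since the bound covers every level $i$ from $0$ to $n-1$.
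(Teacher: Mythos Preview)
Your proposal correctly identifies all the ingredients---fitness levels, Lemma~\ref{lem:takeover} for the takeover cost, neutral mutations that swap a $0$-bit and a $1$-bit, the tie-breaking rule guaranteeing the neutral sibling survives, and the $1/4$ success probability of uniform crossover on a Hamming-distance-$2$ pair. Where it goes wrong is in trying to collapse the ``create a neutral sibling, then recombine it later'' sequence into a \emph{single} per-offspring improvement probability. That two-step process is not an event whose probability can simply be added to the direct-mutation term; in particular your ``compound accounting'' paragraph never produces a valid lower bound on $\Prob(\text{this offspring has fitness} > i)$.

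The paper avoids this obstacle by using \emph{three} cases per level rather than your two. Case~$i.1$ is your takeover phase. Case~$i.2$ (all $\mu$ individuals identical on level~$i$) redefines the exit event: success is \emph{either} a strict improvement \emph{or} the creation of a different individual on level~$i$. Both are genuine one-step events, and their probabilities add to
\[
(n-i)\,p(1-p)^{n-1} + i(n-i)\,p^2(1-p)^{n-2} \;\ge\; p(1-p)^{n-1}\,(n-i)(1+ip).
\]
Note the factor is $(1+ip)$, level-dependent, not the constant $(1+np)$ you wrote; the $(1+np)$ in the final denominator only emerges after evaluating the resulting sum $\sum_{i=0}^{n-1} 1/((n-i)(1+ip))$ via the closed-form integral $\int 1/((n-i)(1+ip))\,\mathrm{d}i$. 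Case~$i.3$ then separately handles the state where the population already contains two distinct best individuals: here crossover picks two different parents with probability at least $p_c(\mu-1)/\mu^2$, produces a surplus of $1$-bits with probability at least $1/4$, and mutation preserves it with probability $(1-p)^n$, so the total time across all Cases~$i.3$ is only $O(\mu n/(1-p)^n)$, absorbed into the second summand of~\eqref{eq:upper-bound-GAs-general-p}.

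In short: the missing idea is to treat ``produce diversity'' and ``exploit diversity via crossover'' as two separately bounded phases, rather than as a compound per-offspring event. Once you make that split, the tie-breaking rule guarantees that Case~$i.3$ persists until a strict improvement is found, and everything else in your outline goes through.
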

The main difference between the upper bound for \mlGA{}s and the lower bound for all mutation-based EAs is an additional factor of $1+pn$ in the denominator of the upper bound. This is a factor of $2$ for $p=1/n$ and an even larger gain for larger mutation rates.

For the default value of $p=1/n$, this shows that introducing crossover makes EAs at least twice as fast as the fastest EA using only standard bit mutation. It also implies that introducing crossover makes EAs at least twice as fast as their counterparts without crossover (i.\,e.\ where {${p_c=0}$}).

%
\begin{proof}[Proof of Theorem~\ref{the:upper-bound-uniform-crossover-variable-p-every-GA}]
Bound~\eqref{eq:upper-bound-GAs-one-plus-o-one} can be derived from~\eqref{eq:upper-bound-GAs-general-p} using $(1-1/x)^{x-1} \ge 1/e$ for $x > 1$ to estimate
\[
\left(1-\frac{c}{n}\right)^{n-1}
= \left(1 - \frac{c}{n}\right)^{(n/c-1) \cdot c} \cdot \left(1 - \frac{c}{n}\right)^{c-1} \ge \frac{1}{e^c} \cdot \left(1 - \frac{c^2}{n}\right)
= e^{-c} - O(1/n)
\]
as well as $\ln(cn + n) + 1 + c/n = (\ln n) + O(1)$. Note that $(\mu + \lambda)n \log \mu = o(n \log n)$ by conditions on~$\mu, \lambda$, hence this and all other small-order terms are absorbed in the term $o(1)$.

In order to prove the general bound~\eqref{eq:upper-bound-GAs-general-p}, we consider canonical fitness levels, i.\,e., the $i$-th fitness level contains all search points with fitness~$i$. We estimate the time spent on each level~$i$, i.\,e., when the best fitness in the current population is~$i$. For each fitness level we consider three cases. The first case applies when the population contains individuals on fitness levels less than~$i$. The second case is when the population only contains copies of a single individual on level~$i$. The third case occurs when the population contains more than one individual on level~$i$; then the population contains different ``building blocks'' that can be recombined effectively by crossover.

All these cases capture the typical behaviour of a \mlGA, albeit some of these cases, and even whole fitness levels, may be skipped. We obtain an upper bound on its expected optimization time by summing up expected times the \mlGA may spend in all cases and on all fitness levels.

\textbf{Case $i.1$:} The population contains an individual on level~$i$ and at least one individual on a lower fitness level.

A sufficient condition for leaving this case is that all individuals in the population obtain fitness at least~$i$. Since the \mlGA never accepts worsenings, the case is left for good.

The time for all individuals reaching fitness at least~$i$ has already been estimated in Lemma~\ref{lem:takeover}. Applying this lemma to all fitness levels~$i$, the overall time spent in all cases~$i.1$ is at most
\[
\frac{O((\mu+\lambda)n \log \mu)}{(1-p_c)(1-p)^{n}} =
\frac{O((\mu+\lambda)n \log \mu)}{(1-p)^{n}}.
\]

\textbf{Case $i.2$:} The population contains $\mu$ copies of the same individual~$x$ on level~$i$.

In this case, each offspring created by the \mlGA will be a standard mutation of~$x$. This is obvious for offspring where the \mlGA decides not to use crossover. If crossover is used, the \mlGA will pick $x_1, x_2 = x$, create $y=x$ by crossover, and hence perform a mutation on~$x$.

The \mlGA leaves this case for good if either a better search point is created or if it creates another search point with~$i$ ones. In the latter case we will create a population with two different individuals on level~$i$. Note that due to the choice of the tie-breaking rule in the environmental selection, the \mlGA will always maintain at least two individuals on level~$i$, unless an improvement with larger fitness is found.

The probability of creating a better search point in one mutation is at least $(n-i) \cdot {p (1-p)^{n-1}}$ as there are $n-i$ suitable 1-bit flips. The probability of creating a different search point on level~$i$ is at least
$i(n-i) \cdot p^2 (1-p)^{n-2}$ as it is sufficient to flip one of $i$ 1\nobreakdash-bits, to flip one of $n-i$ 0\nobreakdash-bits, and not to flip any other bit. The probability of either event happening in one offspring creation is thus at least
\begin{align*}
& (n-i) \cdot p (1-p)^{n-1} + i(n-i) \cdot p^2 (1-p)^{n-2}\\
\ge\;& p (1-p)^{n-1} \cdot (n-i)(1+ip).
\end{align*}
By Lemma~\ref{lem:success-probability-lambda-offspring}, the expected number of function evaluations in Case~$i.2$ is at most
\[
\lambda + \frac{1}{p (1-p)^{n-1} \cdot (n-i)(1+ip)}.
\]
The expected number of functions evaluations made in all cases~$i.2$ is hence at most
\begin{align}
& \lambda n + \sum_{i=0}^{n-1} \frac{1}{p (1-p)^{n-1} \cdot (n-i)(1+ip)}\notag\\
=\;& \lambda n + \frac{1}{p(1-p)^{n-1}} \cdot \sum_{i=0}^{n-1} \frac{1}{(n-i)(1+ip)}\label{eq:upper-bound-GA-onemax-p}.
\end{align}
The last sum can be estimated as follows. Separating the summand for $i=n-1$, \begin{align*}
& \sum_{i=0}^{n-2} \frac{1}{(n-i)(1+ip)} + \frac{1}{1+(n-1)p}\\
\le\;& \int_{i=0}^{n-1} \frac{1}{(n-i)(1+ip)} \;\mathrm{d}i + \frac{1 + p}{1+np}.
\end{align*}
We use equation~3.3.20 in~\cite{AbramovitzStegun10} to simplify the integral and get
\begin{align*}
& \left[\frac{1}{1+np} \cdot \ln\left(\frac{1+ip}{n-i}\right)\right]_0^{n-1} + \frac{1 + p}{1+np}\\
=\;& \frac{\ln(np+1-p)+\ln(n)}{1+np} + \frac{1 + p}{1+np}\\
\le\;& \frac{\ln(n^2 p + n) + 1+p}{1+np}.
\end{align*}
Plugging this into~\eqref{eq:upper-bound-GA-onemax-p} yields that the expected time in all cases~$i.2$ is at most
\[
\lambda n + \frac{\ln(n^2 p + n) + 1+p}{p(1-p)^{n-1} \cdot (1+np)}.
\]

\textbf{Case $i.3$:} The population only contains individuals on level~$i$, not all of which are identical.

In this case we can rely on crossover recombining two different individuals on level~$i$. As they both have different ``building blocks'', i.\,e., different bits are set to~1, there is a good chance that crossover will generate an offspring with a higher number of 1\nobreakdash-bits.

The probability of performing a crossover with two different parents in one offspring creation is at least
\[
p_c \cdot \frac{\mu-1}{\mu^2}
\]
as in the worst case the population contains $\mu-1$ copies of one particular individual.

Assuming two different parents are selected for crossover, let these have Hamming distance~$2d$ and let $X$ denote the number of 1\nobreakdash-bits among these positions in the offspring.
Note that $X$ is binomially distributed with parameters $2d$ and~$1/2$ and its expectation is~$d$. We estimate the probability of getting a surplus of 1\nobreakdash-bits as this leads to an improvement in fitness. This estimate holds for any $d \in \N$.
Since $\Prob(X < d) = \Prob(X > d)$, we have
\[
\Prob(X > d) = \frac{1}{2} \left(1 - \Prob(X = d)\right)
= \frac{1}{2} \left(1 - 2^{-2d} \binom{2d}{d}\right) \ge \frac{1}{4}.
\]
Mutation keeps all 1\nobreakdash-bits with probability at least $(1-p)^{n}$. Together, the probability of increasing the current best fitness in one offspring creation is at least
\[
p_c \cdot \frac{\mu-1}{\mu^2} \cdot \frac{(1-p)^{n}}{4}.
\]
By Lemma~\ref{lem:success-probability-lambda-offspring}, the expected number of function evaluations in Case~$i.3$ is at most
\[
\lambda + \frac{4\mu^2}{p_c \cdot (\mu-1) \cdot (1-p)^n}.
\]
The total expected time spent in all cases~$i.3$ is hence at most
\[
\lambda n + \frac{4\mu^2  n}{p_c \cdot (\mu-1) \cdot (1-p)^{n}}\\
= \lambda n + \frac{O(\mu  n)}{(1-p)^{n}}
\]
as $p_c = \Omega(1)$.

Summing up all expected times yields a total time bound of
\begin{align*}
& \frac{\ln(n^2 p + n) + 1+p}{p(1-p)^{n-1} \cdot (1+np)} + 2\lambda n +
\frac{O(\mu n) + O((\mu+\lambda)n \log \mu)}{(1-p)^{n}}\\
=\;&
\frac{\ln(n^2 p + n) + 1+p}{p(1-p)^{n-1} \cdot (1+np)} +
\frac{O((\mu+\lambda)n \log \mu)}{(1-p)^{n}}.\qedhere
\end{align*}
\end{proof}

\begin{remark}[On conditions for $\mu$ and $\lambda$]
The second statement of Theorem~\ref{the:upper-bound-uniform-crossover-variable-p-every-GA} requires $\mu, \lambda = o((\log n)/(\log \log n))$ in order to establish the upper bound in~\eqref{eq:upper-bound-GAs-one-plus-o-one}. This condition seems necessary as for larger values of $\mu$ and $\lambda$ the inertia of a large population slows down exploitation, at least in the absence of crossover. Note not all EAs covered by Theorem~\ref{the:upper-bound-uniform-crossover-variable-p-every-GA} (after removing crossover) optimize \ONEMAX in time~$O(n \log n)$.

Witt~\cite{Witt2006} showed that a \muea with uniform parent selection has an expected optimization time of~$\Omega(\mu n + n \log n)$ on \ONEMAX. For $\mu = \omega(\log n)$, this lower bound is $\omega(n \log n)$. Jansen, De~Jong, and Wegener~\cite{Jansen2005a} showed that a (1+$\lambda$)~EA needs time $\omega(n \log n)$ on \ONEMAX if $\lambda = \omega((\log n)(\log \log n)/(\log \log \log n))$. Badkobeh, Lehre, and Sudholt~\cite{Badkobeh2014} showed that every black-box algorithm creating $\lambda$ offspring, using only standard bit mutation or other unary unbiased operators, needs time $\omega(n \log n)$ on \ONEMAX for $\lambda = \omega((\log n)(\log \log n))$. This indicates that the threshold in the condition $\mu, \lambda = o((\log n)/(\log \log n))$ is tight up to polynomials of $\log \log n$.
\end{remark}

\begin{remark}[On conditions for~$p_c$]
\label{rem:on-conditions-for-pc}
Theorem~\ref{the:upper-bound-uniform-crossover-variable-p-every-GA} assumes $0 < p_c < 1$ constant, which reflects the most common choices in applications of EAs. The theorem can be extended towards smaller or larger values as follows. If $p_c = o(1)$ the upper bound on the time spent in Cases~$i.3$ increases as it contains a factor of $1/p_c$. The other cases remain unaffected, and if $((\mu + \lambda) \log \mu)/p_c = o(\log n)$ we still get the upper bound from~\eqref{eq:upper-bound-GAs-one-plus-o-one}.

For high crossover probabilities, that is, $p_c = 1 - o(1)$ or $p_c = 1$, only Cases~$i.1$ need to be revisited. The time in those cases was derived from Lemma~\ref{lem:takeover}, which can be adapted as follows: the probability for increasing the number of fit individuals is at least
\[
p_c \cdot (1-p)^n \cdot \frac{(\text{number of fit individuals in population})^2}{2\mu^2}
\]
as it suffices to select two fit individuals and generate an average or above-average number of 1\nobreakdash-bits in the offspring, which happens with probability at least $1/2$. The time bound from Lemma~\ref{lem:takeover} then becomes
\[
\frac{O(\mu^2 + \lambda \log \mu)}{(1-p)^n}
\]
and the time bound in Theorem~\ref{the:upper-bound-uniform-crossover-variable-p-every-GA} becomes
\[
\frac{\ln(n^2 p + n) + 1 + p}{p(1-p)^{n-1} \cdot (1+np)} + \frac{O(n(\mu^2 + \lambda \log \mu))}{(1-p)^n}.
\]
For $p=c/n$, $c > 0$ constant, and $\mu, \lambda = o(\sqrt{\log n})$, this also establishes the upper bound from~\eqref{eq:upper-bound-GAs-one-plus-o-one}.
\end{remark}

It is remarkable that the waiting time for successful crossovers in Cases~$i.3$ is only of order~$O((\mu + \lambda)n)$.
For small values of $\mu$ and $\lambda$, e.\,g.~$\mu, \lambda = O(1)$, the time spent in all Cases~$i.3$ is $O(n)$, which is negligible compared to the overall time bound of order $\Theta(n \log n)$. This shows how effective crossover is in recombining building blocks.

Also note that the proof of Theorem~\ref{the:upper-bound-uniform-crossover-variable-p-every-GA} is relatively simple as it uses only elementary arguments and, along with Lemmas~\ref{lem:takeover} and~\ref{lem:success-probability-lambda-offspring}, it is fully self-contained. The analysis therefore lends itself for teaching purposes on the behavior of evolutionary algorithms and the benefits of crossover.

Our analysis has revealed that fitness-neutral mutations, that is, mutations creating a different search point of the same fitness, can help to escape from the case of a population with identical individuals. Even though these mutations do not immediately yield an improvement in terms of fitness, they increase the diversity in the population. Crossover is very efficient in exploiting this gained diversity by combining two different search points at a later stage. This means that crossover can capitalize on mutations that have both beneficial and disruptive effects on building blocks.

An interesting consequence is that this affects the optimal mutation rate on \ONEMAX. For EAs using only standard bit mutations Witt~\cite{Witt2013} recently proved that $1/n$ is the optimal mutation rate for the \EA on all linear functions. Recall that the \EA is the optimal mutation-based EA (in the sense of Theorem~\ref{the:lower-onemax}) on \ONEMAX~\cite{Sudholt2012c}.

For mutation-based EAs on \ONEMAX neutral mutations are neither helpful nor detrimental. With crossover neutral mutations now become helpful. Increasing the mutation rate increases the likelihood of neutral mutations. In fact, we can easily derive better upper bounds from Theorem~\ref{the:upper-bound-uniform-crossover-variable-p-every-GA} for slightly larger mutation rates, thanks to the additional term $1+np$ in the denominator of the upper bound.

The dominant term in~\eqref{eq:upper-bound-GAs-one-plus-o-one},
\[
\frac{n \ln n}{c \cdot e^{-c} \cdot (1+c)}
\]
is minimized for $c$ being the golden ratio $c = (\sqrt{5}+1)/2 \approx 1.618$.
This leads to the following.
\begin{corollary}
\label{the:best-runtime}
The asymptotically best running time bound from Theorem~\ref{the:upper-bound-uniform-crossover-variable-p-every-GA} is obtained for $p = (1+\sqrt{5})/(2n)$. For this choice the dominant term in~\eqref{eq:upper-bound-GAs-one-plus-o-one} becomes
\[
\frac{e^{(\sqrt{5}+1)/2}}{\sqrt{5}+2} \cdot n \ln n
\approx 1.19 n \ln n.
\]
\end{corollary}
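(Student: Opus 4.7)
The plan is to treat this as a one-variable calculus problem: maximize the denominator $g(c) := c \cdot e^{-c} \cdot (1+c)$ of the dominant term in~\eqref{eq:upper-bound-GAs-one-plus-o-one} over $c > 0$, since minimizing the ratio is equivalent to maximizing $g(c)$. Writing $g(c) = (c + c^2) e^{-c}$, I would compute
\[
g'(c) = (1 + 2c) e^{-c} - (c + c^2) e^{-c} = e^{-c}(1 + c - c^2),
\]
so the unique positive critical point solves $c^2 - c - 1 = 0$, giving $c = (1+\sqrt{5})/2$. A quick sign check on $g'$ (positive for $c$ below the root, negative above) confirms this is a maximum.

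The second step is to evaluate the dominant term at this optimum cleanly. The key observation is that at $c = (1+\sqrt{5})/2$ the defining equation $c^2 = c + 1$ lets us rewrite the denominator as
\[
c \cdot e^{-c} \cdot (1+c) = c \cdot e^{-c} \cdot c^2 = c^3 e^{-c}.
\]
Using $c^2 = c+1$ once more gives $c^3 = c \cdot c^2 = c^2 + c = (c+1) + c = 2c + 1 = 2 + \sqrt{5}$. Hence the dominant term becomes
\[
\frac{n \ln n}{c^3 e^{-c}} = \frac{e^{(1+\sqrt{5})/2}}{2 + \sqrt{5}} \cdot n \ln n,
\]
which is the claimed expression. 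A short numerical evaluation of $e^{(1+\sqrt{5})/2} / (2+\sqrt{5})$ yields approximately $1.19$.

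There is essentially no obstacle here beyond the routine calculus; the only mildly delicate point is making sure that plugging $p = (1+\sqrt{5})/(2n)$ into Theorem~\ref{the:upper-bound-uniform-crossover-variable-p-every-GA} is legitimate, i.e.\ that $c = (1+\sqrt{5})/2$ is a positive constant, which satisfies the hypothesis $c > 0$ constant needed for~\eqref{eq:upper-bound-GAs-one-plus-o-one}. The algebraic simplification via the golden-ratio identity $c^2 = c+1$ is what turns an otherwise unwieldy expression into the compact form $e^{c}/(2+\sqrt{5}) \cdot n \ln n$, and this is the only step worth highlighting in the written proof.
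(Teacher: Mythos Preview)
Your proposal is correct and follows exactly the approach the paper takes: the paper simply states, just before the corollary, that the dominant term $\frac{n \ln n}{c \cdot e^{-c} \cdot (1+c)}$ is minimized at the golden ratio $c=(\sqrt{5}+1)/2$, and then records the resulting value. You have supplied the routine calculus and the golden-ratio simplification $c(1+c)=c^3=2c+1=2+\sqrt{5}$ that the paper leaves implicit, so your write-up is in fact more detailed than the paper's own treatment.
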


\section{The Optimal Mutation Rate}
\label{sec:optimal-mutation-rate}

Corollary~\ref{the:best-runtime} gives the mutation rate that yields the best upper bound on the running time that can be obtained with the proof of Theorem~\ref{the:upper-bound-uniform-crossover-variable-p-every-GA}. However, it does not establish that this mutation rate is indeed optimal for any GA. After all, we cannot exclude that another mutation rate leads to a smaller expected optimization time.

In the following, we show for a simple (2+1)~GA (Algorithm~\ref{alg:TwoGA}) that the upper bound from Theorem~\ref{the:upper-bound-uniform-crossover-variable-p-every-GA} is indeed tight up to small-order terms, which establishes $p=(1+\sqrt{5})/(2n)$ as the optimal mutation rate for that (2+1)~GA. Proving lower bounds on expected optimization times is often a notoriously hard task, hence we restrict ourselves to a simple ``bare-bones'' GA that captures the characteristics of GAs covered by Theorem~\ref{the:upper-bound-uniform-crossover-variable-p-every-GA} and is easy to analyze. The latter is achieved by fixing as many parameters as possible.

As the upper bound from Theorem~\ref{the:upper-bound-uniform-crossover-variable-p-every-GA} grows with $\mu$ and $\lambda$, we pick the smallest possible values: $\mu=2$ and $\lambda=1$. The parent selection is made as simple as possible: we select parents uniformly at random from the current best individuals in the population. In other words, if we define the parent population as the set of individuals that have a positive probability to be chosen as parents, the parent population only contains individuals of the current best fitness. We call this parent selection ``greedy'' because it is a greedy strategy to choose the current best search points as parents.

In the context of the proof of Theorem~\ref{the:upper-bound-uniform-crossover-variable-p-every-GA} greedy parent selection implies that Cases~$i.1$ are never reached as the parent population never spans more than one fitness level. So the time spent in these cases is 0. This also allows us to eliminate one further parameter by setting $p_c=1$, as lower values for $p_c$ were only beneficial in Cases~$i.1$. Setting $p_c=1$ minimizes our estimate for the time spent in Cases~$i.3$.
So Theorem~\ref{the:upper-bound-uniform-crossover-variable-p-every-GA} extends towards this GA (see also Remark~\ref{rem:on-conditions-for-pc}).

We call the resulting GA ``greedy \TwoGA'' because its main characteristics is the greedy parent selection. The greedy \TwoGA is defined in Algorithm~\ref{alg:TwoGA}\footnote{Note that in~\cite{Sudholt2012b} the greedy~\TwoGA was defined slightly differently as there duplicate genotypes are always rejected. Algorithm~\ref{alg:TwoGA} is equivalent to the greedy~\TwoGA from~\cite{Sudholt2012b} for the following reasons. If the current population contains two different individuals of equal fitness and a duplicate of one of the parents is created, both algorithms reject a duplicate genotype. If the population contains two individuals of different fitness, both behave like the population only contained the fitter individual.}.


\begin{algorithm}
\label{alg:TwoGA}
Initialize population $\PCal$ of size $2$ u.\,a.\,r.\\
\While{true}{%
            Select $x_1, x_2$ u.\,a.\,r.\ from $\{x \in \PCal \mid \forall y \in \PCal \colon f(x) \ge f(y)\}$.\\
            Let $y := \text{crossover}(x_1, x_2)$.\\
		Flip each bit in $y$ independently with probability~$p$.\\
    Let $\PCal$ contain the $2$ best individuals from $\PCal \cup \{y\}$; break ties towards including individuals with the fewest duplicates in $\PCal \cup \{y\}$.
}
\caption{Greedy \TwoGA with mutation rate~$p$ for maximizing $f \colon \{0, 1\}^n \to \R$.}
\end{algorithm}


The following result applies to the greedy \TwoGA using any kind of mask-based crossover. A mask-based crossover is a recombination operator where each bit value is taken from either parent; that is, it is not possible to introduce a bit value which is not represented in any parent. All common crossovers are mask-based crossovers: uniform crossover, including parameterized uniform crossover, as well as $k$\nobreakdash-point crossovers for any~$k$. The following result even includes biased operators like a bit-wise OR, which induces a tendency to increase the number of 1\nobreakdash-bits.
\begin{theorem}
\label{the:lower-bound-GA-onemax-p}
Consider the greedy~\TwoGA with mutation rate~$0 < p \le 1/(\sqrt{n} \log n)$ using an arbitrary mask-based crossover operator.
Its expected optimization time on \ONEMAX is at least
\[
\frac{\min\{\ln n, \ln(1/(p^2 n))\} - O(\log \log n)}{(1+\max_k\{\frac{(pn)^k}{k!k!}\}) \cdot p(1-p)^{n}}.
\]
\end{theorem}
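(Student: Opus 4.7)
The plan is to prove the lower bound via a fitness-level / amortised-probability argument, generalising the technique of Sudholt~\cite{Sudholt2012c} used for mutation-only EAs by carefully controlling the extra contribution of crossover-enabled improvements. The dominant contribution to the expected running time comes from the last $\Theta(\ln n)$ fitness levels, so I first restrict attention to fitness levels $i \in [n-j_{\max}, n-1]$ with $\ln j_{\max} = \min\{\ln n, \ln(1/(p^2 n))\} - O(\log\log n)$. A standard Chernoff bound on the initial population ensures that with probability $1-o(1)$ the starting best fitness lies below $n-j_{\max}$, so all these levels must be traversed.

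The main technical step is to upper-bound, for each such level~$i$, the per-generation probability $q_i$ of producing an offspring with fitness $>i$. Because parent selection is greedy, at any generation the two-member parent set is either two copies of a single search point~$x$ of fitness~$i$ (case~A) or two distinct points $x,y$ both of fitness~$i$ at Hamming distance~$2d$ for some $d \ge 1$ (case~B). In case~A, any mask-based crossover of identical parents returns~$x$, so the generation reduces to a single standard bit mutation of~$x$ and the improvement probability is at most $(n-i)\,p(1-p)^{n-1}(1+o(1))$ by the \EA analysis in~\cite{Sudholt2012c}. In case~B, an adversarial mask can place all $2d$ disagreement bits to~$1$ and, together with a silent mutation, yield an improvement with probability up to~$(1-p)^n$. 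The key idea is to charge every such potential crossover improvement back to the \emph{earlier} neutral mutation (from a case-A generation at the same level~$i$) that created the distance-$2d$ diversity in the first place: since tie-breaking preserves diversity once created, each case-B improvement corresponds to a prior diversity-creating mutation whose per-generation probability is at most $\binom{i}{d}\binom{n-i}{d}\, p^{2d}(1-p)^{n-2d}$. After combinatorial estimates that isolate a factor $(n-i)\,p(1-p)^n$ matching the case-A bound, the excess contribution of distance~$2d$ reduces to a factor $(pn)^d/(d!\,d!)$ up to lower-order terms, and replacing the sum over~$d$ by its supremum yields
\[
q_i \;\le\; (n-i)\, p(1-p)^n \left(1 + \max_{k} \frac{(pn)^k}{k!\,k!}\right) (1+o(1)).
\]

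The classical fitness-level bound $E[T_i] \ge 1/q_i$ together with summation over $i \in [n-j_{\max}, n-1]$ then yields a harmonic sum $\sum_{j=1}^{j_{\max}} 1/j \ge \ln j_{\max} - O(1) = \min\{\ln n, \ln(1/(p^2 n))\} - O(\log\log n)$, producing the numerator of the claimed bound. The restriction $p \le 1/(\sqrt{n}\log n)$ guarantees that all small-order corrections absorbed above remain genuinely of lower order across the relevant range of fitness levels, and that $(1-p)^n$ may replace $(1-p)^{n-1}$ without asymptotic loss.

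The hard part will be the amortisation in the second step. Because mask-based crossover is arbitrary and can potentially exploit any diversity with probability close to~$1$, one cannot simply multiply a ``probability of being in a diverse state'' by an ``improvement probability given diversity''. Instead one must carefully charge each potential crossover improvement to the originating neutral mutation, ensuring that the combinatorial bookkeeping yields exactly the factor $(pn)^k/(k!\,k!)$ per distance~$2k$ without double-counting improvements across the case-A and case-B paths, while also accounting for possible transitions between different distance-$2d$ states during a single stay at fitness level~$i$.
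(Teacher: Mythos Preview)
Your core intuition---that every crossover-enabled improvement at level~$i$ can be charged back to the neutral mutation that first created the Hamming-distance-$2d$ pair---is exactly the paper's idea. The paper formalises this by analysing a \emph{sped-up} GA in which any offspring on the current level is immediately and optimally recombined with its parent (bit-wise OR); this collapses your cases~A and~B into a single Markov transition from level~$i$ to level~$i+k$ whose probability equals $\Prob(\text{$k$ more 0-bits than 1-bits flip})+\Prob(\text{$k$ 0-bits and $k$ 1-bits flip})$. The resulting chain on fitness levels is then fed into the lower-bound fitness-level theorem (Theorem~\ref{the:lower-bound-method} here).

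Where your write-up goes wrong is in the formal link between the amortisation and the bound. You define $q_i$ as ``the per-generation probability of producing an offspring with fitness $>i$'' and then invoke ``the classical fitness-level bound $\E{T_i}\ge 1/q_i$''. But as you yourself note, in case~B an adversarial mask succeeds with probability close to $(1-p)^n$, so your $q_i$ is \emph{not} an upper bound on the per-step improvement probability, and the classical bound does not apply. What the charging argument actually yields is $\E{T_i}\ge 1/(p_{\text{improve}}+p_{\text{neutral}})$, the reciprocal of the probability of \emph{leaving case~A}; this holds because every visit to level~$i$ begins in case~A (greedy selection ensures a single best individual after any improvement) and case~A is never re-entered from case~B under the duplicate-avoiding tie-break. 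You should state it that way rather than appealing to the per-step improvement probability.

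There is a second gap: even with the corrected per-level bound, the sum $\sum_i 1/q_i$ is not automatically a lower bound on $\E{T}$, because both mutation (multi-bit flips) and the charged crossover step (bit-wise OR of a distance-$2d$ pair jumps by~$d$) can skip fitness levels. You need to control $\Prob(\text{level $i$ is visited})$, or equivalently the conditional distribution over landing levels. The paper handles this by casting the sped-up GA as a level process with transition probabilities $p_{i,i+k}$ and applying Theorem~\ref{the:lower-bound-method}, whose $\chi$ factor (here $\chi=1-O(1/\log n)$) is precisely the correction for level-skipping. Without such a device your harmonic sum is not justified; in particular, the factor $\max_k (pn)^k/(k!k!)$ appearing in the denominator of the claim comes not only from the \emph{rate} of distance-$2k$ neutral mutations but also from the \emph{size} of the jump they enable in the sped-up chain, and both roles must be tracked when verifying the $\gamma_{i,j}$ and $\chi$ conditions.
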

Before giving the proof, note that for $p = c/n$ with $0 < c \le 4$ constant, ${\max_k\{\frac{(pn)^k}{k!k!}\} = pn}$ as  for $0 < pn \le 4$ and $i \in \N$
$\frac{(pn)^{i+1}}{(i+1)!(i+1)!} = \frac{pn}{(i+1)^2} \cdot \frac{(pn)^i}{i!i!} \le \frac{(pn)^i}{i!i!}$, hence a maximum is attained for $k=1$. Then the lower bound from Theorem~\ref{the:lower-bound-GA-onemax-p} is
\[
\frac{n \ln n }{c \cdot e^{-c} \cdot (1+c)} - O(n \log \log n).
\]
This matches the upper bound~\eqref{eq:upper-bound-GAs-one-plus-o-one} up to small order terms, showing for the greedy \TwoGA that the new term $1+c$ in the denominator of the bound from Theorem~\ref{the:upper-bound-uniform-crossover-variable-p-every-GA} was not a coincidence.
For $p > 4/n$, the lower bound is at least
\[
(e+\Omega(1)) \cdot n \ln n.
\]
Together, this establishes the optimal mutation rate for the greedy~\TwoGA on \ONEMAX.
\begin{theorem}
\label{the:optimal-mutation-rate}
For the greedy~\TwoGA with uniform crossover on \ONEMAX mutation rate $p=(1+\sqrt{5})/(2n)$ minimizes the expected number of function evaluations, up to small-order terms.
\end{theorem}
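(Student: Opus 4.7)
The proof combines the upper bound from Corollary~\ref{the:best-runtime} with the matching lower bound from Theorem~\ref{the:lower-bound-GA-onemax-p} and reduces the claim to minimizing a one-variable function. Writing $p = c/n$ with $c > 0$ constant, Theorem~\ref{the:upper-bound-uniform-crossover-variable-p-every-GA} (applied to the greedy~\TwoGA with $p_c = 1$, compare Remark~\ref{rem:on-conditions-for-pc}) yields expected optimization time at most $\frac{n \ln n}{c(1+c)e^{-c}}(1 + o(1))$, while Theorem~\ref{the:lower-bound-GA-onemax-p} gives the matching lower bound $\frac{n \ln n}{c(1+c)e^{-c}} - O(n \log \log n)$ whenever $0 < c \le 4$, since in that range $\max_k\{(pn)^k/(k!k!)\} = pn = c$, as noted below the statement of that theorem.

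Hence, up to lower-order terms, the leading coefficient of the expected running time equals $1/g(c)$ where $g(c) := c(1+c)e^{-c}$. Differentiating yields $g'(c) = e^{-c}(1 + c - c^2)$, whose unique positive root is $c^\ast = (1+\sqrt{5})/2$. Since $g'(c) > 0$ for $0 < c < c^\ast$ and $g'(c) < 0$ for $c > c^\ast$, the golden ratio $c^\ast$ is the unique maximiser of $g$ on $(0,\infty)$ and therefore minimises the leading coefficient of the expected running time among all $p = c/n$ with $c$ a positive constant.

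It remains to rule out that some $p$ outside the regime $p = \Theta(1/n)$ achieves a strictly smaller leading order. For $p = c/n$ with $c \to 0$ or $c \to \infty$ (but still $c \le 4$), the factor $1/(c e^{-c})$ in Theorem~\ref{the:lower-bound-GA-onemax-p} already diverges, while the $1 + \max_k\{(pn)^k/(k!k!)\}$ correction contributes only a bounded factor. For $p > 4/n$ within the validity range $p \le 1/(\sqrt{n}\log n)$, the paragraph preceding Theorem~\ref{the:optimal-mutation-rate} already records a lower bound of $(e + \Omega(1))\, n \ln n$, strictly exceeding the $1.19\,n \ln n$ achieved at $c^\ast$. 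Finally, for $p > 1/(\sqrt{n}\log n)$ a direct observation suffices: the probability $(1-p)^{n-1} \le e^{-\sqrt{n}/\log n + o(1)}$ that mutation preserves all but one bit of its parent is super-polynomially small, so even from a Hamming-distance-$1$ parent the expected waiting time for producing the optimum is super-polynomial and thus far larger than $1.19\,n \ln n$.

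The main obstacle is neither the calculus nor the case distinction, but confirming that the crucial factor $1 + c$ appears in \emph{both} the upper and the lower bound: this is precisely the new term arising from neutral mutations being turned into improvements by crossover (Cases~$i.2$ in Theorem~\ref{the:upper-bound-uniform-crossover-variable-p-every-GA}) and, correspondingly, from the $1 + \max_k\{(pn)^k/(k!k!)\}$ correction in Theorem~\ref{the:lower-bound-GA-onemax-p}; it is what shifts the optimum away from $1/n$. Given those two bounds, the present theorem reduces to the elementary single-variable optimisation of $g(c)$ carried out above.
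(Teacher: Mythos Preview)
Your proposal is correct and follows essentially the same approach as the paper: the paper does not give a separate formal proof of this theorem but establishes it in the discussion immediately preceding the statement, by combining the upper bound from Theorem~\ref{the:upper-bound-uniform-crossover-variable-p-every-GA}/Corollary~\ref{the:best-runtime} with the lower bound from Theorem~\ref{the:lower-bound-GA-onemax-p}, noting that $\max_k\{(pn)^k/(k!k!)\} = pn$ for $0 < c \le 4$, and observing that for $p > 4/n$ the lower bound already exceeds $(e+\Omega(1))\,n\ln n$. Your write-up is in fact more explicit than the paper's---you spell out the calculus for $g(c) = c(1+c)e^{-c}$ and you additionally treat the regime $p > 1/(\sqrt{n}\log n)$, which the paper does not address directly.
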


For the proof of Theorem~\ref{the:lower-bound-GA-onemax-p} we use the following lower-bound technique based on fitness levels by the author~\cite{Sudholt2012c}.
\begin{theorem}[Sudholt~\cite{Sudholt2012c}]
\label{the:lower-bound-method}
Consider a partition of the search space into non-empty sets
$A_1, \dots, A_m$.
For a search algorithm~$\mathcal{A}$ we say that it is in $A_i$ or on level $i$ if the best individual created so far is in~$A_i$.
If there are $\chi, u_i, \gamma_{i, j}$ for $i < j$ where
\begin{enumerate}
\item the probability of traversing from level~$i$ to level~$j$ in one step is at most $u_i \gamma_{i, j}$ for all $i < j$,
\item $\sum_{j=i+1}^{m} \gamma_{i, j} = 1$ for all $i$, and
\item $\gamma_{i, j} \ge \chi \sum_{k=j}^{m} \gamma_{i, k}$ for all $i < j$ and some $0 \le \chi \le 1$,
\end{enumerate}
then the expected hitting time of $A_m$ is at least
\begin{align}
& \sum_{i=1}^{m-1} \Prob(\text{$\mathcal{A}$ starts in $A_i$}) \cdot \chi \sum_{j=i}^{m-1} \frac{1}{u_j}.\label{eq:simple-lower-bound}
\end{align}
\end{theorem}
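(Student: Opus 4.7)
The plan is to reduce the claim to a per-level lower bound and establish it by induction, viewing the run as a Markov chain on the levels $A_1,\dots,A_m$. By the law of total expectation, it suffices to show, for each starting level $i$, that
\[
E\bigl[T \,\bigm|\, \text{$\mathcal{A}$ starts in $A_i$}\bigr] \;\ge\; \chi \sum_{j=i}^{m-1} \frac{1}{u_j}.
\]
Write $p_{i,j}$ for the one-step transition probability from level $i$ to level $j>i$, $p_i=\sum_{j>i}p_{i,j}$ for the leaving probability at level $i$, and $L_i = \sum_{j=i}^{m-1} 1/u_j$. From condition~1 together with condition~2 we immediately obtain the useful fact $p_i \le u_i$, hence the expected waiting time at level $i$ satisfies $1/p_i \ge 1/u_i$.

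I would then run an induction on $i$ going from $m$ down to $1$, with induction hypothesis $E[T_i] \ge \chi L_i$. The base case $i=m$ is trivial. For the step, I expand
\[
E[T_i] \;=\; \frac{1}{p_i} + \sum_{j>i} \frac{p_{i,j}}{p_i}\, E[T_j] \;\ge\; \frac{1}{p_i} + \frac{\chi}{p_i} \sum_{j>i} p_{i,j} L_j,
\]
and, after swapping the order of summation on the right, the desired bound reduces to controlling a tail expression of the form
\[
\sum_{k=i+1}^{m-1} \frac{1}{u_k} \sum_{j>k} p_{i,j},
\]
where $\sum_{j>k}p_{i,j}$ is exactly the probability of ``overshooting'' level $k$ in a single step starting at level $i$. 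This is where condition~3 enters: writing $S_i(k)=\sum_{j>k}\gamma_{i,j}$, condition~3 says $\gamma_{i,k+1}\ge \chi S_i(k)$, which telescopes into the geometric tail bound $S_i(i+t)\le (1-\chi)^t$. Combined with condition~1 this yields $\sum_{j>k} p_{i,j} \le u_i S_i(k)$, so the overshoot decays geometrically in the number of levels skipped.

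The key step is then an algebraic manipulation showing that the waiting-time slack $1/p_i - \chi/u_i$ at level $i$, together with the geometric decay of $S_i(\cdot)$, absorbs precisely the defect $\chi \sum_{k=i+1}^{m-1}(1/u_k)\sum_{j>k}p_{i,j}/p_i$ created by long jumps. Intuitively, each level $j$ is entered with probability at least $\chi$ conditional on not having been skipped, and the factor $\chi$ in the final bound is exactly the price paid for the possibility of skipping. I anticipate this last manipulation, and in particular verifying that it is robust to arbitrary variation of the $u_k$'s across levels, to be the main technical obstacle; I expect to need to re-examine the induction with a slightly refined potential $\phi(i) = \chi L_i$ plus a correction term that tracks the excess $1/p_i-1/u_i$ of waiting time, so that the recursion closes cleanly for every admissible transition matrix. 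Once the per-level bound is established, summing over the starting distribution gives~\eqref{eq:simple-lower-bound}.
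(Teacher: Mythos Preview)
This theorem is not proved in the present paper; it is quoted from \cite{Sudholt2012c} and then used as a black box in the proof of Theorem~\ref{the:lower-bound-GA-onemax-p}. There is thus no ``paper's own proof'' here to compare your attempt against.

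As for your attempt itself, it is a sketch rather than a proof. You correctly derive $p_i\le u_i$, set up the natural backward induction $E[T_i]\ge\chi L_i$, and reduce the inductive step to the inequality
\[
\frac{1}{p_i}-\frac{\chi}{u_i}\;\ge\;\frac{\chi}{p_i}\sum_{k=i+1}^{m-1}\frac{1}{u_k}\sum_{j>k}p_{i,j},
\]
but then you stop, flagging this as ``the main technical obstacle'' and speculating that some unspecified correction to the potential will make it close. This is not a mere technicality. Plugging in the only control you have, namely $\sum_{j>k}p_{i,j}\le u_i(1-\chi)^{k-i}$, the right-hand side becomes $\chi\,(u_i/p_i)\sum_{k>i}(1-\chi)^{k-i}/u_k$, which can be arbitrarily large relative to the left-hand side whenever some later $u_k$ is small compared to $u_i$; the slack $1/p_i-\chi/u_i$ does not scale with those $1/u_k$ at all. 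So the induction with potential $\chi L_i$ and only the geometric tail bound on overshoots simply does not close from the one-step recurrence, and the ``correction term'' you allude to would have to carry the entire argument---yet you have not said what it is or why it works uniformly in the $u_k$. You should consult the actual proof in \cite{Sudholt2012c}, which is organised differently.
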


\begin{proof}[Proof of Theorem~\ref{the:lower-bound-GA-onemax-p}]
We prove a lower bound for the following sped-up GA instead of the original greedy~\TwoGA. Whenever it creates a new offspring with the same fitness, but a different bit string as the current best individual, we assume the following. The algorithm automatically performs a crossover between the two. Also, we assume that this crossover leads to the best possible offspring in a sense that all bits where both parents differ are set to~1 (i.\,e., the algorithm performs a bit-wise OR). That is, if both search points have $i$ 1\nobreakdash-bits and Hamming distance $2k$, then the resulting offspring has $i+k$ 1\nobreakdash-bits.

Due to our assumptions, at the end of each generation there is always a single best individual.
For this reason we can model the algorithm by a Markov chain representing the current best fitness.


The analysis follows a lower bound for EAs on \ONEMAX~\cite[Theorem~9]{Sudholt2012c}.
As in~\cite{Sudholt2012c} we consider the following fitness-level partition that focuses only on the very last fitness values. Let $\ell = \ceil{n-\min\{n/\!\log n, 1/(p^2 n\log n)\}}$. Let $A_i = \{x \mid \ones{x} = i\}$ for $i > \ell$ and $A_{\ell}$ contain all remaining search points.
We know from~\cite{Sudholt2012c} that the GA is initialized in $A_{\ell}$ with probability at least $1-1/\log n$ if $n$ is large enough.

The probability $p_{i, i+k}$ that the sped-up GA makes a transition from fitness~$i$ to fitness~$i+k$ equals
\begin{align*}
p_{i, i+k} =\;& \Prob(\text{$k$ more 0\nobreakdash-bits than 1\nobreakdash-bits flip}) \;+ \\
& \Prob(\text{$k$ 0\nobreakdash-bits and $k$ 1\nobreakdash-bits flip})
\end{align*}
According to~\cite[Lemma~2]{Sudholt2012c}, for the considered fitness levels~$i > \ell$ the former probability is bounded by
\[
p^k (1-p)^{n-k} \cdot \frac{(n-i)^k}{k!} \cdot \left(1 + \frac{3}{5} \cdot \frac{i(n-i)p^2}{(1-p)^2}\right).
\]
%
The latter probability is bounded by
\begin{align*}
& \Prob(\text{$k$ 0\nobreakdash-bits flip}) \cdot \Prob(\text{$k$ 1\nobreakdash-bits flip})\\
\le\;& \frac{(n-i)^k}{k!} \cdot p^k(1-p)^{n-i-k} \cdot \frac{i^k}{k!} \cdot p^k(1-p)^{i-k}\\
\le\;& \frac{(n-i)^k}{k!} \cdot p^k(1-p)^{n} \cdot \frac{(pn)^k}{(1-p)^{2k} \cdot k!}.
\end{align*}

Together, $p_{i, i+k}$ is at most
\begin{align*}
& (p(n-i))^{k} (1-p)^{n} \left(1 + \frac{3}{5} \cdot \frac{i(n-i)p^2}{(1-p)^{2+k}} + \frac{(pn)^k}{(1-p)^{2k} \cdot k!k!}\right).
\end{align*}

We need to find variables $u_i$ and $\gamma_{i, i+k}$ along with some $0 \le \chi \le 1$ such that all conditions of Theorem~\ref{the:lower-bound-method} are fulfilled.
%
Define
\[
u_i' :=
p (1-p)^{n}  (n-i)
   \left(1 + \frac{3}{5} \cdot \frac{i(n-i)p^2}{(1-p)^3} + \frac{1}{(1-p)^{2}} \cdot \maxk\right)
\]
and
\[
\gamma_{i,i+k}' := \left(\frac{p(n-i)}{(1-p)^2}\right)^{k-1}.
\]
Observe that, for every $k \in \N$,
\begin{align*}
u_i' \gamma_{i, i+k}'
\ge\;& p^k (1-p)^{n}  (n-i)^k  \left(1 + \frac{3}{5} \cdot \frac{i(n-i)p^2}{(1-p)^{1+2k}} + \frac{1}{(1-p)^{2k}} \cdot \frac{(pn)^k}{k!k!}\right)\\
\ge\;& p^k (1-p)^{n}  (n-i)^k  \left(1 + \frac{3}{5} \cdot \frac{i(n-i)p^2}{(1-p)^{2+k}} + \frac{(pn)^k}{(1-p)^{2k} \cdot k!k!}\right)\\
\ge\;& p_{i, i+k}.
\end{align*}
In order to fulfill the second condition in Theorem~\ref{the:lower-bound-method}, we consider the following normalized variables:
$u_i := u_i' \cdot \sum_{j=i+1}^{n} \gamma_{i, j}'$ and $\gamma_{i, j} := \frac{\gamma_{i, j}'}{\sum_{j=i+1}^{n} \gamma_{i, j}'}$.
As $u_i \gamma_{i, j} = u_i' \gamma_{i, j}' \ge p_{i, j}$, this proves the first condition of Theorem~\ref{the:lower-bound-method}.

Following the proof of Theorem~9 in~\cite{Sudholt2012c}, it is easy to show that for ${\chi := 1-\frac{1}{(1-p)^2 \log n}}$ we get $\gamma_{i, j} \ge \chi \sum_{k=j}^{m} \gamma_{i, k}$ for all $i, j$ with $j > i$ (the calculations on~\cite[pp.\ 427--428]{Sudholt2012c} carry over by replacing ``$(1-p)$'' with ``$(1-p)^2$''). This establishes the third and last condition.

As $\gamma_{i, j} \ge \chi \sum_{k=j}^{m} \gamma_{i, k}$ is equivalent to
$\gamma_{i, j}' \ge \chi \sum_{k=j}^{m} \gamma_{i, k}'$, we get
\[
\sum_{j=i+1}^n \gamma_{i, j}' \le \frac{\gamma_{i, i+1}'}{\chi} \le \frac{1}{\chi},
\]
which implies, using $i(n-i)p^2 \le n(n-\ell)p^2 \le \frac{1}{\log n}$~\cite[(12)]{Sudholt2012c} as well as $1+x \le 1/(1-x)$ for $x < 1$,
\begin{align*}
u_i \le\; &
p (1-p)^{n} \cdot (n-i) \cdot \frac{1}{\chi}
\cdot \left(1 + \frac{3}{5} \cdot \frac{i(n-i)p^2}{(1-p)^3} + \frac{1}{(1-p)^{2}} \cdot \max_k\left(\frac{(pn)^k}{k!k!}\right)\right)\\
\le\; &
p (1-p)^{n-3} \cdot (n-i) \cdot \frac{1}{\chi}
\cdot \left(1 + \frac{3}{5\log n} + \max_k\left(\frac{(pn)^k}{k!k!}\right)\right)\\
\le\; &
p (1-p)^{n-3} \cdot (n-i) \cdot \frac{1}{\chi} \cdot \left(\frac{1}{1- \frac{3}{5\log n}} + \max_k\left(\frac{(pn)^k}{k!k!}\right)\right)\\
\le\; &
p (1-p)^{n-3} \cdot (n-i) \cdot \frac{1}{\chi} \cdot \frac{1 + \max_k\left(\frac{(pn)^k}{k!k!}\right)}{1- \frac{3}{5\log n}}.
\end{align*}

Invoking Theorem~\ref{the:lower-bound-method} and recalling that the first fitness level is reached with probability at least $1-1/\log n$, we get a lower bound of
\begin{align*}
& \left(1 - \frac{1}{\log n}\right) \chi \sum_{i=\ell}^{n-1} \frac{1}{u_i}\\
\ge\;&
\left(1 - \frac{1}{\log n}\right) \chi^2
\cdot
\frac{1- \frac{3}{5 \log n}}
{1 + \max_k\left(\frac{(pn)^k}{k!k!}\right)}
\cdot \frac{(1-p)^3}{p(1-p)^{n}} \sum_{i=\ell}^{n-1} \frac{1}{n-i}\\
\ge\;&
\left(1 - \mathord{O}\mathord{\left(\frac{1}{\log n}\right)}\right) \cdot \frac{1}{1+\maxk} \cdot \frac{1}{p(1-p)^n} \sum_{i=\ell}^{n-1} \frac{1}{n-i}
\end{align*}
where in the last step we used that all factors $\chi,  1- \frac{3}{5 \log n}$, and $1 - p$ are $1 - \mathord{O}\mathord{\left(\frac{1}{\log n}\right)}$, and $\left(1-\frac{c}{\log n}\right)^{d} \ge 1 - \frac{cd}{\log n}$ for any positive constants $c, d$.
Bounding $\sum_{i=\ell}^{n-1} \frac{1}{n-i} \le \ln(\min\{n, 1/(p^2n)\}) - \ln(\log n)$ as in~\cite{Sudholt2012c} and absorbing all small-order terms in the $-O(\log \log n)$ term from the statement gives the claimed bound.
\end{proof}

\pgfplotsset{every axis legend/.append style={
at={(0.5,1)},
anchor=north}}

We also ran experiments to see whether the outcome matches our inspection of the dominating terms in the running time bounds for realistic problem dimensions. We chose $n=1000$ bits and recorded the average optimization time over 1000 runs. The mutation rate~$p$ was set to~$c/n$ with $c \in \{0.1, 0.2, \dots, 4\}$. The result is shown in Figure~\ref{fig:runtime-uniform}.

\begin{figure}[hbt]
\centerline{
\begin{tikzpicture}[scale=0.9]
\begin{axis}[legend cell align=left, xlabel={mutation rate},ylabel={number of evaluations},
xtick scale label code/.code={$\cdot 1/n$},
extra x ticks={0.001,0.001618033989},
extra x tick style={grid=major,xticklabel shift=1},
extra x tick labels={, $\frac{\sqrt{5}+1}{2}$},
]
\pgfplotstableread[header=false]{experiments-onemax/(1+1)EA-onemax-1000runs-n=1000-p=0.1-by-n,...,4-by-n.txt}\pOneOne
\pgfplotstableread[header=false]{experiments-onemax/greedy-GA-fastxover-onemax-1000runs-n=1000-p=0.1-by-n,...,4-by-n.txt}\pGreedyGA
\pgfplotstableread[header=false]{experiments-onemax/stat-greedy-GA-1ptxover-onemax-n=1000-p=0.1-by-n,...,4-by-n.txt}\pGAonept
\pgfplotstableread[header=false]{experiments-onemax/stat-greedy-GA-2ptxover-onemax-n=1000-p=0.1-by-n,...,4-by-n.txt}\pGAtwopt
\addplot table[x expr=0.1*0.001*(\thisrow{1}), y index=7]{\pOneOne};
\addplot table[x expr=0.1*0.001*(\thisrow{1}), y index=7]{\pGreedyGA};
\pgfplotsset{cycle list shift=-2}
\addplot[mark=none,blue] table[x expr=0.1*0.001*(\thisrow{1}), y expr=\thisrow{7} + \thisrow{11}]{\pOneOne};
\pgfplotsset{cycle list shift=-1}
\addplot[mark=none,blue] table[x expr=0.1*0.001*(\thisrow{1}), y expr=\thisrow{7} - \thisrow{11}]{\pOneOne};
\addplot[mark=none,red] table[x expr=0.1*0.001*(\thisrow{1}), y expr=\thisrow{7} + \thisrow{11}]{\pGreedyGA};
\pgfplotsset{cycle list shift=-1}
\addplot[mark=none,red] table[x expr=0.1*0.001*(\thisrow{1}), y expr=\thisrow{7} - \thisrow{11}]{\pGreedyGA};
\legend{\EA, \TwoGA{}+uniform}
\end{axis}
\end{tikzpicture}
}
\caption{Average optimization times for the \EA and the greedy~\TwoGA with uniform crossover on OneMax with $n=1000$ bits. The mutation rate~$p$ is set to~$c/n$ with $c \in \{0.1, 0.2, \dots, 4\}$. The thin lines show mean $\pm$ standard deviation.}
\label{fig:runtime-uniform}
\end{figure}
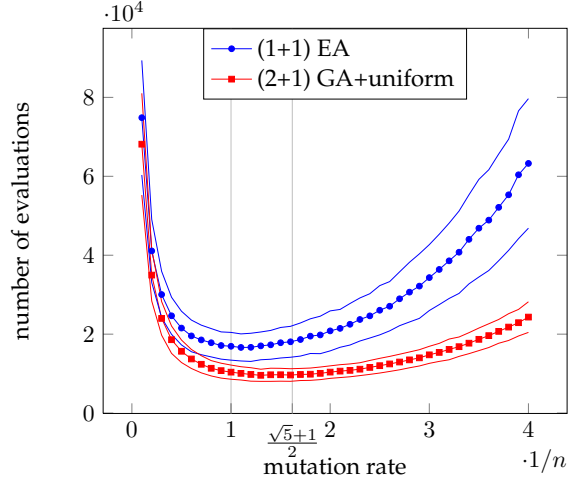

One can see that for every mutation rate the greedy~\TwoGA has a lower average optimization time. As predicted, the performance difference becomes larger as the mutation rate increases. The optimal mutation rates for both algorithms match with minimal average optimization times. Note that also the variance/standard deviation was much lower for the GA for higher mutation rates.
Preliminary runs for $n=100$ and $n=10000$ bits gave very similar results. 
More experiments and statistical tests are given in Section~\ref{sec:experiments}.

\section{\boldmath{\large$k$}-Point Crossover}
\label{sec:k-point-crossover}

The $k$\nobreakdash-point crossover operator picks $k$ cutting points from $\{1, \dots, n-1\}$ uniformly at random without replacement. These cutting points divide both parents into segments that are then assembled from alternating parents. That is, for parents $x, y$ and cutting points $1 \le \ell_1 < \ell_2 < \dots < \ell_k \le n-1$ the offspring will be:
\[
x_{1} \dots x_{\ell_1} \ y_{\ell_1+1} \dots y_{\ell_2} \ x_{\ell_2+1} \dots x_{\ell_3} \ y_{\ell_3+1} \dots y_{\ell_4} \quad \dots
\]
the suffix being $y_{\ell_k+1} \dots y_n$ if $k$ is odd and $x_{\ell_k+1} \dots x_n$ if $k$ is even.

For uniform crossover we have seen that populations containing different search points of equal fitness are beneficial as uniform crossover can easily combine the good ``building blocks''. This holds regardless of the Hamming distance between these different individuals, and the position of bits where individuals differ.

The \mlGA with $k$\nobreakdash-point crossover is harder to analyse as there the probability of crossover creating an improvement depends on the Hamming distance of parents and the position of differing bits.

Consider parents that differ in two bits, where these bit positions are quite close. Then $1$-point crossover has a high probability of taking both bits from the same parent. In order to recombine those building blocks, the cutting point has to be chosen in between the two bit positions. A similar effect occurs for 2-point crossover also if the two bit positions are on opposite ends of the bit string.

The following lemma gives a lower bound on the probability that $k$\nobreakdash-point crossover combines the right building blocks on \ONEMAX, if two parents are equally fit and differ in two bits. The lemma and its proof may be of independent interest.
\begin{lemma}
\label{lem:crossover-successful}
Consider two search points $x, y$ with ${x_i = 1}, {x_{i+d} = 0}, {y_i = 0}, {y_{i+d} = 1}$ for $1 \le i < i+d \le n$ and $x_s = y_s$ for $s \notin \{i, i+d\}$.
The probability of $k$\nobreakdash-point crossover of $x$ and~$y$, for any $1 \le k \le N-1$, where $N := n-1 \ge 4$ is the number of possible cutting points, creating an offspring with a larger number of 1\nobreakdash-bits is at least
\[
\frac{d(N-d)}{N(N-1)}
\]
and exactly $d/N$ for $k=1$.
\end{lemma}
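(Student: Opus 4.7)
The plan is to reduce success to a parity condition on a hypergeometric count, and then to bound that probability via a short involution argument combined with the intrinsic symmetry of the hypergeometric distribution.

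First I would observe that, since $x$ and $y$ agree outside $\{i, i+d\}$ and have equal numbers of 1-bits, an offspring has strictly more 1-bits than the parents iff both of its bits at positions $i$ and $i+d$ equal~$1$, which forces position $i$ to be inherited from $x$ and position $i+d$ from $y$. Under $k$-point crossover the parent contributing position $p$ is determined by the parity of the number of chosen cuts lying in $\{1, \ldots, p-1\}$, so the improvement event becomes a joint parity condition on cuts in $\{1, \ldots, i-1\}$ and in $M := \{i, \ldots, i+d-1\}$. Accounting for the two possible orderings of the parents in the crossover invocation (equivalently, the two offspring that a given cut pattern can produce depending on which parent contributes the first segment), these joint conditions collapse to the single clean event ``$B := |\{\ell_1, \ldots, \ell_k\} \cap M|$ is odd''. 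The case $k=1$ is then immediate: the unique cut lies in $M$ with probability exactly $d/N$, which also exceeds the general bound since $d/N \ge d(N-d)/(N(N-1))$ for $d \ge 1$.

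For $k \ge 2$ I would lower-bound $\Prob(B \text{ odd})$, where $B \sim \Hyp{N}{d}{k}$, via the following swap involution. Pick any $p \in M$ and any $q \notin M$ (possible since $1 \le d \le N-1$), and consider the map $\sigma \colon S \mapsto S \triangle \{p, q\}$ on $k$-subsets $S$ of $\{1, \ldots, N\}$. Restricted to subsets with $|S \cap \{p, q\}| = 1$, $\sigma$ is a size-preserving involution that flips the parity of $B$, so within this family exactly half of the subsets have $B$ odd. Using $\Prob(|S \cap \{p, q\}| = 1) = 2k(N-k)/(N(N-1))$ for a uniformly random $k$-subset, this yields $\Prob(B \text{ odd}) \ge k(N-k)/(N(N-1))$.

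To obtain the advertised bound $d(N-d)/(N(N-1))$, which is asymmetric in $k$ and $d$, I would invoke the standard symmetry $\Hyp{N}{d}{k} = \Hyp{N}{k}{d}$ in distribution (picking $k$ items out of $N$ with $d$ marked has the same law as picking $d$ markers out of $N$ with $k$ pre-chosen). Re-running the swap argument in this dual picture, where one now fixes the cut set and pairs positions inside and outside the ``marked'' set, yields $\Prob(B \text{ odd}) \ge d(N-d)/(N(N-1))$; combining both inequalities gives $\Prob(B \text{ odd}) \ge \max\{k(N-k),\, d(N-d)\}/(N(N-1))$, which is at least the claimed quantity. The main obstacle I anticipate is the reduction itself: one must carefully track how the parities of cuts in the prefix $\{1, \ldots, i-1\}$ and in $M$ interact under each of the two possible starting parents, and verify that taking their union collapses precisely to ``$B$ odd''. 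Once this reduction is in place, the rest is essentially a one-line combinatorial identity plus hypergeometric symmetry, so the bulk of the work lies in getting the reduction right.
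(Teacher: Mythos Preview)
Your proof is correct and takes a genuinely different route from the paper. Both reduce the improvement event to ``the number $B$ of cuts in $M = \{i, \dots, i+d-1\}$ is odd'' (your explicit handling of the two parent orderings makes this collapse transparent), and both then lower-bound $P(N,d,k) := \Prob(B \text{ odd})$ for $B \sim \Hyp{N}{d}{k}$. The paper derives the recurrence $P(N,d,k) = \frac{d}{N}\bigl(1 - P(N{-}1,d{-}1,k{-}1)\bigr) + \frac{N-d}{N}\, P(N{-}1,d,k{-}1)$ and proves by induction on~$k$ that $\frac{d(N-d)}{N(N-1)} \le P(N,d,k) \le 1 - \frac{d(N-d)}{N(N-1)}$, the upper bound being needed to close the induction. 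Your swap involution avoids induction entirely: pairing $k$-subsets containing exactly one of a fixed $p \in M$, $q \notin M$ via $S \mapsto S \triangle \{p,q\}$ directly gives $P(N,d,k) \ge \frac{k(N-k)}{N(N-1)}$, and the distributional identity $\Hyp{N}{d}{k} = \Hyp{N}{k}{d}$ then converts this into the stated bound in~$d$. Your route is shorter and yields the symmetric strengthening $P(N,d,k) \ge \max\{k(N-k),\, d(N-d)\}/(N(N-1))$; the paper's inductive route gives, as a by-product, the matching upper bound $1 - \frac{d(N-d)}{N(N-1)}$, which your argument does not directly produce but which the lemma does not require.
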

\begin{proof}
%
We identify cutting points with bits such that cutting point~$a$ results in two strings $x_1 \dots x_a$ and $x_{a+1} \dots x_n$.
We say that a cutting point~$a$ separates $i$ and $i+d$ if $a \in \{i, \dots, i+d-1\}$. Note that the prefix is always taken from~$x$. The claim now follows from showing that the number of separating cutting points is odd with the claimed probability.

Let $X_{N, d, k}$ be the random variables that describes the number of cutting points separating $i$ and~$i+d$. This variable follows a hypergeometric distribution $\Hyp{N}{d}{k}$, illustrated by the following urn model with red and white balls. The urn contains $N$ balls, $d$ of which are red. We draw $k$ balls uniformly at random, without replacement. Then $X_{N, d, k}$ describes the number of red balls drawn. We define the probability of $X_{N, d, k}$ being odd, for $1 \le d \le N-1$ and $1 \le k \le N-1$ as
\[
P(N, d, k) := \sum_{x=1, \ x \ \mathrm{odd}}^k \Prob(X_{N, d, k} = x) = \sum_{x=1, \ x \ \mathrm{odd}}^k \frac{\binom{d}{x}\binom{N-d}{k-x}}{\binom{N}{k}}.
\]
Note that for $k=1$
\[
P(N, d, 1) = \frac{\binom{d}{1}\binom{N-d}{0}}{\binom{N}{1}} = \frac{d}{N}
\]
and for $k=2$
\[
P(N, d, 2) = \frac{\binom{d}{1}\binom{N-d}{1}}{\binom{N}{2}} = \frac{2d(N-d)}{N(N-1)}.
\]
For all $1 \le d \le N-1$ and all~$1 \le k \le N-1$ the following recurrence holds. Imagine drawing the first cutting point separately. With probability $d/N$, the cutting point is a separating cutting point, and then we need an even number of further separating cutting points among the remaining $k-1$ cutting points, drawn from a random variable $X_{N-1, d-1, k-1}$. With the remaining probability $(N-d)/N$, the number of remaining cutting points must be even, and this number is drawn from a random variable $X_{N-1, d, k-1}$. Hence
\begin{align}
\label{eq:k-point-recurrence}
P(N, d, k) =\;& \frac{d}{N} \cdot (1 - P(N-1, d-1, k-1)) + \frac{N-d}{N} \cdot P(N-1, d, k-1).
\end{align}
Assume for an induction that for all $2 \le k' < k$
\begin{equation}
\label{eq:k-point-induction-assumption}
\frac{d(N-d)}{N(N-1)} \le P(N, d, k') \le 1 - \frac{d(N-d)}{N(N-1)}.
\end{equation}
This is true for $k'=2$ as, using $3d(N-d) \le 3 \cdot (N/2)^2 \le N(N-1)$ for $N \ge 4$,
\[
P(N, d, 2) = \frac{2d(N-d)}{N(N-1)} = \frac{3d(N-d) - d(N-d)}{N(N-1)} \le 1 - \frac{d(N-d)}{N(N-1)}.
\]
For $k > 2$, combining~\eqref{eq:k-point-recurrence} and~\eqref{eq:k-point-induction-assumption} yields
\begin{align*}
P(N, d, k) =\;& \frac{d}{N} \cdot (1 - P(N-1, d-1, k-1)) + \frac{N-d}{N} \cdot P(N-1, d, k-1)\\
\ge\;& \frac{d}{N} \cdot \frac{(d-1)(N-d)}{(N-1)(N-2)} + \frac{N-d}{N} \cdot \frac{d(N-d-1)}{(N-1)(N-2)}\\
=\;& \frac{d(N-d)(d-1 + N-d-1)}{N(N-1)(N-2)}\\
=\;& \frac{d(N-d)}{N(N-1)}.
\end{align*}
The upper bound follows similarly:
\begin{align*}
P(N, d, k)
\le\;& \frac{d}{N} \cdot \left(1 - \frac{(d-1)(N-d)}{(N-1)(N-2)}\right) + \frac{N-d}{N} \cdot \left(1 - \frac{d(N-d-1)}{(N-1)(N-2)}\right)\\
=\;& 1 - \frac{d(N-d)(d-1 + N-d-1)}{N(N-1)(N-2)}\\
=\;& 1 - \frac{d(N-d)}{N(N-1)}.
\end{align*}
By induction, the claim follows.
\end{proof}

In the setting of Lemma~\ref{lem:crossover-successful}, the probability of $k$\nobreakdash-point crossover creating an improvement depends on the distance between the two differing bits.
Fortunately, for search points that result from a mutation of one another, this distance has a favourable distribution. This is made precise in the following lemma.
\begin{lemma}
\label{lem:bit-distance-distribution}
Let $x'$ result from $x$ by a mutation flipping one 1-bit and one 0-bit, where the positions~$i, j$ of these bits are chosen uniformly among all 1\nobreakdash-bits and 0\nobreakdash-bits, respectively.
Then for $d := |i-j|$ the random variable $\min\{d, n-d\}$ stochastically dominates the uniform distribution on $\{1, \dots, n/4\}$.
\end{lemma}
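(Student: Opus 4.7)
My plan is to reinterpret $\min\{|i-j|, n-|i-j|\}$ as the cyclic distance between $i$ and $j$ on a cycle of length $n$ (where $1$ and $n$ are adjacent), which I will denote $\mathrm{dist}(i,j)$. For the target dominating distribution $U$, uniform on $\{1,\dots,\lfloor n/4\rfloor\}$, one has $\Pr(U < t) = (t-1)/\lfloor n/4\rfloor \le 4(t-1)/n$, so stochastic dominance is equivalent to showing
\[
\Pr(\mathrm{dist}(i,j) < t) \le \frac{4(t-1)}{n}
\]
for every $t \in \{1,\dots,\lfloor n/4\rfloor\}$. The case $t=1$ is immediate since $i\neq j$ forces $\mathrm{dist}(i,j)\ge 1$.

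Let $k := \ones{x}$, so the set $S$ of 1-positions has size $k$ and the set $T$ of 0-positions has size $n-k$. For each $v$ with $1 \le v < n/2$, let $M_v$ denote the number of unordered pairs $\{a,b\}$ of positions at cyclic distance exactly $v$ with $x_a \neq x_b$. Since $i$ and $j$ are drawn independently and uniformly from $S$ and $T$, each unordered pair counted by $M_v$ corresponds to exactly one ordered pair in $S\times T$ with cyclic distance $v$, giving
\[
\Pr(\mathrm{dist}(i,j)=v) = \frac{M_v}{k(n-k)}.
\]
The key step is a double-counting bound on $M_v$: every position on the cycle has exactly two neighbors at cyclic distance $v$, so the pairs contributing to $M_v$ have at most $2k$ incidences on the 1-bit side and, symmetrically, at most $2(n-k)$ incidences on the 0-bit side, whence $M_v \le 2\min\{k,n-k\}$.

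Plugging this in yields
\[
\Pr(\mathrm{dist}(i,j)=v) \le \frac{2\min\{k,n-k\}}{k(n-k)} = \frac{2}{\max\{k,n-k\}} \le \frac{4}{n},
\]
where the last step uses the elementary inequality $\max\{k,n-k\}\ge n/2$ valid for any $k$. Summing over $v=1,\dots,t-1$ then gives $\Pr(\mathrm{dist}(i,j) < t) \le 4(t-1)/n$, which is exactly the bound needed for stochastic dominance. The main subtlety is identifying the bound $M_v \le 2\min\{k,n-k\}$ by a double-counting argument and recognising that dividing it by $k(n-k)\ge (n/2)\cdot\min\{k,n-k\}$ uniformizes the estimate over $x$; the case $k=n/2$ is extremal and tight, while any imbalance in the number of 1- and 0-bits strictly improves the bound, which explains why the claim holds for arbitrary $x$ without any further assumption.
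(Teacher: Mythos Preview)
Your argument is correct and rests on the same key observation as the paper's proof: any fixed position has exactly two positions at a given cyclic distance $v<n/2$, which caps $\Pr(\min\{d,n-d\}=v)$ at $2/\max\{k,n-k\}\le 4/n$. The paper phrases this as a worst-case configuration argument---fix the bit from the rarer class, observe that the worst placement of the $\ell\ge n/2$ bits of the other class makes the conditional distance uniform on $\{1,\dots,\lfloor\ell/2\rfloor\}$, which dominates uniform on $\{1,\dots,n/4\}$---whereas you obtain the same bound by a direct double count of mixed pairs $M_v$; your route is arguably the cleaner of the two.

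One small slip: the inequality $(t-1)/\lfloor n/4\rfloor \le 4(t-1)/n$ goes the wrong way, since $\lfloor n/4\rfloor \le n/4$. It should read $\ge$, and that is precisely the direction you need: from $\Pr(\mathrm{dist}(i,j)<t)\le 4(t-1)/n \le (t-1)/\lfloor n/4\rfloor = \Pr(U<t)$ the dominance follows. (When $n$ is a multiple of~$4$, as the paper assumes where it invokes this lemma, the inequality is an equality and the issue vanishes.)
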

\begin{proof}
We first show the following. For any fixed index~$i$ and any integer~$1 \le z < n/2$ there are exactly two positions~$j$ such that $\min\{d, n-d\} = z$. If $i \in \{1, \dots, n\}$ and $z \in \N$ are fixed, the only values for~$j$ that result in either $|i-j|=z$ or $n-|i-j|=z$ are $i+z, i-z, i+z-n$, and $i-z+n$. Note that at most two of these values are in $\{1, \dots, n\}$. Hence, there are at most 2 feasible values for~$j$ for every $d \in \N$.

Let $\ell$ denote the number of 1\nobreakdash-bits in~$x$. If $\ell \ge n/2$, we assume that first the 0\nobreakdash-bit is chosen uniformly at random, and then consider the uniform random choice of a corresponding 1\nobreakdash-bit.
Without loss of generality assume $x_i = 0$ and $x_j = 1$.

When $i$ has been fixed and $j$ is chosen uniformly at random, a worst case distribution for $\min\{|i-j|, n-|i-j|\}$ is attained when the 1\nobreakdash-bits are distributed such that for each $1 \le d \le \lfloor \ell/2\rfloor$ both feasible bit positions are~1. The worst case is hence a uniform distribution on $\{1, \dots, \lfloor \ell/2 \rfloor\}$, which stochastically dominates the uniform distribution on $\{1, \dots, n/4\}$.

The case $\ell < n/2$ is symmetrical: exchanging the roles of $x_j$ and $x_i$ as well as the roles of zeros and ones yields a uniform distribution on the set ${\{1, \dots, \lfloor (n-\ell)/2 \rfloor\}}$ as worst case, which again stochastically dominates the uniform distribution on $\{1, \dots, n/4\}$. \end{proof}

Taken together, Lemma~\ref{lem:crossover-successful} and Lemma~\ref{lem:bit-distance-distribution} indicate that $k$\nobreakdash-point crossover has a good chance of finding improvements through recombining the right ``building blocks''. However, this is based on the population containing potential parents of equal fitness that only differ in two bits.

The following analysis shows that the population is likely to contain such a favourable pair of parents. However, such a pair might get lost again if other individuals of the same fitness are being created, after all duplicates have been removed from the population. For parents that differ in more than 2 bits, Lemma~\ref{lem:crossover-successful} does not apply, hence we do not have an estimate of how likely such a crossover will find an improvement.

In order to avoid this problem, we consider a more detailed tie-breaking rule. As before, individuals with fewer duplicates are being preferred. In case there are still ties after considering the number of duplicates, the \mlGA will retain older individuals. This refined tie-breaking rule is shown in Algorithm~\ref{alg:refined-tie-breaking}. As will be shown in the remainder, it implies that once a favourable pair of parents with Hamming distance~2 has been created, this pair will never get lost.

\LinesNotNumbered
\begin{algorithm}
\Indp%
\nlset{14}%
    Let $\PCal$ contain the $\mu$ best individuals from $\PCal \cup \PCal'$; break ties towards including individuals with the fewest duplicates in $\PCal \cup \PCal'$. If there are still ties, break them towards including older individuals.
\caption{Refined tie-breaking rule ``dup-old''.}
\label{alg:refined-tie-breaking}
\end{algorithm}

\begin{figure}[tb]
\centerline{
\begin{tikzpicture}[scale=0.8]
\begin{axis}[legend cell align=left, xlabel={mutation rate},ylabel={number of evaluations},
xtick scale label code/.code={$\cdot 1/n$},
]
\pgfplotstableread[header=false]{experiments-onemax/greedy-GA-1ptxover-dup+oldtiebreak-onemax-1000runs-n=1000-p=0.1-by-n,...,4-by-n.txt}\pOldOnePt
\pgfplotstableread[header=false]{experiments-onemax/greedy-GA-2ptxover-dup+oldtiebreak-onemax-1000runs-n=1000-p=0.1-by-n,...,4-by-n.txt}\pOldTwoPt
\pgfplotstableread[header=false]{experiments-onemax/greedy-GA-1-point-crossover-onemax-1000runs-n=1000-p=0.1-by-n,...,4-by-n.txt}\pGAonept
\pgfplotstableread[header=false]{experiments-onemax/greedy-GA-2-point-crossover-onemax-1000runs-n=1000-p=0.1-by-n,...,4-by-n.txt}\pGAtwopt
\addplot table[x expr=0.1*0.001*(\thisrow{1}), y index=7]{\pGAonept};
\addplot table[x expr=0.1*0.001*(\thisrow{1}), y index=7]{\pGAtwopt};
\addplot table[x expr=0.1*0.001*(\thisrow{1}), y index=7]{\pOldOnePt};
\addplot table[x expr=0.1*0.001*(\thisrow{1}), y index=7]{\pOldTwoPt};
\legend{{\TwoGA+1-point, dup-rnd}, {\TwoGA+2-point, dup-rnd}, {\TwoGA+1-point, dup-old}, {\TwoGA+2-point, dup-old}}
\end{axis}
\end{tikzpicture}
}
\caption{Average optimization times on \ONEMAX with $n=1000$ bits over 1000 runs for the greedy~\TwoGA with $1$- and $2$-point crossover using different tie-breaking rules if individuals are tied with regard to fitness and the number of duplicates. ``dup-rnd'' breaks these ties randomly, whereas ``dup-old'' (Algorithm~\ref{alg:refined-tie-breaking}) prefers older individuals.
The mutation rate~$p$ is set to~$c/n$ with $c \in \{0.1, 0.2, \dots, 4\}$.}
\label{fig:runtime-tie-breaking}
\end{figure}
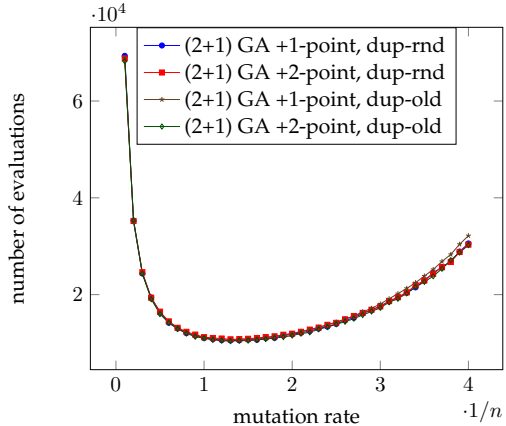

This tie-breaking rule, called ``dup-old'' differs from the one used for the experiments in Figure~\ref{fig:runtime-uniform} and those in Section~\ref{sec:extensions}. There, we broke ties uniformly at random in case individuals are tied with respect to both fitness and the number of duplicates. We call the latter rule ``dup-rnd''.
Experiments for the greedy \TwoGA comparing tie-breaking rules dup-old and dup-rnd over 1000 runs indicate that performance differences are very small, see Figure~\ref{fig:runtime-tie-breaking}.\footnote{Even though differences are small, one-sided Mann-Whitney $U$ tests reveal some statistically significant differences: for 1-point crossover dup-rnd is significantly faster than dup-old on a significance level of 0.001 for mutation rates at least $2.4/n$ (with two exceptions, $2.8/n$ and $3.6/n$, with $p$-values still below 0.003). Contrarily, dup-old was significantly faster for 2-point crossover for mutation rates in the range of $0.8/n$ to $3/n$.}

Note, however, that on functions with plateaus, like royal road functions, retaining the older individuals prevents the \mlGA from performing random walks on the plateau, once the population has spread such that there are no duplicates of any individual. In this case we expect that performance will deteriorate when breaking ties towards older individuals.

With the refined tie-breaking rule, the performance of \mlGA{}s is as follows.
\begin{theorem}
\label{the:upper-bound-k-point-crossover-variable-p}
The expected optimization time of every \mlGA implementing Algorithm~\ref{alg:Scheme-GA} with tie-breaking rule dup-old from Algorithm~\ref{alg:refined-tie-breaking},
$2 \le \mu = O(1)$, $\lambda < \mu$, $p_c = o(1)$ and $p_c = \omega(1/\log n)$, $p = c/n$ for some constant~$c > 0$, and $k$\nobreakdash-point crossover with any $1 \le k \le n-2$, on \ONEMAX is at most
\[
\frac{n \ln n}{c \cdot e^{-c} \cdot (1+c)} \cdot (1 + o(1)).
\]
\end{theorem}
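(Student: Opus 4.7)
My plan is to follow the three-case fitness-level framework from the proof of Theorem~\ref{the:upper-bound-uniform-crossover-variable-p-every-GA}, keeping the analyses of Cases~$i.1$ and~$i.2$ essentially intact, and replacing only the analysis of Case~$i.3$ with one based on Lemmas~\ref{lem:crossover-successful} and~\ref{lem:bit-distance-distribution}. The sum of Case~$i.2$ times across levels will again give the dominant term, while Cases~$i.1$ and~$i.3$ will contribute $o(n\log n)$.

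For Case~$i.1$ the bound from Lemma~\ref{lem:takeover} still applies: since $p_c = o(1)$ we have $(1-p_c)^{-1}=1+o(1)$, and with $\mu,\lambda=O(1)$ the total contribution is $O(n)$. For Case~$i.2$, whether the algorithm chooses to recombine or not, two identical parents yield an offspring to which standard bit mutation is applied. The same argument as in Theorem~\ref{the:upper-bound-uniform-crossover-variable-p-every-GA} then bounds the probability of either improving or producing a different search point at level~$i$ (via a one-1-bit/one-0-bit flip) from below by $p(1-p)^{n-1}(n-i)(1+ip)$, giving a total of $\frac{\ln(n^2p+n)+1+p}{p(1-p)^{n-1}(1+np)}$, which for $p=c/n$ simplifies to $\frac{n\ln n}{c\cdot e^{-c}\cdot(1+c)}(1+o(1))$.

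The new work is in Case~$i.3$. The key observation is that whenever Case~$i.2$ transitions into Case~$i.3$ by a fitness-neutral mutation, the resulting pair $(x,y)$ has Hamming distance exactly~$2$, and by Lemma~\ref{lem:bit-distance-distribution} the gap $d$ between the two differing positions satisfies that $\min\{d,n-d\}$ stochastically dominates the uniform distribution on $\{1,\dots,n/4\}$. I will first argue that under the refined tie-breaking rule of Algorithm~\ref{alg:refined-tie-breaking}, this favourable pair is preserved until an improvement occurs: the fresh individual $y$ has no duplicate, and $x$ is among the oldest, so the dup-old rule keeps both provided there are at most $\mu-2$ other distinct contenders; the assumption $\lambda<\mu$ guarantees that no single generation can evict both. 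Given the pair survives, selecting it as $(x_1,x_2)$ for crossover has probability $\Omega(p_c/\mu^2)$, Lemma~\ref{lem:crossover-successful} gives a success probability of at least $d(N-d)/(N(N-1))$ which is $\Omega(\min\{d,n-d\}/n)$ and hence $\Omega(1)$ in expectation over the flip that created~$y$, and mutation preserves all improved bits with probability $(1-p)^n=\Omega(1)$. Invoking Lemma~\ref{lem:success-probability-lambda-offspring}, the expected time per level in Case~$i.3$ is $O(\mu^2/p_c)$, which summed over $n$ levels is $O(n/p_c)=o(n\log n)$ thanks to $p_c=\omega(1/\log n)$.

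The main obstacle will be the bookkeeping of the pair-preservation argument: one has to verify, via a small case distinction on what the at-most-$\lambda$ co-generated offspring can look like (duplicate of $x$, duplicate of $y$, new distinct individual of equal fitness, or worse), that the dup-old rule never simultaneously drops both $x$ and~$y$ before crossover has had its chance. The condition $\lambda<\mu$ is exactly what makes this argument go through, and $p_c=\omega(1/\log n)$ is what makes the exploitation of the pair fast enough that the contribution of Case~$i.3$ gets absorbed into the $(1+o(1))$ factor. The bound on the expected distance~$d$ then transfers the uniform-crossover style speed-up to $k$-point crossover for every~$k$, completing the proof.
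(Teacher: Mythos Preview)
Your three-case plan is the right skeleton, but it is missing the pieces that distinguish the $k$-point case from the uniform case, and one step in Case~$i.3$ is actually wrong.

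\textbf{Hamming distance 2 is not guaranteed.} You assert that ``whenever Case~$i.2$ transitions into Case~$i.3$ by a fitness-neutral mutation, the resulting pair $(x,y)$ has Hamming distance exactly~$2$.'' This is false: a fitness-neutral mutation may flip $\ell$ ones and $\ell$ zeros for any $\ell\ge 1$, and Lemma~\ref{lem:bit-distance-distribution} only applies conditionally on the $\ell=1$ event. Moreover, Case~$i.1$ (as bounded by Lemma~\ref{lem:takeover}) ends once every individual has fitness at least~$i$, which need not mean they are all identical; you may land directly in a population of several distinct level-$i$ individuals with large Hamming distances, bypassing Case~$i.2$ entirely. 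The paper handles both failure modes by introducing an explicit fourth state $i.\mathrm{error}$ and separately bounding (i) the probability of entering it---via a bound on neutral mutations with Hamming distance greater than~$2$ (Lemma~\ref{lem:probability-mutation-on-same-level}) and on the population not collapsing to copies after Case~$i.1$ (Lemma~\ref{lem:takeover-to-copies}, which is where the assumption $p_c=o(1)$ is actually used)---and (ii) the expected time to escape it by mutation alone (Lemma~\ref{lem:time-i-error}). Without this error-state accounting your argument is incomplete.

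\textbf{A Jensen-type error in Case~$i.3$.} You observe that the crossover success probability $q(d)=\Omega(\min\{d,n-d\}/n)$ has expectation $\Omega(1)$ over the random gap $d$, and then conclude via Lemma~\ref{lem:success-probability-lambda-offspring} that the expected time per level is $O(\mu^2/p_c)$. But Lemma~\ref{lem:success-probability-lambda-offspring} gives $\E{T_{i.3}\mid d}\le \lambda-1+1/q(d)$, so $\E{T_{i.3}}$ involves $\E{1/q(d)}$, not $1/\E{q(d)}$. With $\min\{d,n-d\}$ dominating the uniform law on $\{1,\dots,n/4\}$ one gets $\E{n/\min\{d,n-d\}}=\Theta(\log n)$, so the per-level bound is only $O(\mu^2\log n/p_c)$; summed over $n$ levels this is $o\!\left(n(\log n)^2\right)$, not $o(n\log n)$. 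The paper avoids this by also crediting the mutation-only improvement probability $\Omega((n-i)/n)$ inside Case~$i.3$: the resulting per-level bound becomes $O(\mu^2/p_c)\cdot\bigl(1+\ln(1+n/(n-i))\bigr)$, and these logarithmic terms sum to $O(n)$ across all levels, giving the required $o(n\log n)$ total.
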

This bound equals the upper bound~\eqref{eq:upper-bound-GAs-one-plus-o-one} for \mlGA{}s with uniform crossover. It improves upon the previous upper bound for the greedy \TwoGA from~\cite[Theorem~8]{Sudholt2012b}, whose dominant term was by an additive term of $\frac{2c}{3 + 3c} \cdot n \ln n$ larger. The reason was that for the \TwoGA favourable parents could get lost, which is now prevented by the dup-old tie-breaking rule and conditions on $p_c$.

The conditions $p_c = o(1)$ as well as $\mu, \lambda = O(1)$ are useful because they allow us to estimate the probability that a single good individual takes over the whole population with copies of itself.

In the remainder of this section we work towards proving Theorem~\ref{the:upper-bound-k-point-crossover-variable-p} and assume that $n \ge n_0$ for some $n_0$ chosen such that all asymptotic statements that require a large enough value of~$n$ hold true. For $n < n_0$ there is nothing to prove, as the statement holds trivially for bounded~$n$.

We again estimate the time spent on each fitness level~$i$, i.\,e., when the best fitness in the current population is~$i$.
To this end, we focus on the higher fitness levels ${i \ge n - n/\log n}$ where the probability of creating an offspring on the same level can be estimated nicely.
The time for reaching these higher fitness levels only constitutes a small-order term, compared to the claimed running time bound. The following lemma proves this claim in a more general setting than needed for the proof of Theorem~\ref{the:upper-bound-k-point-crossover-variable-p}. In particular, it holds for arbitrary tie-breaking rules and crossover operators.

\begin{lemma}
\label{lem:first-fitness-levels}
For every \mlGA implementing Algorithm~\ref{alg:Scheme-GA} with $\mu, \lambda = O(1), p_c = 1 - \Omega(1)$, and $p=c/n$ for a constant $c > 0$, using any initialization and any crossover operator, the expected time until a fitness level~$i \ge n - n/\log n$ is reached for the first time is $o(n \log n)$.
\end{lemma}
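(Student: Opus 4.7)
The plan is to apply the standard fitness-level method with the canonical partition $A_i = \{x \in \{0,1\}^n : |x|_1 = i\}$ and exploit the fact that the \mlGA is elitist, so the best fitness in the population is non-decreasing. Because the target level is only $n - n/\log n$, throughout this initial phase the distance $n - i$ to the optimum stays at least $n/\log n$, which makes each improvement comparatively cheap.

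First, I would lower-bound the probability of creating an improving offspring in a single offspring creation while ignoring crossover entirely. With probability $1 - p_c = \Omega(1)$ the offspring is produced on the mutation-only branch of Algorithm~\ref{alg:Scheme-GA}. The parent is then selected under rule~\eqref{eq:fitness-respectful-selection}, and any individual tied for best fitness is chosen with probability at least $1/\mu = \Omega(1)$ (the $\mu$ selection weights sum to one and the best has the largest weight). Flipping exactly one of the $n - i$ zero-bits and no other bit then raises the fitness, contributing a factor of $(n-i)\,p(1-p)^{n-1} = \Omega((n-i)/n)$ since $p = c/n$ and $c$ is constant. Hence, while the current best fitness is $i$, each offspring creation yields an improvement with probability $q_i = \Omega((n-i)/n)$.

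Next, I would invoke Lemma~\ref{lem:success-probability-lambda-offspring} to translate this per-offspring bound into at most $\lambda - 1 + 1/q_i = O(n/(n-i))$ function evaluations needed until a generation with such an improvement completes, using $\lambda = O(1)$. Elitism ensures each level is left at most once, so summing over $0 \le i < n - n/\log n$ and re-indexing $j = n - i$ gives
\[
\sum_{i=0}^{\lceil n - n/\log n \rceil - 1} O\!\left(\frac{n}{n-i}\right)
\;=\; O(n) \cdot \sum_{j=\lceil n/\log n \rceil}^{n} \frac{1}{j}
\;=\; O(n \log \log n) \;=\; o(n \log n),
\]
since the truncated harmonic sum equals $\ln(\log n) + O(1)$.

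There is really no serious obstacle: this is a textbook fitness-level argument, and everything that is delicate elsewhere in the paper -- the choice of crossover operator, the tie-breaking rule, population diversity, and the assembly of ``building blocks'' -- is irrelevant here, precisely because a constant fraction of the offspring are mutation-only and we only have to cross the levels that are still $\Omega(n)$ away from the optimum. The single point worth checking carefully is the $\Omega(1/\mu)$ lower bound on the probability of selecting a current best parent, which is exactly what~\eqref{eq:fitness-respectful-selection} guarantees.
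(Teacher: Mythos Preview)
Your argument is correct and follows the same fitness-level route as the paper's proof: bound the per-offspring improvement probability via the mutation-only branch, invoke Lemma~\ref{lem:success-probability-lambda-offspring}, and sum the resulting $O(n/(n-i))$ terms to get $O(n\log\log n)$. The only difference is that the paper first applies Lemma~\ref{lem:takeover} to make the whole population reach the current best level~$j$ (so that \emph{any} selected parent has $n-j$ zero-bits), whereas you sidestep this by using condition~\eqref{eq:fitness-respectful-selection} directly to get a $\ge 1/\mu$ chance of selecting a current best individual; since $\mu=O(1)$, both yield the same $\Omega((n-i)/n)$ bound, and your version is arguably a touch more elementary.
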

A proof is given in the appendix.

In the remainder of the section we focus on higher fitness levels~$i \ge n - n/\log n$ and specify the different cases on each such fitness level.
The cases~$i.1$, $i.2$, and $i.3$ are similar to the ones for uniform crossover, with additional conditions on the similarity of individuals in Cases~$i.2$ and $i.3$. We also have an additional error state that accounts for undesirable and unexpected behavior. We pessimistically assume that the error state cannot be left towards other cases on level~$i$.

\textbf{Case $i.1$:} The population contains an individual on level~$i$ and at least one individual on a lower fitness level.

\textbf{Case $i.2$:} The population contains $\mu$ copies of an individual~$x$ on level~$i$.

\textbf{Case $i.3$:} The population contains two search points $x, y$ with current best fitness~$i$, where $y$ resulted from a mutation of~$x$ and the Hamming distance of $x$ and~$y$ is~2.

\textbf{Case $i.$error:} An error state reached from any Case~$i.\cdot$ when the best fitness is~$i$ and none of the prior cases applies.

The difference to the analysis of uniform crossover is that in Case~$i.2$ we rely on the population collapsing to copies of a single individual. This helps to estimate the probability of creating a favourable parent-offspring pair in Case~$i.3$ as the \mlGA effectively only performs mutations of~$x$ while being in Case~$i.2$.

\begin{lemma}
\label{lem:time-i-1-2-3}
Consider any \mlGA as defined in Theorem~\ref{the:upper-bound-k-point-crossover-variable-p}, with parameters $2 \le \mu = O(1)$, $\lambda < \mu$, $p_c = o(1)$ and $p_c = \omega(1/\log n)$, $p = c/n$ for some constant~$c > 0$.
The total expected time spent in all Cases~$i.1$, $i.2$, and $i.3$ across all $i \ge n - n/\log n$ is at most
\[
\frac{n \ln n}{c \cdot e^{-c} \cdot (1+c)} + o(n \log n).
\]
\end{lemma}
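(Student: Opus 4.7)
My plan is to decompose the total expected time by case, exploiting the fact that under the dup-old tie-breaking rule each of the three cases is visited at most once per fitness level, and that Case~$i.3$ is only exited via a true improvement. The contributions from Cases~$i.1$ and~$i.3$ will be shown to be lower-order terms, while Case~$i.2$ carries the dominant term that matches the bound from Theorem~\ref{the:upper-bound-uniform-crossover-variable-p-every-GA}.

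For Cases~$i.1$, I would invoke Lemma~\ref{lem:takeover} directly. Under $\mu, \lambda = O(1)$, $p_c = o(1)$, and $p = c/n$, each takeover costs $O(1)$ expected evaluations, so the total contribution across the at most $n/\log n$ relevant levels is $o(n \log n)$. For Cases~$i.2$, the population consists of $\mu$ copies of the same $x$, so every offspring is effectively a mutation of $x$ (crossover of identical parents gives back $x$). The probability per offspring creation of either (a)~finding a strict improvement via a 1-bit flip or (b)~moving to Case~$i.3$ by flipping exactly one $1$-bit and one $0$-bit is at least $p(1-p)^{n-1}(n-i)(1 + ip)$. Lemma~\ref{lem:success-probability-lambda-offspring} then bounds the expected time per visit by $\lambda - 1 + 1/(p(1-p)^{n-1}(n-i)(1+ip))$. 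Summing from $i = \lceil n - n/\log n \rceil$ to $n-1$ and evaluating the resulting integral via formula~3.3.20 of~\cite{AbramovitzStegun10}, as in the proof of Theorem~\ref{the:upper-bound-uniform-crossover-variable-p-every-GA}, produces exactly $\frac{n \ln n}{c \cdot e^{-c} \cdot (1+c)} + o(n \log n)$.

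For Cases~$i.3$ this is where the $k$-point-specific analysis enters. The pair $(x, y)$ in Case~$i.3$ arose from a mutation of $x$ flipping one $1$-bit and one $0$-bit, so by Lemma~\ref{lem:bit-distance-distribution} the distance $d$ between the two differing positions dominates the uniform distribution on $\{1, \ldots, n/4\}$. By Lemma~\ref{lem:crossover-successful}, $k$-point crossover produces an offspring with more ones with probability at least $d(N-d)/(N(N-1))$, and taking the expectation over $d$ yields a constant lower bound (roughly $3/32$). Combined with a selection probability $\Omega(1)$ for picking the two distinct parents from a constant-size population, a crossover probability of $p_c$, and survival under mutation with probability $(1-p)^n = \Omega(1)$, the per-offspring improvement probability is $\Omega(p_c)$. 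Applying Lemma~\ref{lem:success-probability-lambda-offspring} gives an expected per-visit cost of $O(1/p_c)$, so summing over at most $n/\log n$ levels contributes $O(n/p_c) = o(n \log n)$ using $p_c = \omega(1/\log n)$.

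The main obstacle, and the reason the dup-old tie-breaking rule is introduced, is to show that the favourable Hamming-distance-$2$ pair in Case~$i.3$ is not destroyed before an improvement occurs. I would verify this by checking the possible environmental-selection outcomes: any new offspring on level~$i$ either duplicates one member of the pair (in which case duplicate-aware selection removes the new copy, retaining the pair), or is distinct from both (in which case the older individuals, i.e.\ the pair, are retained). This together with $p_c = o(1)$ and $\mu = O(1)$ ensures that the crossover-improvement probability bound in Case~$i.3$ holds uniformly over the whole stay in that case, so Lemma~\ref{lem:success-probability-lambda-offspring} is legitimately applicable. Summing the three contributions then yields the claimed bound.
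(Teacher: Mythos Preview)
Your decomposition and your treatment of Cases~$i.1$ and~$i.2$ match the paper's proof essentially verbatim. The gap is in your Case~$i.3$ argument. The distance $d$ is fixed the moment Case~$i.3$ is entered (it is determined by the mutation that produced~$y$), and it stays fixed throughout the stay in that case. So the per-offspring success probability is a \emph{fixed} function $q(d)$ of the realized~$d$, and the expected waiting time is $\E_d[1/q(d)]$, not $1/\E_d[q(d)]$. You computed $\E_d[q(d)] = \Omega(1)$ and then inverted to claim $O(1/p_c)$ per level; Jensen's inequality runs the wrong way here. Concretely, under the dominating uniform distribution on $\{1,\dots,n/4\}$, the event $d=1$ has probability $\Theta(1/n)$ and gives $q(d)=\Theta(1/n)$, so crossover alone yields a conditional waiting time of order $n/p_c$ in that case, and the harmonic sum over $d$ picks up an extra $\log n$ factor: $\E_d[1/q(d)] = \Theta((\log n)/p_c)$, not $O(1/p_c)$.

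The paper avoids this by conditioning on $D=d$ first and bounding $\E{T_{i.3}\mid D=d}$ via Lemma~\ref{lem:success-probability-lambda-offspring} with a success probability that includes, alongside the crossover term $\Theta(p_c\,d/n)$, a mutation-only fallback $\gamma(n-i)/n$; only then does it take the expectation over~$d$. The resulting bound is $\E{T_{i.3}} \le o(\log n)\cdot(1+\ln(1+n/(n-i)))$, and the sum over all levels evaluates to $o(n\log n)$ via $\int_0^n \ln(1+n/i)\,\mathrm{d}i = 2\ln(2)\,n$. Your final claim $O(n/p_c)=o(n\log n)$ is in fact recoverable even without the mutation fallback once you correct the per-level bound to $O((\log n)/p_c)$ and sum over $n/\log n$ levels, but the intermediate step as you wrote it does not stand. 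Your last paragraph on the survival of the Hamming-$2$ pair under dup-old is correct; in the paper this argument appears in the proof of Lemma~\ref{lem:time-i-error} rather than here.
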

\begin{proof}
We have already analyzed the expected time in Cases~$i.1$ and $i.2$, across all fitness levels. As in the proof of Theorem~\ref{the:upper-bound-uniform-crossover-variable-p-every-GA}, we use Lemma~\ref{lem:takeover} and get that the expected time spent in all Cases~$i.1$ is at most
\[
\frac{O((\mu + \lambda) n \log \mu)}{(1-p_c)(1-p)^n} = O(n).
\]

In Case~$i.2$ the algorithm behaves like the one using uniform crossover described in Theorem~\ref{the:upper-bound-uniform-crossover-variable-p-every-GA} as both crossover operators are working on identical individuals. As before, Case~$i.2$ is left if either a better offspring is created, or if a different offspring with $i$ ones is created. In the latter case, either Case~$i.3$ or the error state $i$.error is reached.
By the proof of Theorem~\ref{the:upper-bound-uniform-crossover-variable-p-every-GA} we know that the expected time spent in Cases~$i.2$ across all levels~$i$ is bounded by
\[
\lambda n + \frac{\ln(n^2 p + n) + 1+p}{p(1-p)^{n-1} \cdot (1+np)} = \frac{n \ln n}{c \cdot e^{-c} \cdot (1+c)} + O(n).
\]

Now we estimate the total time spent in all cases~$i.3$. As this time turns out to be comparably small, we allow ourselves to ignore that fact that not all these cases are actually reached.

Case~$i.3$ implies that the population contains a parent-offspring pair $x, y$ with Hamming distance~2. Consider the mutation that has created this offspring and note that this mutation flips each 1-bit and each 0-bit with the same probability. If $a, b$ with $a < b$ denote the bit positions where $x$ and $y$ differ, then $D := b - a$ is a random variable with support $\{1, \dots, n-1\}$. By the law of total expectation,
\begin{equation}
\label{eq:T-i-3-law-of-total-expectation}
\E{T_{i.3}} = \sum_{d=1}^{n-1} \E{T_{i.3} \mid D = d} \cdot \Prob(D = d).
\end{equation}
We first bound the conditional expectation by considering probabilities for improvements.
If $D = d$ then by crossover is successful if crossover is performed (probability~$p_c$), if the search point where bit~$a$ is 1 is selected as first parent (probability at least $1/\mu$), if the remaining search point in $\{x, y\}$ is selected as second parent (probability at least $1/\mu$), and if cutting points are chosen that lead to a fitness improvement. The latter event has probability at least $d(N-d)/(N(N-1))$ by Lemma~\ref{lem:crossover-successful}, with $N := n-1$. Finally, we need to assume that the following mutation does not destroy any fitness improvements (probability at least $(1-p)^n$). The probability of a successful crossover is then at least, using $d(N-d) = \min(d, N-d) \cdot \max(d, N-d) \ge \min(d, N-d) \cdot N/2$,
\[
\frac{p_c (1-p)^n}{\mu^2} \cdot \frac{d(N-d)}{N(N-1)} \ge
\frac{p_c (1-p)^n}{\mu^2} \cdot \frac{\min(d, (N-d))}{2(N-1)} \ge
\frac{p_c (1-p)^n}{\mu^2} \cdot \frac{\min(d, n-d) -1}{2n}.
\]
Another means of escaping from Case~$i.3$ is by not using crossover, but having mutation create an improvement. The probability for this is at least
\begin{equation}
\label{eq:improvement-by-mutation-main}
(1 - p_c) \cdot (n-i)p(1-p)^{n-1} \ge \gamma \cdot \frac{n-i}{n}
\end{equation}
for a constant~$\gamma > 0$.
Applying Lemma~\ref{lem:success-probability-lambda-offspring}, we
have
\begin{equation}
\label{eq:T-i-3-conditional-expectation}
\E{T_{i.3} \mid D = d} \le \lambda + \frac{1}{\frac{p_c (1-p)^n}{\mu^2} \cdot \frac{\min(d, n-d) -1}{2n} + \gamma \cdot \frac{n-i}{n}}.
\end{equation}
Note that this upper bound is non-increasing with $\min(d, n-d)$. We are therefore pessimistic when replacing $\min(d, n-d)$ by a different random variable that is stochastically dominated by it.
According to Lemma~\ref{lem:bit-distance-distribution}, ${\min\{d, n-d\}}$ dominates the uniform distribution on $\{1, \dots, n/4\}$ (assume w.\,l.\,o.\,g.\ that $n$ is a multiple of~4).
Combining this with~\eqref{eq:T-i-3-law-of-total-expectation} and~\eqref{eq:T-i-3-conditional-expectation} yields
\begin{align*}
\E{T_{i.3}} \le\;& \lambda + \sum_{d=1}^{n/4} \frac{1}{n/4} \cdot \frac{1}{\frac{p_c (1-p)^n}{\mu^2} \cdot \frac{d-1}{2n} + \gamma \cdot \frac{n-i}{n}}\\
\le\;& \lambda + O(\mu^2/p_c) \cdot \sum_{d=1}^{n/4} \frac{1}{d-1 + n-i}.
\end{align*}
The last sum is estimated as follows.
\begin{align*}
\sum_{d=1}^{n/4} \frac{1}{d-1 + n-i}
= \sum_{d=0}^{n/4-1} \frac{1}{d + n-i}
=\;& \frac{1}{n-i} + \sum_{d=1}^{n/4-1} \frac{1}{d + n-i}\\
\le\;& 1 + \int_{d=0}^{n/4} \frac{1}{d + n-i} \;\mathrm{d}d\\
=\;& 1 + \ln\left(1 + \frac{n/4}{n-i}\right).
\end{align*}
Along with $\lambda = O(1)$, $n/4 \le n$, and $O(\mu^2/p_c) = o(\log n)$, we get
\[
\E{T_{i.3}} \le o(\log n) \cdot \left(1 + \ln\left(1 + \frac{n}{n-i}\right)\right).
\]
For the sum $T_{\cdot, 3} = \sum_{i=0}^{n-1} T_{i, 3}$ we then have the following.
\begin{align*}
\E{T_{\cdot.3}} \le\;& o(n \log n) + o(\log n) \cdot
\sum_{i=0}^{n-1} \ln\left(1 + \frac{n}{n-i}\right)\\
=\;& o(n \log n) + o(\log n) \cdot \sum_{i=1}^{n} \ln\left(1 + \frac{n}{i}\right)\\
\le\;& o(n \log n) + o(\log n) \cdot
\int_{i=0}^{n} \ln\left(1 + \frac{n}{i}\right) \; \mathrm{d}i\\
=\;& o(n \log n)
\end{align*}
as the integral is $2 \ln(2) n$.
This completes the proof.
\end{proof}

The remainder of the proof is devoted to estimating the expected time spent in the error state. To this end we need to consider events that take the \mlGA ``off course'', that is, deviating from situations described in Cases~$i.1$, $i.2$, and~$i.3$.

Since Case~$i.3$ is based on offspring with Hamming distance~2 to their parents, one potential failure is that an offspring with fitness~$i$, but Hamming distance greater than~2 to its parent is being created. This probability is estimated in the following lemma.
\begin{lemma}
\label{lem:probability-mutation-on-same-level}
For $i \in \{1, \dots, n-1\}$ let $p^{(i)}$ denote the probability that standard bit mutation with mutation rate~$0 < p \le 1/2$ of a search point with $i$ 1\nobreakdash-bits creates a different offspring with $i$ 1\nobreakdash-bits.
If $i(n-i)p^2(1-p)^{-2} \le 1/2$ then
\[
i(n-i)p^2(1-p)^{n-2}
\;\le\; p^{(i)} \;\le\; i(n-i)p^2(1-p)^{n-2} \cdot \left(1+\frac{2i(n-i)p^2}{(1-p)^{2}}\right).
\]
The probability that, additionally, the offspring has Hamming distance larger than 2 to its parent is at most
\[
2i^2(n-i)^2p^4(1-p)^{n-4}.
\]
\end{lemma}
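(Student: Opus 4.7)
The plan is to express $p^{(i)}$ as an explicit sum over the number of bits flipped of each type and then bound the resulting series using the hypothesis $x := i(n-i)p^2/(1-p)^2 \le 1/2$.

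A different offspring with exactly $i$ one-bits arises precisely when, for some $k \ge 1$, mutation flips exactly $k$ of the current one-bits and exactly $k$ of the current zero-bits (and nothing else). Since bits flip independently with probability $p$, this gives
$$p^{(i)} = \sum_{k=1}^{\min\{i,n-i\}} \binom{i}{k}\binom{n-i}{k}\, p^{2k}(1-p)^{n-2k}.$$
The lower bound then follows immediately by retaining only the $k=1$ summand, which already equals $i(n-i)p^2(1-p)^{n-2}$.

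For the upper bound I would pull out the common factor $(1-p)^n$ and apply the routine estimate $\binom{i}{k}\binom{n-i}{k} \le (i(n-i))^k/(k!)^2$. This bounds $p^{(i)}$ by $(1-p)^n \sum_{k\ge 1} x^k/(k!)^2$. The $k=1$ summand contributes $(1-p)^n x = i(n-i)p^2(1-p)^{n-2}$, the leading factor claimed. For $k \ge 2$ I would use $(k!)^2 \ge 4$ and sum the remaining geometric series, invoking $x \le 1/2$ to obtain a tail at most $x^2/2$. Factoring $x$ back out, the overall sum is at most $x(1 + x/2) \le x(1 + 2x)$, yielding the claimed bound $i(n-i)p^2(1-p)^{n-2}(1 + 2i(n-i)p^2/(1-p)^2)$.

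For the probability of additionally having Hamming distance greater than $2$, I would use the same sum restricted to $k \ge 2$, since the Hamming distance between parent and offspring under the event is exactly $2k$, so Hamming distance exceeds $2$ iff $k \ge 2$. The tail estimate already computed shows $\sum_{k\ge 2} x^k/(k!)^2 \le x^2/2$ under the hypothesis, giving a probability at most $\tfrac{1}{2} i^2 (n-i)^2 p^4 (1-p)^{n-4}$, well below the stated factor of $2$. The manipulations are elementary; the only point requiring care is the bookkeeping in the geometric-series step and checking that the hypothesis is precisely strong enough to collapse the tail into the multiplicative slack of $1 + 2x$. I do not anticipate any genuine obstacle.
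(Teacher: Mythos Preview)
Your proposal is correct and follows essentially the same approach as the paper: both express $p^{(i)}$ as the sum $\sum_{k\ge 1}\binom{i}{k}\binom{n-i}{k}p^{2k}(1-p)^{n-2k}$, take the $k=1$ term for the lower bound, and bound the binomials by $(i(n-i))^k/(k!)^2$ before collapsing the tail to a geometric series. The only difference is cosmetic --- the paper drops the $1/(k!)^2$ factor entirely and uses $1/(1-x)\le 1+2x$, whereas you retain $(k!)^2\ge 4$ for $k\ge 2$, which even yields the sharper constant $1/2$ (versus the paper's $2$) in the Hamming-distance-$>2$ bound.
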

The proof is found in the appendix.

Another potential failure occurs if the population does not collapse to copies of a single search point, that is, the transition from Case~$i.1$ to Case~$i.2$ is not made. We first estimate the probability of mutation unexpectedly creating an individual with fitness~$i$.
\begin{lemma}
\label{lem:jump-to-level-i}
The probability that a standard bit mutation with mutation probability~${0 < p < 1}$ creates a search point with $i$ ones out of a parent with less than~$i$ ones, is at most
\[
p(n-i+1) \cdot e^{(pn)^2/4+1}.
\]
\end{lemma}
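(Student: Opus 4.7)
The plan is to bound $p^{(j,i)}$, the probability that mutation of a fixed parent with $j<i$ ones produces an offspring with exactly $i$ ones, by an expression that does not depend on $j$ (so that the same bound applies to every parent on a lower fitness level). Set $d:=i-j\ge 1$. The key observation is that for the offspring to have $i$ ones, the number of flipped $0$-bits minus the number of flipped $1$-bits must equal $d$, so in particular at least $d$ of the $n-j$ zero bits of the parent must flip. A union bound over all $d$-subsets of these zero bits therefore yields
\[
p^{(j,i)} \;\le\; \binom{n-j}{d}\,p^d \;=\; \binom{n-i+d}{d}\,p^d,
\]
where I used $n-j = n-i+d$.

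Next I would factor the $(n-i+1)\,p$ that appears in the target bound out of the binomial coefficient. Since the $d$ consecutive factors in $\binom{n-i+d}{d}$ are $(n-i+1),(n-i+2),\dots,(n-i+d)$, the smallest is $n-i+1$, and so
\[
\binom{n-i+d}{d}\;=\;\frac{(n-i+1)\,\prod_{k=2}^{d}(n-i+k)}{d!}.
\]
Bounding each remaining factor by $n-i+k\le n$ gives
\[
p^{(j,i)} \;\le\; p(n-i+1)\cdot\frac{(np)^{d-1}}{d!}.
\]

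It remains to check that $(np)^{d-1}/d!\le e^{(pn)^2/4+1}$ uniformly in $d\ge 1$. Using $d!\ge (d-1)!$ and recognising the resulting expression as a single summand of the Taylor series of $e^{np}$ yields
\[
\frac{(np)^{d-1}}{d!} \;\le\; \frac{(np)^{d-1}}{(d-1)!} \;\le\; \sum_{m=0}^{\infty}\frac{(np)^{m}}{m!} \;=\; e^{np}.
\]
Finally, $e^{np}\le e^{(np)^2/4+1}$ is equivalent to $(np-2)^2\ge 0$ and hence holds for every $np\ge 0$. Chaining these inequalities delivers the claimed bound $p(n-i+1)\,e^{(pn)^2/4+1}$.

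The only mildly delicate step is the factorisation of the binomial coefficient so that exactly $(n-i+1)$ comes out front; all other steps are routine applications of the union bound, Taylor's series, and a quadratic inequality. I expect no real obstacle beyond being careful about the $d=1$ case, where the product over $k=2,\dots,d$ is empty and the bound collapses to $p(n-i+1)$, well below $p(n-i+1)\,e^{(pn)^2/4+1}$.
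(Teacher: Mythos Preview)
Your proof is correct and genuinely simpler than the paper's. The paper writes the exact transition probability as
\[
\max_{d\ge 1}\sum_{\ell\ge 0}\binom{n-i+d}{d+\ell}\binom{i-d}{\ell}p^{d+2\ell}(1-p)^{n-d-2\ell},
\]
bounds the binomial coefficients by $n^k/k!$, and factors the double sum into a piece $\max_{d}(p(n-i+d))^{d}/d!$ times the series $\sum_{\ell}((pn)^2/4)^{\ell}/\ell!=e^{(pn)^2/4}$. It then controls the first factor via $1/d!\le(e/d)^d$ and a case distinction on whether $ep(n-i+1)\le 1$, using the trivial bound $1$ in the opposite case.

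You sidestep the entire $\ell$-sum by observing that at least $d$ zero bits must flip and applying a single union bound, which immediately gives $\binom{n-i+d}{d}p^d$. Your factorisation pulls out the smallest numerator factor $n-i+1$ (the paper effectively replaces it by $n-i+d$, which is why it later needs the case split), and your Taylor-series trick $(np)^{d-1}/(d-1)!\le e^{np}$ together with $(np-2)^2\ge 0$ avoids any case analysis. The only thing worth flagging is that the step $n-i+k\le n$ uses $k\le d=i-j\le i$, which holds since $j\ge 0$; you might state this explicitly. Otherwise every step is sound, and the argument is both shorter and more transparent than the original.
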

Note that for the special case $p=1/n$ Lemma~13 in~\cite{Doerr2012a} gives an upper bound of $(n-i+1)/n$ as the highest probability for a jump to fitness level~$i$ is attained if the parent is on level~$i-1$. However, for larger mutation probabilities this is no longer true in general; there are cases where the probability of jumping to level~$i$ is maximized for parents on lower fitness levels. Hence, a closer inspection of transition probabilities between different fitness levels is required, see the proof in the Appendix.

Using Lemma~\ref{lem:jump-to-level-i}, we can now estimate the probability of the \mlGA not collapsing to copies of a single search point as described in Case~$i.2$.
\begin{lemma}
\label{lem:takeover-to-copies}
Consider any \mlGA as defined in Theorem~\ref{the:upper-bound-k-point-crossover-variable-p}, with parameters $2 \le \mu = O(1)$, $\lambda < \mu$, $p_c = o(1)$ and $p_c = \omega(1/\log n)$, $p = c/n$ for some constant~$c > 0$
and fix a fitness level~$i < n$. The probability that the \mlGA will reach a population containing different individuals with fitness~$i$ before either reaching a population containing only copies of the same individual on level~$i$ or reaching a higher fitness level, is at most
\[
O(\mu \log \mu) \cdot \left(p_c + \frac{n-i}{n}\right).
\]
\end{lemma}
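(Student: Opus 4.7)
The plan is to combine the takeover bound from Lemma~\ref{lem:takeover} with a uniform per-offspring bound on the probability of a ``bad event,'' and then apply a Wald-type linearity-of-expectation argument over offspring creations. Concretely, let $T_1$ denote the first time at which every individual in the population has fitness at least~$i$, and let $T_a$, $T_b$, $T_c$ be the first times that (a) the population consists of $\mu$ copies of a single level-$i$ individual, (b) some individual has fitness greater than~$i$, and (c) the population contains two distinct level-$i$ individuals, respectively. Set $\sigma := \min(T_a, T_b, T_c)$.

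First I would show $\sigma \le T_1$ (and hence $\E{\sigma} \le \E{T_1}$). At time $T_1$ every individual has fitness at least $i$; if some individual has fitness strictly above $i$ then (b) holds, otherwise every individual is on level $i$ and either all copies of one search point, in which case (a) holds, or there exist two distinct ones, in which case (c) holds. Lemma~\ref{lem:takeover} then gives
\[
\E{T_1} = O\!\left(\frac{(\mu+\lambda)\log\mu}{(1-p_c)(1-p)^n}\right),
\]
which simplifies to $O(\mu \log \mu)$ for the parameter range $\mu,\lambda = O(1)$, $p_c = o(1)$, and $p = c/n$.

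Next I would bound the probability of a \emph{bad offspring} in a single offspring creation, meaning an offspring with exactly $i$ ones that differs from every existing level-$i$ individual. Before time $\sigma$ the population contains at most one ``type'' of level-$i$ individual, so any such offspring is automatically admitted (it replaces a sub-$i$ individual, or is preferred over a duplicate under either tie-breaking rule) and immediately triggers (c). With probability $p_c$ crossover is used, which I bound pessimistically by $p_c$. With probability $1-p_c$ the offspring is produced by pure mutation from a parent of fitness at most~$i$: if the parent has fitness below $i$, Lemma~\ref{lem:jump-to-level-i} gives probability at most $p(n-i+1)\,e^{(pn)^2/4+1} = O((n-i)/n)$ for $p=c/n$; if the parent is on level $i$, Lemma~\ref{lem:probability-mutation-on-same-level} gives probability $O(i(n-i)p^2) = O((n-i)/n)$. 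Hence the conditional per-offspring bad-event probability is at most $q := p_c + O((n-i)/n)$, uniformly in the history up to time $\sigma$.

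Finally, since $\sigma$ is a stopping time with respect to the filtration generated by offspring creations, and the conditional bad-event probability at each step is at most $q$, a standard Wald-type argument bounds the expected number of bad offsprings in $[0,\sigma]$ by $q \cdot \E{\sigma}$. Since $T_c \le \sigma$ requires that at least one bad offspring be produced, Markov's inequality yields
\[
\Prob(T_c \le \sigma) \;\le\; q \cdot \E{\sigma} \;=\; O(\mu \log \mu) \cdot \left(p_c + \frac{n-i}{n}\right),
\]
which is exactly the claimed bound. The main obstacle is the bookkeeping step of verifying that \emph{every} bad offspring generated before $\sigma$ deterministically causes state (c), uniformly over admissible population compositions and for both tie-breaking rules considered; once that is nailed down, the combinatorial work is already done by Lemmas~\ref{lem:jump-to-level-i} and~\ref{lem:probability-mutation-on-same-level} and the takeover bound in Lemma~\ref{lem:takeover}.
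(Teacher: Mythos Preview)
Your proposal is correct and follows essentially the same route as the paper: bound the per-offspring ``bad event'' probability by $p_c + O((n-i)/n)$ (using the trivial bound when crossover is used, and Lemma~\ref{lem:jump-to-level-i} otherwise), then multiply by the expected takeover time $O(\mu\log\mu)$ from Lemma~\ref{lem:takeover} via a union/Wald argument. One minor point: for a parent already on level~$i$ the paper uses the cruder bound $p(n-i)$ (at least one of the $n-i$ zero-bits must flip) rather than Lemma~\ref{lem:probability-mutation-on-same-level}, which sidesteps that lemma's precondition $i(n-i)p^2(1-p)^{-2}\le 1/2$ that can fail for general~$i$ when $c$ is large---your asymptotic claim $O((n-i)/n)$ is still correct, but the citation is slightly loose as stated.
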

\begin{proof}
We show that there is a good probability of repeatedly creating clones of individuals with fitness~$i$ (or finding an improvement) and avoiding the following \emph{bad} event. A bad event happens if an individual on fitness level~$i$ is created in one offspring creation by means other than through cloning an existing individual on level~$i$.

The probability of a bad event is bounded as follows.
In case crossover is being used, which happens with probability $p_c$,
we bound the probability of a bad event by the trivial bound~1. Otherwise, such an individual needs to be created through mutation from either a worst fitness level, or by mutating a parent on level~$i$. The probability for the former is bounded from above by Lemma~\ref{lem:jump-to-level-i}. The probability for the latter is at most $p(n-i)$ as it is necessary to flip one out of $n-i$ 0\nobreakdash-bits.
Using $n-i+1 \le 2(n-i)$, the probability of a bad event on level~$i$ is hence bounded from above by
\begin{align*}
& p_c + (1-p_c) \cdot \left(p(n-i+1) \cdot e^{(pn)^2/4+1} + p(n-i)\right)\\
\le\;& p_c + \left(\frac{2c(n-i)}{n} \cdot e^{c^2/4+1} + \frac{c(n-i)}{n}\right) = p_c + \kappa \cdot \frac{n-i}{n},
\end{align*}
where $\kappa := 2c \cdot e^{c^2/4+1} + c$ is a constant.
The \mlGA will only reach a population containing different individuals with fitness~$i$ as stated if a bad event happens before the population has collapsed to copies of a single search point or moved on to a higher fitness level.

Consider the first generation where an individual of fitness~$i$ is reached for the first time. Since it might be possible to create several such individuals in one generation, we consider all offspring creations being executed sequentially and consider the possibility of bad events for all offspring creations following the first offspring on level~$i$.
Let $X$ be the number of function evaluations following this generation, before all individuals in the population have fitness at least~$i$. By Lemma~\ref{lem:takeover} we have
\[
\E{X} = \mathord{O}\mathord{\left(\frac{(\mu + \lambda) \log \mu}{(1-p_c)(1-p)^n}\right)} = O(\mu \log \mu).
\]
Considering up to $\lambda$ further offspring creations in the first generation leading to level~$i$, and completing the generation at the end of the $X$ function evaluations, we have less than $X + 2\lambda$ trials for bad events. The probability that one of these is bad is bounded by
\begin{align*}
\left(\sum_{t=1}^{X} t \cdot \Prob(X = t) + 2\lambda\right) \cdot \left(p_c + \kappa \cdot \frac{n-i}{n}\right)
=\;& (\E{X} + 2\lambda) \cdot \left(p_c + \kappa \cdot \frac{n-i}{n}\right)\\
=\;& O(\mu \log \mu) \cdot \left(p_c + \kappa \cdot \frac{n-i}{n}\right).
\end{align*}
Absorbing $\kappa$ in the $O$-term yields the claimed result.
\end{proof}

Now we are prepared to estimate the expected time spent in all error states $i.$error for $i \ge n - n/\log n$.
\begin{lemma}
\label{lem:time-i-error}
Consider any \mlGA as defined in Theorem~\ref{the:upper-bound-k-point-crossover-variable-p}, with parameters $2 \le \mu = O(1)$, $\lambda < \mu$, $p_c = o(1)$ and $p_c = \omega(1/\log n)$, $p = c/n$ for some constant~$c > 0$.
The expected time spent in all states $i.$error, for $i \ge n - n/\log n$, is at most
\[
O(n + p_c \cdot n \ln n) = o(n \log n).
\]
\end{lemma}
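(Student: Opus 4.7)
The plan is to bound, for each fitness level $i \ge n - n/\log n$, the probability $q_i$ of ever entering state $i.\text{error}$ and the conditional expected time $\tau_i$ spent there once entered, and then to sum $q_i \tau_i$ over all relevant $i$. Since by assumption $i.\text{error}$ is absorbing within level~$i$, it is entered at most once per level, so the total expected time in error states is bounded by $\sum_i q_i \tau_i$.

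First I would enumerate the ways the error state can be reached. By the definition of Cases~$i.1, i.2, i.3$ and the convention that $i.\text{error}$ is entered only when none of them applies, transitions into $i.\text{error}$ can only originate from $i.1$ or $i.2$: the dup-old tie-breaking rule from Algorithm~\ref{alg:refined-tie-breaking}, combined with $\mu=O(1), \lambda < \mu, p_c=o(1)$, ensures that once a favourable Hamming-distance-$2$ parent-offspring pair is in the population (Case~$i.3$) it is retained until fitness improves. Entry from Case~$i.1$ corresponds to the population failing to collapse to copies of a single level-$i$ individual (and also failing to form a distance-$2$ pair); by Lemma~\ref{lem:takeover-to-copies} this has probability at most $O(\mu \log \mu)(p_c + (n-i)/n)$. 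Entry from Case~$i.2$ requires a mutation that creates a different level-$i$ offspring of Hamming distance greater than $2$ from its parent, which by Lemma~\ref{lem:probability-mutation-on-same-level} has probability at most $2 i^2 (n-i)^2 p^4 (1-p)^{n-4}$ per offspring. Dividing by the total probability of leaving Case~$i.2$ (at least $(n-i) p (1-p)^{n-1}$, from a 1-bit improvement) bounds the conditional probability of going to $i.\text{error}$ rather than elsewhere by $O((n-i)/n)$ for $p=c/n$. Combining both contributions, $q_i = O(\mu \log \mu)(p_c + (n-i)/n)$.

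Next I bound $\tau_i$. Since no transitions within level~$i$ may leave $i.\text{error}$, the only way out is to find an improvement. Ignoring crossover, the mutation-only bound~\eqref{eq:improvement-by-mutation-main} gives an improvement probability of at least $\gamma(n-i)/n$ per offspring for some constant $\gamma > 0$, so Lemma~\ref{lem:success-probability-lambda-offspring} yields $\tau_i \le \lambda + O(n/(n-i))$.

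Finally I would sum:
\begin{align*}
\sum_{i \ge n-n/\log n}^{n-1} q_i \tau_i
\;\le\; O(\mu \log \mu) \sum_{i \ge n-n/\log n}^{n-1} \left(p_c + \frac{n-i}{n}\right)\!\left(\lambda + \frac{O(n)}{n-i}\right).
\end{align*}
Expanding the product into four pieces and using the substitution $j=n-i \in \{1,\dots,n/\log n\}$, the dominating contribution is $p_c \cdot O(n) \cdot \sum_{j=1}^{n/\log n} 1/j = O(p_c\, n \ln n)$; the remaining pieces are $O(p_c\lambda\, n/\log n)$, $O(\lambda\, n/\log^2 n)$, and $O(n/\log n)$, all of order $O(n)$ under the hypotheses $\mu,\lambda=O(1)$ and $p_c=o(1)$. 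This gives $O(n + p_c\,n\ln n) = o(n\log n)$, as claimed.

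The main obstacle is bookkeeping: making the case analysis airtight, in particular the claim that $i.3$ does not leak into $i.\text{error}$ (which needs the dup-old rule together with $\lambda<\mu$ so that at least one of the two favourable parents always survives environmental selection) and the conditional-probability argument for $i.2 \to i.\text{error}$, where one must argue that the ratio of bad-event probability to total-leaving probability really does give the correct transition probability, not merely a per-step probability.
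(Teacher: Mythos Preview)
Your proposal is correct and mirrors the paper's proof almost exactly: bound the entry probability into $i.\text{error}$ from $i.1$ via Lemma~\ref{lem:takeover-to-copies}, from $i.2$ via the ratio argument using Lemma~\ref{lem:probability-mutation-on-same-level}, argue the probability from $i.3$ is zero thanks to dup-old and $\lambda<\mu$, bound the exit time by the mutation-only improvement probability, and sum. The only cosmetic difference is your choice of denominator in the $i.2$ conditional probability: you divide by the 1-bit improvement probability $(n-i)p(1-p)^{n-1}$, whereas the paper divides by the lower bound $i(n-i)p^2(1-p)^{n-2}$ on the probability of creating a different level-$i$ offspring (conditioning on leaving \emph{towards level~$i$}); both yield an $O((n-i)/n)$ contribution and the remaining calculations coincide.
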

\begin{proof}
The \mlGA only spends time in an error state if it is actually reached. So we first calculate the probability that state $i.$error is reached from either Case~$i.1$, $i.2$, or $i.3$.

Lemma~\ref{lem:takeover-to-copies} states that the probability of reaching a population with different individuals on level~$i$ before reaching Case~$i.2$ or a better fitness level is
\[
O(\mu \log \mu) \cdot \left(p_c + \frac{n-i}{n}\right) = \mathord{O}\mathord{\left(p_c + \frac{n-i}{n}\right)}.
\]
We pessimistically ignore the possibility that Case~$i.3$ might be reached if this happens; thus the above is an upper bound for the probability of reaching $i.$error from Case~$i.1$.

Recall that in Case~$i.2$ all individuals are identical, so crossover has no effect and the \mlGA only performs mutations.
First consider the case $\lambda=1$. Note that $i \ge n - n/\log n$, along with $p=c/n$, implies that $i(n-i)p^2(1-p)^{-2} \le 1/2$, hence Lemma~\ref{lem:probability-mutation-on-same-level} is in force.
According to Lemma~\ref{lem:probability-mutation-on-same-level} the probability of leaving Case~$i.2$ by creating a different individual with fitness~$i$ is at least $i(n-i)p^2(1-p)^{n-2}$. The probability of doing this with an offspring of Hamming distance greater than 2 to its parent is at most $2i^2(n-i)^2 p^4 (1-p)^{n-4}$ (second statement of Lemma~\ref{lem:probability-mutation-on-same-level}). So the conditional probability of reaching the error state when leaving Case~$i.2$ towards another case on level~$i$ is at most
\begin{equation}
\label{eq:error-from-i.2}
\frac{2i^2(n-i)^2 p^4 (1-p)^{n-4}}{i(n-i)p^2(1-p)^{n-2}} = 2i(n-i)p^2(1-p)^{-2}.
\end{equation}
In case $\lambda > 1$ note that Case~$i.3$ is reached in case there is a single offspring with fitness~$i$ and Hamming distance~$2$ to its parent. Such an offspring is guaranteed to survive as we assume $\lambda < \mu$ and offspring with many duplicates are removed first. Thus, in case several offspring with fitness~$i$ and differing from their parent are created, \emph{all} of them need to have Hamming distance larger than~2 in order to reach $i.$error from Case~$i.2$. This probability decreases with increasing $\lambda$, hence the probability bound~\eqref{eq:error-from-i.2} also holds for $\lambda > 1$.

Finally, Case~$i.3$ implies that there exists a parent-offspring pair $x, y$ with Hamming distance~2. In a new generation these two offspring -- or at least one copy of each -- will always survive: individuals with multiple duplicates are removed first, and in case among current parents and offspring more than $\mu$ individuals exist with no duplicates, $x$ and $y$ will be preferred over newly created offspring. So the probability of reaching the error state from Case~$i.3$ is 0.

In case the error state is reached, according to~\eqref{eq:improvement-by-mutation-main} we have a probability of at least $\gamma \cdot \frac{n-i}{n}$ of finding a better individual in one offspring creation, for a constant~$\gamma > 0$. Using Lemma~\ref{lem:success-probability-lambda-offspring} as before, this translates to at most $\lambda + \frac{1}{\gamma} \cdot \frac{n}{n-i}$ expected function evaluations. So the expected time spent in Case~$i.$error is at most
\begin{align*}
& \lambda + \left(2i(n-i)p^2(1-p)^{-2} + \mathord{O}\mathord{\left(p_c + \frac{n-i}{n}\right)}\right) \cdot \frac{1}{\gamma} \cdot \frac{n}{n-i}\\
=\;& \lambda + \left(\frac{2}{\gamma} \cdot inp^2(1-p)^{-2} + \mathord{O}\mathord{\left(p_c \cdot \frac{n}{n-i} + 1\right)}\right)\\
 =\;& \mathord{O}\mathord{\left(p_c \cdot \frac{n}{n-i}\right)} + O(1)
\end{align*}
as both $\lambda = O(1)$ and $inp^2(1-p)^{-2} \le (pn)^2 \cdot (1-p)^{-2} = O(1)$.
The total expected time across all error states is at most
\[
\mathord{O}\mathord{\left(n + p_c \cdot n \cdot \sum_{i=0}^{n-1} \frac{1}{n-i}\right)} = O(n + p_c \cdot n \ln n). \qedhere
\]
\end{proof}

Now Theorem~\ref{the:upper-bound-k-point-crossover-variable-p} follows from all previous lemmas.
\begin{proof}[Proof of Theorem~\ref{the:upper-bound-k-point-crossover-variable-p}]
The claimed upper bound now follows from adding the upper bounds on the expected time on the smaller fitness levels (Lemma~\ref{lem:first-fitness-levels}) to the expected times spent in all considered cases (Lemma~\ref{lem:time-i-1-2-3} and Lemma~\ref{lem:time-i-error}).
\end{proof}

We believe that some of the technical conditions from Theorem~\ref{the:upper-bound-k-point-crossover-variable-p} involving $\mu, \lambda$, and $p_c$ could be relaxed if it was possible to generalize Lemmas~\ref{lem:crossover-successful} and~\ref{lem:bit-distance-distribution} towards more than 2 differing bits between individuals of equal fitness.

Figure~\ref{fig:different-functions}, discussed in the following Section~\ref{sec:extensions}, presents further experiments and statistical tests, which includes a comparison of uniform crossover and $k$\nobreakdash-point crossover in the greedy \TwoGA.

\section{Extensions to Other Building-Block Functions}
\label{sec:extensions}

\subsection{Royal Roads and Monotone Polynomials}
\label{sec:experiments}

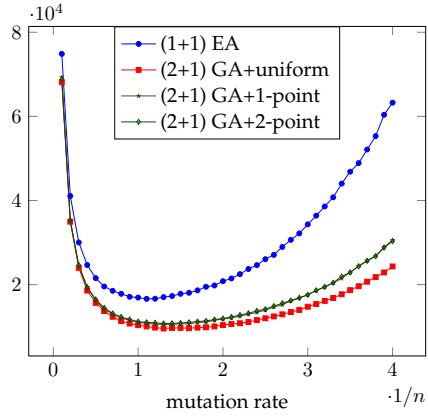
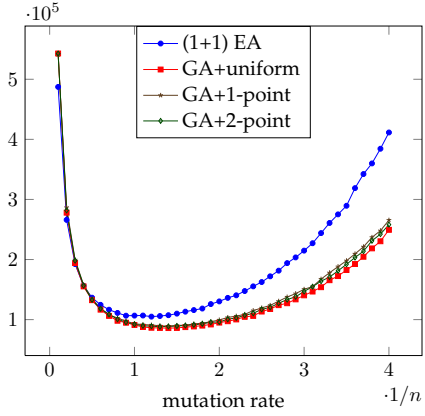
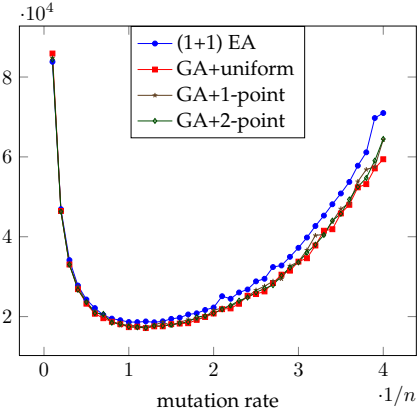
\begin{figure}[tb]
\centering
\subfigure[\ONEMAX]{
\begin{tikzpicture}[scale=0.77]
\begin{axis}[legend cell align=left, xlabel={mutation rate},
xtick scale label code/.code={$\cdot 1/n$},
]
\pgfplotstableread[header=false]{experiments-onemax/(1+1)EA-onemax-1000runs-n=1000-p=0.1-by-n,...,4-by-n.txt}\pOneOne
\pgfplotstableread[header=false]{experiments-onemax/greedy-GA-fastxover-onemax-1000runs-n=1000-p=0.1-by-n,...,4-by-n.txt}\pGreedyGA
\pgfplotstableread[header=false]{experiments-onemax/greedy-GA-1-point-crossover-onemax-1000runs-n=1000-p=0.1-by-n,...,4-by-n.txt}\pGAonept
\pgfplotstableread[header=false]{experiments-onemax/greedy-GA-2-point-crossover-onemax-1000runs-n=1000-p=0.1-by-n,...,4-by-n.txt}\pGAtwopt
\addplot table[x expr=0.1*0.001*(\thisrow{1}), y index=7]{\pOneOne};
\addplot table[x expr=0.1*0.001*(\thisrow{1}), y index=7]{\pGreedyGA};
\addplot table[x expr=0.1*0.001*(\thisrow{1}), y index=7]{\pGAonept};
\addplot table[x expr=0.1*0.001*(\thisrow{1}), y index=7]{\pGAtwopt};
\legend{\EA, \TwoGA{}+uniform, \TwoGA{}+1-point, \TwoGA{}+2-point}
\end{axis}
\end{tikzpicture}
}
\\
\subfigure[Royal road]{
\begin{tikzpicture}[scale=0.77]
\begin{axis}[legend cell align=left, xlabel={mutation rate},
xtick scale label code/.code={$\cdot 1/n$},
]
\pgfplotstableread[header=false]{experiments-onemax/(1+1)EA-RoyalRoad5fast-1000runs-n=1000-p=0.1-by-n,...,4-by-n.txt}\pOneOne
\pgfplotstableread[header=false]{experiments-onemax/greedy-GA-RoyalRoad5fast-1000runs-n=1000-p=0.1-by-n,...,4-by-n.txt}\pGreedyGA
\pgfplotstableread[header=false]{experiments-onemax/greedy-GA-1-point-crossover-RoyalRoad5fast-1000runs-n=1000-p=0.1-by-n,...,4-by-n.txt}\pGAonept
\pgfplotstableread[header=false]{experiments-onemax/greedy-GA-2-point-crossover-RoyalRoad5fast-1000runs-n=1000-p=0.1-by-n,...,4-by-n.txt}\pGAtwopt
\addplot table[x expr=0.1*0.001*(\thisrow{1}), y index=7]{\pOneOne};
\addplot table[x expr=0.1*0.001*(\thisrow{1}), y index=7]{\pGreedyGA};
\addplot table[x expr=0.1*0.001*(\thisrow{1}), y index=7]{\pGAonept};
\addplot table[x expr=0.1*0.001*(\thisrow{1}), y index=7]{\pGAtwopt};
\legend{\EA, GA+uniform, GA+1-point, GA+2-point}
\end{axis}
\end{tikzpicture}
}
\subfigure[Random polynomials]{
\begin{tikzpicture}[scale=0.77]
\begin{axis}[legend cell align=left, xlabel={mutation rate},
xtick scale label code/.code={$\cdot 1/n$},
]
\pgfplotstableread[header=false]{experiments-onemax/(1+1)EA-RandomPolynomials1000-5-1000runs-p=0.1-by-n,...,4-by-n.txt}\pOneOne
\pgfplotstableread[header=false]{experiments-onemax/greedyGA-RandomPolynomials1000-5-1000runs-p=0.1-by-n,...,4-by-n.txt}\pGreedyGA
\pgfplotstableread[header=false]{experiments-onemax/greedyGA-1ptxover-RandomPolynomials1000-5-1000runs-p=0.1-by-n,...,4-by-n.txt}\pGAonept
\pgfplotstableread[header=false]{experiments-onemax/greedyGA-2ptxover-RandomPolynomials1000-5-1000runs-p=0.1-by-n,...,4-by-n.txt}\pGAtwopt
\addplot table[x expr=0.1*0.001*(\thisrow{1}), y index=7]{\pOneOne};
\addplot table[x expr=0.1*0.001*(\thisrow{1}), y index=7]{\pGreedyGA};
\addplot table[x expr=0.1*0.001*(\thisrow{1}), y index=7]{\pGAonept};
\addplot table[x expr=0.1*0.001*(\thisrow{1}), y index=7]{\pGAtwopt};
\legend{\EA, GA+uniform, GA+1-point, GA+2-point}
\end{axis}
\end{tikzpicture}
}
\caption{Average optimization times over 1000 runs for \EA and the greedy~\TwoGA with various crossover operators on functions with $n=1000$ bits: \ONEMAX, a royal road function with block size~5, and random polynomials with 1000 unweighted monomials of degree~5. The mutation rate is~$c/n$ with $c \in \{0.1, 0.2, \dots, 4\}$.}
\label{fig:different-functions}
\end{figure}

\begin{table*}
\centerline{
\footnotesize
\begin{tabular}{lllllccc}
& & \EA & uniform & 1-point \\
\hline
\ONEMAX & uniform & $p < 10^{-3}$ & & \\
& 1-point & $p < 10^{-3}$  & $p < 10^{-3}$ for $c \ge 0.4$ & \\
& 2-point & $p < 10^{-3}$  & $p < 10^{-3}$ for $c \ge 0.3$ & $p > 10^{-3}$ (11 ex.)\\
\hline
Royal road & uniform & $p<10^{-3}$ for $c \ge 0.6$ & & \\
& 1-point & $p<10^{-3}$ for $c \ge 0.6$ & $p < 10^{-3}$ for $c \ge 0.8$ (1 ex.) & \\
& 2-point & $p<10^{-3}$ for $c \ge 0.6$ & $p < 10^{-3}$ for $c \ge 1.4$ (5 ex.) & $p > 10^{-3}$ (6 ex.)\\
\hline
Random & uniform & $p<10^{-3}$ (1 ex.) & & \\
Polynomial & 1-point & $p<10^{-3}$ for $c \ge 0.3$ (3 ex.) & $p > 10^{-3}$ (13 ex.) & \\
& 2-point & $p < 10^{-3}$ for $c \ge 0.4$ (1 ex.) & $p > 10^{-3}$ (6 ex.) & $p > 10^{-3}$ (6 ex.)\\
\hline
\end{tabular}
}
\caption{Summary of the results of two-sided Mann-Whitney $U$ tests on the data from Figure~\ref{fig:different-functions}. For each function the table shows pairwise comparisons between the \EA and the greedy~GA with uniform, 1-point, and 2-point crossover, resp. Here $p$ is the $p$-value output by the statistics package R (version 2.8.1) and $c$ is the constant in the mutation rate $c/n$. Each cell describes a rule for~$p$ subject to a minimum value of~$c$ and gives the number of exceptions from this rule where applicable.}
\label{tab:statistical-tests}
\end{table*}

So far, our theorems and proofs have been focused on \ONEMAX only. This is because we do have very strong results about the performance of EAs on \ONEMAX at hand. However, the insights gained stretch far beyond \ONEMAX. Royal road functions generally consist of larger blocks of bits. All bits in a block need to be set to~1 in order to contribute to the fitness; otherwise the contribution is~0. All blocks contribute the same amount to the fitness, and the fitness is just the sum of all contributions.

The fundamental insight we have gained for neutral mutations also applies to royal road functions. If there is a mutation that completes one block, but destroys another block, this is a neutral mutation and the offspring will be stored in the population of a \mlGA. Then crossover can recombine all finished blocks in the same way as for \ONEMAX. The only difference is that the destroyed block may evolve further. More neutral mutations can occur that only alter bits in the destroyed block. Then the population can be dominated by many similar solutions, and it becomes harder for crossover to find a good pair for recombination. However, as crossover has generally a very high probability of finding improvements, the last effect probably plays only a minor role.

A theoretical analysis of general royal roads up to the same level of detail as for \ONEMAX is harder, but not impossible. So far results on royal roads and monotone polynomials have been mostly asymptotic~\cite{Wegener2005c,Doerr2013}. Only recently, Doerr and K{\"u}nnemann~\cite{Doerr2013b} presented a tighter runtime analysis of offspring populations for royal road functions, which may lend itself to a generalization of our results on \ONEMAX in future work.

For now, we use experiments to see whether the performance is similar to that on \ONEMAX. We use royal roads with $n=1000$ bits and block size~5, i.\,e., we have 200 pairwise disjoint blocks of 5 bits each. We also consider random monotone polynomials. Instead of using disjoint blocks, we use $1000$ monomials of degree~5 (conjunctions of 5 bits): each monomial is made up of 5 bit positions chosen uniformly at random, without replacement. This leads to a function similar to royal roads, but ``blocks'' are broken up and can share bits; bit positions are completely random. Figure~\ref{fig:different-functions} shows the average optimization times in 1000 runs on all these functions, for the \EA and the greedy~\TwoGA with uniform, 1-point, and 2-point crossover. We chose the last two because $k$\nobreakdash-point crossovers for odd~$k$ treat ends of bit strings differently from those for even~$k$: for odd $k$ two bits close to opposite ends of a bitstring have a high probability to be taken from different parents, whereas for even $k$ there is a high chance that both will be taken from the same parent (cf.\ Lemma~\ref{lem:crossover-successful} for $k=2$ and the special case of~$k=1$).

For consistency and simplicity, we use $p_c=1$ and the tie-breaking rule dup-rnd in all settings, that is, ties in fitness are broken towards minimum numbers of duplicates and any remaining ties are broken uniformly at random. For \ONEMAX this does not perfectly match the conditions of Theorem~\ref{the:upper-bound-k-point-crossover-variable-p} as they require a lower crossover probability, $p_c = o(1)$, and tie-breaking rule dup-old. But the experiments show that $k$\nobreakdash-point crossover is still effective when these conditions are not met.

On \ONEMAX both $k$\nobreakdash-point crossovers are better than the \EA, but slightly worse than uniform crossover. This is in accordance with the observation from our analyses that improvements with $k$\nobreakdash-point crossover might be harder to find, in case the differing bits are in close proximity.

For royal roads the curves are very similar. The difference between the \EA and the greedy~\TwoGA is just a bit smaller.
For random polynomials there are visible differences, albeit smaller.
Mann-Whitney $U$ tests confirm that wherever there is a noticeable gap between the curves, there is a statistically significant difference on a significance level of $0.001$.
The outcome of Mann-Whitney $U$ tests is summarized in Table~\ref{tab:statistical-tests}.

For very small mutation rates $c/n$ the tests were not significant. For mutation rates no less than $0.6/n$ all differences between the \EA and all greedy~\TwoGA{}s were statistically significant, apart from a few exceptions on random polynomials. For \ONEMAX the difference between uniform crossover vs.\ $k$\nobreakdash-point crossover was significant for $c \ge 0.4$. For royal roads the majority of such comparisons showed statistical significance, with a number of exceptions. However, for random polynomials the majority of comparisons were not statistically significant. Most comparisons between 1-point and 2-point crossover did not show statistical significance.

These findings give strong evidence that the insights drawn from the analysis on \ONEMAX transfer to broader classes of functions where building blocks need to be assembled.

\subsection{Linear Functions}
\label{sec:linear-functions}

Another interesting question is in how far the theoretical analyses in this work extend to cases where building blocks have different weights. The simplest such case is the class of linear functions, defined as
\[
f(x) = \sum_{i=1}^n w_i x_i
\]
where $w_i > 0$ are positive real-valued weights.

Doerr, Doerr, and Ebel~\cite{Doerr2013a} provided empirical evidence that their (1+($\lambda$, $\lambda$))~EA is faster than the \EA on linear functions with weights drawn uniformly at random from $[1, 2]$.

It is an open question whether this also holds for more common GAs, that is, those implementing Algorithm~\ref{alg:Scheme-GA}. Experiments in~\cite{Doerr2013a} on the greedy \TwoGA found that on random linear functions ``no advantage of the \TwoGA over the \EA is visible''. We provide an explanation for this observation and reveal why the \TwoGA is not well suited for weighted building blocks, whereas other GAs might be.

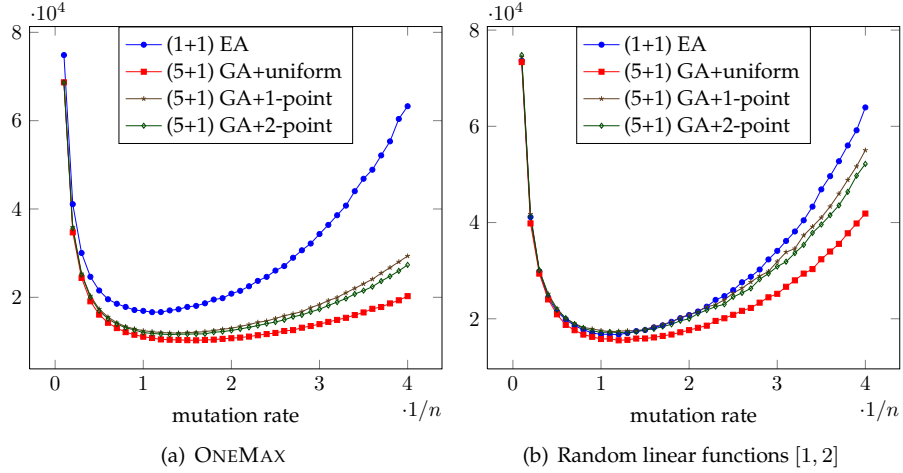
\begin{figure*}[hbt]
\centering
\subfigure[\ONEMAX]{
\begin{tikzpicture}[scale=0.8]
\begin{axis}[legend cell align=left, xlabel={mutation rate},
xtick scale label code/.code={$\cdot 1/n$},
]
\pgfplotstableread[header=false]{experiments-onemax/(1+1)EA-onemax-1000runs-n=1000-p=0.1-by-n,...,4-by-n.txt}\pOneOne
\pgfplotstableread[header=false]{experiments-onemax/greedy-GA-fastxover-onemax-1000runs-n=1000-p=0.1-by-n,...,4-by-n.txt}\pGreedyGA
\pgfplotstableread[header=false]{experiments-onemax/(5+1)GA-duptiebreak-onemax-1000runs-n=1000-p=0.1-by-n,...,4-by-n.txt}\pFiveGA
\pgfplotstableread[header=false]{experiments-onemax/(5+1)GA-1ptxover-duptiebreak-onemax-1000runs-n=1000-p=0.1-by-n,...,4-by-n.txt}\pFiveGAOnept
\pgfplotstableread[header=false]{experiments-onemax/(5+1)GA-2ptxover-duptiebreak-onemax-1000runs-n=1000-p=0.1-by-n,...,4-by-n.txt}\pFiveGATwopt
\addplot table[x expr=0.1*0.001*(\thisrow{1}), y index=7]{\pOneOne};
\addplot table[x expr=0.1*0.001*(\thisrow{1}), y index=7]{\pFiveGA};
\addplot table[x expr=0.1*0.001*(\thisrow{1}), y index=7]{\pFiveGAOnept};
\addplot table[x expr=0.1*0.001*(\thisrow{1}), y index=7]{\pFiveGATwopt};
\legend{\EA, {(5+1)~GA+uniform}, {(5+1)~GA+1-point}, {(5+1)~GA+2-point}}
\end{axis}
\end{tikzpicture}
}
\subfigure[Random linear functions {$[1, 2]$}]{
\begin{tikzpicture}[scale=0.8]
\begin{axis}[legend cell align=left, xlabel={mutation rate},
xtick scale label code/.code={$\cdot 1/n$},
]
\pgfplotstableread[header=false]{experiments-onemax/(1+1)EA-Randomlinear1-2-1000runs-n=1000-p=0.1-by-n,...,4-by-n.txt}\pOneOne
\pgfplotstableread[header=false]{experiments-onemax/(5+1)GA-duptiebreak-Randomlinear1-2-1000runs-n=1000-p=0.1-by-n,...,4-by-n.txt}\pFiveGA
\pgfplotstableread[header=false]{experiments-onemax/(5+1)GA-1ptxover-duptiebreak-Randomlinear1-2-1000runs-n=1000-p=0.1-by-n,...,4-by-n.txt}\pFiveGAOnept
\pgfplotstableread[header=false]{experiments-onemax/(5+1)GA-2ptxover-duptiebreak-Randomlinear1-2-1000runs-n=1000-p=0.1-by-n,...,4-by-n.txt}\pFiveGATwopt
\addplot table[x expr=0.1*0.001*(\thisrow{1}), y index=7]{\pOneOne};
\addplot table[x expr=0.1*0.001*(\thisrow{1}), y index=7]{\pFiveGA};
\addplot table[x expr=0.1*0.001*(\thisrow{1}), y index=7]{\pFiveGAOnept};
\addplot table[x expr=0.1*0.001*(\thisrow{1}), y index=7]{\pFiveGATwopt};
\legend{\EA, {(5+1)~GA+uniform}, {(5+1)~GA+1-point}, {(5+1)~GA+2-point}}
\end{axis}
\end{tikzpicture}
}
\caption{Average optimization times over 1000 runs for \EA and a (5+1)~GA with uniform parent selection and various crossover operators on functions with $n=1000$ bits: \ONEMAX and random linear functions with weights drawn independently, uniformly at random from $[1, 2]$, and anew for each run.}
\label{fig:runtime-linear-functions}
\end{figure*}

The reason why the \TwoGA behaves like the \EA in the presence of weights is that in case the current population of the \TwoGA contains two members with different fitness, the \TwoGA ignores the inferior one. So it behaves as if the population only contained the fitter individual. Since the \TwoGA will select the fitter individual twice for crossover, followed by mutation, it essentially just mutates the fitter individual. This behavior of the \TwoGA then equals that of a \EA working on the fitter individual.

The \TwoGA is more efficient than the \EA on \ONEMAX (and other building-block functions where all building blocks are equally important) as it can easily generate and store individuals with equal fitness in the population, and recombine their different building blocks. However, in the presence of weights, chances of creating individuals of equal fitness might be very slim, and then the \TwoGA behaves like the \EA.

\begin{theorem}
\label{the:TwoGA-linear}
As long as the population of the \TwoGA does not contain two different individuals with the same fitness, the \TwoGA is equivalent to the \EA.

On functions where all search points have different fitness values, the \TwoGA is equivalent to the \EA. This includes linear functions with extreme weights like
\[
\mathrm{BinVal}(x) := \sum_{i=1}^n 2^{n-i} x_i
\]
and, more generally, functions where $w^{(i)} > \sum_{j=i+1}^n w^{(j)}$ for all $1 \le i \le n$, where $w^{(i)}$ denotes the $i$-th largest weight. It also includes, almost surely, random linear functions with weights being drawn from some real-valued interval $[a, b]$ with $a < b$.
\end{theorem}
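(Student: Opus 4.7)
The plan is to reduce both statements to a single per-step equivalence: as long as the population of the greedy~\TwoGA does not contain two distinct individuals sharing the same fitness, one iteration of the \TwoGA acts on the current best exactly like one iteration of the \EA. The second statement then follows by observing that on injective fitness functions the required invariant is automatically preserved for all time, so the equivalence applies throughout the entire run.

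For the per-step equivalence I would let the current population be $\PCal = \{x,y\}$ with $x$ a fittest member, so under the invariant either $x=y$ or $f(x) > f(y)$. Greedy parent selection draws $x_1,x_2$ uniformly from the set of current-best individuals, which in both sub-cases consists only of copies of $x$. Hence $x_1 = x_2 = x$, and since every mask-based crossover returns $x$ when applied to two copies of $x$, the pre-mutation offspring equals $x$. Standard bit mutation then produces $y'$ with the same distribution as a mutant generated by the \EA from parent $x$. The environmental selection retains the two fittest of $\{x,y,y'\}$, and a brief case analysis on the relative order of $f(y')$, $f(x)$, and $f(y)$, together with the tie-breaking rule favouring non-duplicates, shows that the new current best in $\PCal$ equals $y'$ if $f(y') \ge f(x)$ and equals $x$ otherwise. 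This is precisely the acceptance rule of the \EA, so the marginal distribution of the current best after one step agrees with the corresponding \EA step started from $x$.

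For the second statement I would note that on any $f$ taking pairwise distinct values on $\{0,1\}^n$ the condition $f(u)=f(v)$ forces $u=v$, so the invariant holds at initialisation and is preserved by every step, and the first statement yields equivalence over the whole run. It remains to verify injectivity for the stated classes. For $\mathrm{BinVal}$ this is immediate since the values are the binary expansions of $0,\ldots,2^n-1$. For weights satisfying $w^{(i)} > \sum_{j>i} w^{(j)}$ for all $i$, induction on the largest index at which $x$ and $y$ differ (after sorting the weights in decreasing order) shows $|f(x)-f(y)|$ is strictly positive, because the contribution of the differing bit with the largest weight strictly dominates the sum of all remaining contributions. For random weights drawn independently from $[a,b]$ with $a<b$, the event $f(x)=f(y)$ for a fixed $x \neq y$ is $\sum_i w_i(x_i-y_i)=0$, a hyperplane of Lebesgue measure zero in the weight space; a union bound over the finitely many pairs gives probability zero, so almost surely $f$ is injective.

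The main obstacle I anticipate is the case analysis for the environmental selection when $f(y')$ ties with $f(x)$ or $f(y)$: in particular, when $y'=x$ (mutation flipped no bits) the tie-breaking rule based on duplicate counts must not inadvertently evict $x$ from the population, and when $y' \neq x$ with $f(y')=f(x)$ the invariant is in fact broken, which is precisely why the second statement needs the injectivity hypothesis rather than following from greedy parent selection alone. Verifying these tie-break details against Algorithm~\ref{alg:TwoGA} is routine but is where care is needed.
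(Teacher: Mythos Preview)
Your proposal is correct and follows essentially the same approach as the paper. The per-step equivalence via greedy parent selection forcing both parents to be the unique best, and the inductive argument for injectivity under the extreme-weight condition, match the paper's reasoning (which the paper largely delegates to the discussion preceding the theorem).

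The one place you diverge is the random-weights case: the paper argues sequentially, adding one weight at a time and noting that the new weight almost surely avoids a finite set of values that would create a collision; you instead observe that each equality $\sum_i w_i(x_i-y_i)=0$ defines a measure-zero hyperplane and take a union bound over the finitely many pairs. Both are standard and valid; your argument is arguably cleaner and makes the independence/absolute-continuity requirement on the weight distribution explicit, whereas the paper's phrasing ``drawn from some real-valued interval'' leaves this implicit.
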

\begin{proof}
The first two statements have been established in the preceding discussion.

For functions where $w^{(i)} > \sum_{j=i+1}^n w^{(j)}$ for all $1 \le i \le n$, all search points with a 1 on the bit of weight $w^{(i)}$ have a higher fitness than all search points where this bit is 0, provided that all bits with larger weights are being fixed. It follows inductively that all search points have different fitness values.

For random linear functions consider the function being constructed sequentially by adding new bits with randomly drawn weights. Assume that after adding $i$ bits, all $2^i$ bit patterns have different fitness values. This is trivially true for 0 bits. When adding a new bit~$i+1$, a fitness value can only be duplicated with these $i+1$ bits if the $i+1$-st weight is equal to any selection of weights from the first $i$ bits. Since there are at most $2^{i}$ selections, which is finite, the $i+1$-st weight will almost surely be different from all of these. The statement then follows by induction.
\end{proof}

In a sense, the \TwoGA is not able to benefit from crossover in the settings from Theorem~\ref{the:TwoGA-linear} since its greedy parent selection suppresses diversity in the population.

So, in order for a GA to benefit from crossover, the population needs to be able to maintain and select individuals with different building blocks and slightly different fitness values for long enough, so that crossover has a good chance of combining those building blocks.
The (1+($\lambda$, $\lambda$))~EA~\cite{Doerr2013a} achieves this using a cleverly designed two-stage offspring creation process: mutation first creates diversity and the best among $\lambda$ mutants is retained and recombined with its parent $\lambda$ times. However, this does not explain why crossover is beneficial in common GA designs.

A promising common GA design does not need to be sophisticated -- Figure~\ref{fig:runtime-linear-functions} shows that already a simple (5+1)~GA with uniform parent selection performs significantly better than the \EA (and hence the greedy \TwoGA). The benefit of crossover is smaller than that on \ONEMAX, but the main qualitative observations are the same: the average optimization time is smaller with crossover, and mutation rates slightly larger than~$1/n$ further improve performance.

One-sided Mann-Whitney $U$ tests on a significance level of $0.001$ showed that the (5+1)~GA with uniform crossover was significantly faster than the \EA on random linear functions, for mutation rates no less than $0.6/n$.
Both $k$\nobreakdash-point crossovers gave mixed results: they were slower than the \EA for low mutation rates ($0.4 \le c \le 1.2$, except for $c=0.6$, for 1-point crossover and $0.9 \le c \le 1.1$ for 2-point crossover), but faster for high mutation rates ($c=2.3$ and $c \ge 2.6$ for 1-point crossover, $c=2.0$ and $c \ge 2.2$ for 2-point crossover).

This shows that uniform crossover can speed up building-block assembly for weighted building blocks -- albeit not for all \mlGA{}s, and in particular not for the greedy \TwoGA. Proving this rigorously for random or arbitrary linear functions remains a challenging open problem, and so is identifying characteristics of \mlGA{}s for which crossover is beneficial in these cases.

\section{Conclusions and Future Work}
\label{sec:conclusions}

We have demonstrated rigorously and intuitively that crossover can speed up building block assembly on \ONEMAX, with evidence that the same holds for a broad class of functions. The basic insight is that mutations that create new building blocks while destroying others can still be useful: mutants can be stored in the population and lead to a successful recombination with their parents in a later generation. This effect makes every \mlGA{} with cut selection and moderate population sizes twice as fast as every mutation-based EA on \ONEMAX. In other words, adding crossover to any such \mlEA halves the expected optimization time (up to small-order terms).
This applies to uniform crossover and to $k$\nobreakdash-point crossover, for arbitrary values of~$k$.

Furthermore, we have demonstrated how to analyze parent and offspring populations as in \mlEA{}s and \mlGA{}s. As long as both $\mu$ and $\lambda$ are moderate, so that exploitation is not slowed down, we obtained essentially the same results for arbitrary \mlGA{}s as for the simple greedy \TwoGA analyzed in~\cite{Sudholt2012b}. This work provides novel techniques for the analysis of ($\mu$+$\lambda$)-type algorithms, including Lemmas~\ref{lem:takeover} and~\ref{lem:success-probability-lambda-offspring}, which may prove useful in further studies of EAs.

Another intriguing conclusion following naturally from our analysis is that the optimal mutation rate for GAs such as the greedy \TwoGA changes from $1/n$ to $(1+\sqrt{5})/2 \cdot 1/n \approx 1.618/n$ when using uniform crossover. This is simply because neutral mutations and hence multi-bit mutations become more useful. Experiments are in perfect accordance with the theoretical results for \ONEMAX. For other functions like royal roads and random polynomials they indicate that the performance differences also hold in a much more general sense. We have empirical evidence that this might also extend to linear functions, and weighted building blocks in general, albeit this does not apply to the greedy \TwoGA. The discussion from Section~\ref{sec:linear-functions} has shown that the population must be able to store individuals with different building blocks for long enough so that crossover can combine them, even though some individuals might have inferior fitness values and be subject to replacement.

Our results give novel, intuitive and rigorous answers to a question that has been discussed controversially for decades.

There are plenty of avenues for future work. We would like to extend the theoretical analysis of \mlGA{}s to royal road functions and monotone polynomials. Also investigating weighted building blocks, like in linear functions, is an interesting and challenging topic for future work.

Our \mlGA{}s benefit from crossover and an increased mutation rate because cut selection removes offspring with inferior fitness. As such, cut selection counteracts disruptive effects of crossover and an increase of the mutation rate. The situation is entirely different in generational GAs, where Ochoa, Harvey, and Buxton reported that introducing crossover can \emph{decrease} the optimal mutation rate~\cite{Ochoa1999}. Future work could deal with complementing these different settings and investigating the balance between selection pressure for replacement selection and the optimal mutation rate.


\subsubsection*{Acknowledgments}
The author was partially supported by EPSRC grant EP/D052785/1 while being a member of CERCIA, University of Birmingham, UK. The research leading to these results has received funding from the
European Union Seventh Framework Programme (FP7/2007-2013) under grant
agreement no 618091 (SAGE). The author would like to thank the reviewers for their detailed and constructive comments that helped to improve the manuscript.

\bibliographystyle{abbrv}
\bibliography{onemax-short}

\appendix

\section{Appendix}

The appendix contains proofs of lemmas omitted from the main part.

\begin{proof}[Proof of Lemma~\ref{lem:first-fitness-levels}]
If the current population has a best individual of fitness~$j < i$, by Lemma~\ref{lem:takeover} after an expected number of $O((\mu + \lambda) \log \mu) = O(1)$ function evaluations all individuals will have fitness at least~$j$. Then one offspring creation results in an improvement if no crossover is being used, and mutation flips exactly one out of $n-j$ 0\nobreakdash-bits. The probability for this event is
\begin{equation*}
\label{eq:improvement-by-mutation-appendix}
(1 - p_c) \cdot (n-j)p(1-p)^{n-1} \ge \gamma \cdot \frac{n-j}{n}
\end{equation*}
for some constant $\gamma > 0$, due to our conditions for $p$ and~$p_c$.

Using Lemma~\ref{lem:success-probability-lambda-offspring}, the expected time until a fitness level $i \ge n - n/\log n$ is reached for the first time is therefore at most
\begin{align*}
\sum_{j=0}^{n - (n/\log n) - 1} \left(O(1) + \lambda + \frac{n}{\gamma(n-j)}\right)
=\;& O(n) + \frac{n}{\gamma} \cdot \sum_{j=(n/\log n) + 1}^{n} \frac{1}{j}\\
\le\;& O(n) + \frac{n}{\gamma} \cdot \int_{j=n/\log n}^{n} \frac{1}{j} \;\mathrm{d}j\\
=\;& O(n) + \frac{n}{\gamma} \cdot \left(\ln n - \ln(n/\log n)\right)\\
=\;& O(n) + \frac{n}{\gamma} \cdot \ln(\log n)\\
=\;& o(n \log n).\qedhere
\end{align*}
\end{proof}

\begin{proof}[Proof of Lemma~\ref{lem:probability-mutation-on-same-level}]
In order to create a different search point on the same fitness level, there must be some integer $\ell \in \{1, \dots, \min\{i, n-i\}\}$ such that $\ell$ 1\nobreakdash-bits flip to~0 and $\ell$ 0\nobreakdash-bits flip to~1. This is a necessary and sufficient condition, so
\begin{equation}
\label{eq:probability-bound-p(i)}
p^{(i)} = \sum_{\ell=1}^{\min\{i, n-i\}} \binom{i}{\ell} \binom{n-i}{\ell} p^{2\ell} (1-p)^{n-2\ell}.
\end{equation}
The case $\ell=1$ yields the claimed lower bound. For the upper bound we bound the above term, using $\binom{n}{k} \le n^k/(k!)$ to bound both binomial coefficients:
\begin{align*}
p^{(i)} \le\;& (1-p)^{n} \sum_{\ell=1}^{\min\{i, n-i\}} \frac{i^\ell(n-i)^\ell}{\ell!\ell!} \cdot  p^{2\ell} (1-p)^{-2\ell}\\
\le\;& (1-p)^{n} \sum_{\ell=1}^{\infty} \left(\frac{i(n-i)p^2}{(1-p)^2}\right)^\ell\\
=\;& (1-p)^{n} \frac{\frac{i(n-i)p^2}{(1-p)^2}}{1-\frac{i(n-i)p^2}{(1-p)^2}}.
\end{align*}
Applying $\frac{1}{1-x} = 1 + \frac{x}{1-x} \le 1+2x$ for $x \le 1/2$ to $x:= \frac{i(n-i)p^2}{(1-p)^2}$ in the above formula yields
\[
(1-p)^{n} \cdot \frac{i(n-i)p^2}{(1-p)^2} \cdot \left(1+\frac{2i(n-i)p^2}{(1-p)^2}\right)
\]
and hence the claimed upper bound.

The second statement follows from the upper bound and the fact that the offspring has Hamming distance~2 in the case~${\ell=1}$, i.\,e., with probability $i(n-i)p^2(1-p)^{n-2}$.
\end{proof}

\begin{proof}[Proof of Lemma~\ref{lem:jump-to-level-i}]
A search point with $i$ ones is created from a parent with $i-d < i$ ones if there is a value $\ell$ such that $d+\ell$ 0\nobreakdash-bits flip to 1 and $\ell$ 1\nobreakdash-bits flip to 0. The sought probability therefore is
\begin{align*}
& \max_{d \ge 1} \left( \sum_{\ell=0} \binom{n-i+d}{d+\ell} \binom{i-d}{\ell} p^{d+2\ell} (1-p)^{n-d-2\ell}\right)\\
\le\;& \max_{d \ge 1} \left( \sum_{\ell=0} \frac{(n-i+d)^{d+\ell}}{(d+\ell)!} \cdot \frac{(i-d)^\ell}{\ell!} \cdot p^{d+2\ell}\right)\\
=\;& \max_{d \ge 1} \left( \sum_{\ell=0} \frac{(p(n-i+d))^{d}}{(d+\ell)!} \cdot \frac{(p^2 \cdot (i-d)(n-i+d))^\ell}{\ell!}\right)\\
\le\;& \max_{d \ge 1} \left( \frac{(p(n-i+d))^{d}}{d!} \right) \cdot \sum_{\ell=0} \frac{((pn)^2/4)^\ell}{\ell!}\\
=\;& \max_{d \ge 1} \left( \frac{(p(n-i+d))^{d}}{d!} \right) \cdot e^{(pn)^2/4}.
\end{align*}
Using $1/(d!) \le (e/d)^d$, we bound the $\max$ term as
\begin{align*}
\max_{d \ge 1} \left( \frac{(p(n-i+d))^{d}}{d!} \right)
\le\;& \max_{d \ge 1} \left( \frac{(ep(n-i+d))}{d} \right)^d\\
\le\;& \max_{d \ge 1} \left( ep(n-i+1) \right)^d.
\end{align*}
Now, if $ep(n-i+1) \le 1$, the maximum is attained for $d=1$, in which case we get a probability bound of
$
ep(n-i+1)
\cdot e^{(pn)^2/4}$ as claimed.
If $ep(n-i+1) > 1$ we trivially bound the sought probability by
\[
1 < ep(n-i+1) \le ep(n-i+1) \cdot e^{(pn)^2/4}.
\]
\end{proof}

\end{document}